\newtheorem{theorem}{Theorem}[section]
\newtheorem{corollary}[theorem]{Corollary}
\newtheorem{lemma}[theorem]{Lemma}
\theoremstyle{definition}
\newtheorem{definition}[theorem]{Definition}
\newtheorem{proposition}[theorem]{Proposition}
\title{A Capacity-Based Rationale for Multi-Head Attention}
\author{Micah Adler}
\date{}
\begin{document}

\maketitle

\begin{abstract}
We study the capacity of the self-attention key--query channel: for a fixed budget, how many distinct token--token relations can a single layer reliably encode? We introduce \emph{Relational Graph Recognition}, where the key--query channel encodes a directed graph and, given a context (a subset of the vertices), must recover the neighbors of each vertex in the context.  We measure resources by the total key dimension $D_K = h\,d_k$. In a tractable multi-head model, we prove matching information-theoretic lower bounds and upper bounds via explicit constructions showing that recovering a graph with $m'$ relations in $d_{\text{model}}$-dimensional embeddings requires $D_K$ to grow essentially as $m'/d_{\text{model}}$ up to logarithmic factors, and we obtain corresponding guarantees for scaled-softmax attention. This analysis yields a new, capacity-based rationale for multi-head attention: even in permutation graphs, where {\bf all queries attend to a single target}, splitting a fixed $D_K$ budget into multiple heads increases capacity by reducing interference from embedding superposition.  Controlled experiments mirror the theory, revealing sharp phase transitions at the predicted capacity, and the multi-head advantage persists when adding softmax normalization, value routing, and a full Transformer block trained with frozen GPT-2 embeddings.
\end{abstract}

\section{Introduction}

At the core of the Transformer architecture is self-attention, a mechanism that computes a similarity-weighted pattern of pairwise relationships among items in a context: queries match keys; the resulting scores route information via values~\cite{vaswani2017attention,santoro2017relation}.  We ask a basic question: for a fixed self-attention mechanism size, how many target relationships can a single attention layer represent and reliably recover?   We call this the layer's \emph{capacity}.  Capacity is foundational for several reasons.  (i) It imposes a hard ceiling on relational computation: beyond a threshold, no training procedure or dataset can make a layer recover all relations, much like rank bounds in linear models. (ii) It complements mechanistic work that isolates specific attention circuits in trained transformers \cite{clark2019what, vig2019analyzing, olsson2022induction, olah2025attentionqk}, by asking how many independent circuits can coexist and quantifying the interference that limits their coexistence-recently termed the 'Semantic Stability Gap' \cite{beton2026mind}.  (iii) It provides an actionable resource scaling law, describing how relationship capacity grows with increasing attention budget.

We also demonstrate that capacity impacts when increasing the number of heads is useful.  Multiple heads are often conceived as a way for a source concept to attend to multiple different targets \cite{vaswani2017attention}, but looking at self-attention through the lens of capacity shows that multiple heads are beneficial even in the simple case where each concept attends to only a single target.  Specifically, when compressed embeddings are used, many relations must be stored in overlapping subspaces; distributing the self-attention budget across many small heads reduces interference and increases the number of relations that can be cleanly separated—consistent with both pruning/specialization studies and expressivity results for attention \cite{voita2019analyzing, michel2019sixteen, cordonnier2020relationship}.

One might hope to answer capacity empirically by probing large trained models. In practice, this is ill‑posed. Modern transformers superpose many relationships in shared subspaces; heads are polyfunctional and context‑dependent, so the number of “active” relations is not directly observable. 
Moreover, attention weights need not align with causal importance \cite{jain2019attention}, and even sophisticated circuit‑tracing pipelines currently miss parts of the QK computation that determine \emph{where} a head attends \cite{olah2025attentionqk}. Beyond these methodological issues, superposition makes enumeration intrinsically hard: models can store more features than basis directions, packing multiple concepts into overlapping subspaces \cite{elhage2022superposition, bricken2023monosemanticity}. 
As a result, interpretability work thus far has not revealed how many relationships can be supported by a fixed attention budget.

We therefore introduce a \emph{framework}—\textbf{Relational Graph Recognition (RGR)}—and analyze an idealized self‑attention \emph{model} for solving RGR. The framework allows us to explicitly control both the structure and the number of attention relationships by casting self‑attention as recovering edges of a relational graph among \(m\) items, while the model preserves the computational constraints and symmetries of attention.  This allows predictions through principled analysis as well as controlled simulations that directly test those predictions. Our abstraction isolates the \emph{key–query} computation that determines \emph{where} a head attends, separating it from the OV pathway that determines \emph{what} is written—a split made explicit in recent mechanistic analyses of attention heads \cite{olah2025attentionqk}.  As a result, our attention budget is defined in terms of the \emph{total key dimension}  \(D_K = h\,d_k\), where \(h\) is the number of heads and \(d_k\) the per‑head key (and query) width.

\paragraph{Problem Formulation: Relational Graph Recognition (RGR)}
\label{sec:problem-statement}

To make “relationships” precise, we cast the core task of self-attention as a graph recovery problem.

\paragraph{Task.}
Let $G=(V,E)$ be a directed graph on $m=\lvert V\rvert$ items with $m'=\lvert E\rvert$ edges.  $G$ encodes the target function to be learned by an attention layer, with vocabulary $V$ and relationships between items $E$.  We call an example of that function a \emph{context}: an ordered subset of distinct vertices $\mathcal{C}=(v_{i_1},\dots,v_{i_\ell})$ with $1\le \ell \le m$.
Given a graph $G$, we wish to find a parameterization $\Theta(G)$ of an attention layer such that given any context $\mathcal{C}$, for each $v\in\mathcal{C}$, the attention layer computes its in-context neighbors $N_G(v;\mathcal{C}) = \{ v'\in\mathcal{C} : (v,v')\in E \}.$  Note that $G$ represents relationships to be learned and not the attention graph; our models always compare all pairs of items in the context.

\paragraph{Capacity question.}
Fix an input embedding dimension $d_{\text{model}}$.
For a graph family $\mathcal{G}_{m,m'} = \{\, G : \lvert V\rvert=m,\ \lvert E\rvert=m'\},$
we ask for the minimal total key dimension $D_K \;=\; h\,d_k$
such that the self-attention model in Section~\ref{models} can realize the RGR mapping for all $G\in\mathcal{G}_{m,m'}$ and all contexts $\mathcal{C}$.
We refer to this minimal $D_K$ as the \emph{capacity required} by $\mathcal{G}_{m,m'}$ at embedding dimension $d_{\text{model}}$.

\paragraph{Why this abstraction.}
RGR isolates the key–query channel that determines \emph{where} attention goes, while preserving the permutation symmetries and parameter sharing of self-attention\footnote{We also describe below how positional embeddings can be incorporated into the model.}.
It lets us dial graph complexity $(m,m')$ and the budget $D_K$ independently, enabling the results reported below.  Also, given our focus on a rationale for multi-head attention, it allows us to perform analysis and experiments on permutation graphs, where parallel attention to multiple items is not helpful, removing that factor in the advantage of multiple heads.
Exact mechanics (score computation and aggregation across heads) are specified in Sections~\ref{models} and~\ref{sec:softmax-sum}.

\paragraph{Summary of Results}
Our analysis yields both fundamental limits and constructive proofs of capability for self-attention as a relational reasoner, and our experiments validate these predictions in both the idealized model and more realistic extensions. The main contributions are:

\paragraph{Formal model and budget.}
We cast “where to attend” as \emph{Relational Graph Recognition (RGR)} and analyze two QK variants: the usual \emph{scaled-softmax} and \emph{max-over-heads} aggregation, a surrogate for head-wise competition that preserves a key nonlinearity absent from fully linear attention, but is also tractable enough to enable tighter and more extensive analysis (Sec.~\ref{models}).  In both cases, the complexity measure is the total key dimension \(D_K=h\,d_k\).

\paragraph{Information‑theoretic lower bound.}
We prove that recovering graphs with $m'$ edges on $m$ items requires a total key dimension of $D_K=\Omega\!\left(\tfrac{m'}{d_{\text{model}}}\log\tfrac{m^2}{m'}\right).$  This bound applies to both standard softmax and our max-over-heads variant, quantifying how capacity must grow with the number of relationships, and showing that a smaller model dimension requires a larger total key dimension (Sec.~\ref{sec:lower}).

\paragraph{Asymptotically Optimal Constructions.}
We provide explicit constructions for RGR. For our max-over-heads model, we achieve $D_K = O((\frac{m'}{d_{\text{model}}}+\Delta)\log m')$, where $\Delta$ is the maximum degree of $G$ (Sec.~\ref{sec:upper}). This is asymptotically optimal for sparse graphs with mild degree imbalance. For softmax attention on permutation graphs, we show  $D_K=\Theta(\frac{m}{d_{\text{model}}}(\log m)^2)$ is sufficient (Sec.~\ref{sec:softmax-sum}), where the extra $\log$ factor accounts for softmax concentration against distractors. These constructions surface the core computational principles of self‑attention and serve as concrete, testable hypotheses about the internal mechanisms transformers learn.

\paragraph{Capacity-Based Rationale for Multi-Head Attention.}
We show a multi-heads advantage even for permutation graphs (single target per query, and thus no need to attend to multiple targets in parallel - Sec.~\ref{sec:multiple}).  When $d_{\text{model}}\ll m$, the signals for different relationships are superposed, causing interference which grows with the number of relationships assigned to each head.  Thus, more smaller heads outperform fewer larger ones.  This noise-reduction mechanism appears under both the max-over-heads and softmax model variants, and provides a principled capacity-centric justification for multi-head attention as a method to reduce noise.  

\paragraph{Empirical Validation and Robustness to Model Extensions.}

We confirm our theoretical results in controlled single layer experiments:
\begin{itemize}
    \item {\bf Capacity:} We observe sharp phase transitions in performance as $D_K$ increases (Sec.~\ref{sec:experiments},\ref{sec:softmax-experiments},\ref{sec:full-transformer}, App.~\ref{app:exp-details},\ref{sec:dense}).
    \item {\bf Scaling:} The minimal budget $D_K^\star$ follows our predicted scaling for the max-over-heads variant: $D_K^\star \approx \frac{m'\log m}{d_{\text{model}}}$.  This trend holds for both permutation graphs and denser regular graphs (Sec.~\ref{sec:experiments}).
    \item {\bf Head Count:} The optimal number of heads in our max-over-heads variant scales linearly with $m/d_{\text{model}}$, as predicted theoretically (Sec.~\ref{sec:experiments}).
    \item {\bf Robustness:} These trends hold in progressively more realistic model variants: adding softmax (Sec.~\ref{sec:softmax-experiments}), adding an OV/value channel (Sec.~\ref{sec:value-experiments}), and even a {\bf full single-layer Transformer block trained with frozen GPT-2 embeddings}  on an induction-style retrieval task only requiring attention to a single target (Sec.~\ref{sec:full-transformer}).
\end{itemize}

Together, these findings give a quantitative and falsifiable picture of how key--query budget enables relational computation in attention: the required total key dimension scales primarily with the number of stored relationships (edges) relative to the embedding dimension, and distributing a fixed budget across multiple heads can be essential even when each query has a single correct target.  The alignment between lower bounds, constructive designs, and empirical thresholds persists across denser graphs, softmax normalization, value-routing objectives, and a full Transformer block with realistic frozen embeddings.  In addition, our findings are consistent with several widely reported behaviors of attention in LLMs: see Sec.~\ref{observations}.

\section{Related Work}
\label{related}

Given the breadth of prior work on attention, we defer an extended survey to Appendix~\ref{related-detail}, covering expressivity, language‑theoretic limits, connectivity, memorization, superposition, interpretability, and graph‑structured models.

Closest to our focus are works on \emph{memorization capacity} \cite{vardi2020memorization,kim2023provable,kajitsuka2025tight}, including analyses of memorization in attention modules \cite{mahdavi2024mha}. While aligned in spirit, the problem formulations are different: memorization typically maps each \emph{context} to a single output token/label, whereas our RGR setting asks for the recovery of in‑context neighbors for \emph{every} context from a set of possible tokens. Reductions between the two would require memorization handling a combinatorial number of contexts (polynomial in $m$ for fixed $\ell$, and exponential when $\ell$ scales with $m$), and we are not aware of efficient reductions that preserve guarantees in either direction. Accordingly, bounds in one setting do not directly imply bounds in the other. Not surprisingly, capacity results for memorization provide different scaling laws than ours.  Our abstraction isolates the key–query \emph{addressing} step—“where to attend”—which mechanistic analyses identify as central to head routing \cite{olah2025attentionqk}. In this sense, RGR complements parameter‑centric memorization settings that emphasize “what to output”: we target the capacity required to \emph{select} the correct neighbors across contexts.

Beyond memorization, prior theory characterizes \emph{what} Transformers can compute with sufficient resources—universality/approximation \cite{yun2019universality}, fine‑grained attention‑matrix expressivity \cite{likhosherstov2021expressive}, and structural bottlenecks such as per‑head low rank and rank collapse without mixing \cite{bhojanapalli2020lowrank,dong2021rankcollapse}. Language‑theoretic and composition results map limitations at fixed budgets \cite{hahn2020theoretical,peng2024limitations}; orthogonally, restricting connectivity rather than dimensions shows universality of $O(\ell)$‑sparse patterns and principled pruning of dense ones \cite{yun2020sparseconn,wang2022dense}; and algorithmic views analyze in‑context procedures \cite{li2023transformersalg}. Mechanistic studies find head specialization and prune‑ability \cite{clark2019what,michel2019sixteen}, while memory‑centric views link attention and FFNs to associative/key–value memories \cite{ramsauer2021hopfield,geva2020feedforwardmemory}.  
In recent blog-style reflections on associate memories, Zhong et al. \cite{zhong2025associative} demonstrate an exponential gap between linear and softmax attention due to interference. This contrasts with our polynomial gap, highlighting significant differences in problem formulation. They also empirically show that multi-head outperforms single-head attention, and even hypothesize (but do not quantify) this to be due to interference.  They do not experiment with single-target scenarios, leaving open the potential role of parallel attention.
\subsection{Our findings vs. observations in trained LLMs}
\label{observations}

Our findings are consistent with several widely reported behaviors of attention in LLMs.  Specifically:

    \paragraph{Why so many heads?}  LLM hyperparameter optimization leads to many heads with modest $d_k$ (e.g., GPT-3 175B: 96 heads \cite{brown2020language}).  Instead of explaining this through needing to pay attention to dozens of targets simultaneously, and then somehow making use of all the varied information provided by these targets, our theory predicts that increasing head count keeps per head dimension small, thereby decreasing interference.

    \paragraph{Similar effects in pre-trained LLMs.}  A similar head-count tradeoff and impact of competitive aggregation as we theorize has been observed empirically in GPT-2 training \cite{zhong2025associative}: when model width is held fixed, collapsing to fewer heads (down to 1) improves a softmax-free \emph{linear-kernel} attention variant, but \emph{degrades} standard softmax attention.

    \paragraph{Pruning, specialization and redundancy.}  Many heads can be pruned with little loss, a small subset is specialized and important, and substantial head redundancy has been found \cite{voita2019analyzing,michel2019sixteen,attentionredundancy2021}; this matches an \emph{above-capacity} regime where only a fraction of potential relations are heavily used, and is consistent with pressure to increase heads coming from a need to keep per head dimension small, instead of paying parallel attention to many targets.  Redundancy of high-value relations across heads can act as noise-reduction, consistent with our theory of interference.

    \paragraph{Softmax vs.\ linear attention.}  Linear attention variants often underperform softmax \cite{reconsideringsoftmaxlinear2024}; our analysis predicts this because removing competitive gating makes multi-head collapse to an effectively single $D_K$-wide linear map, increasing interference.

    \paragraph{MQA/GQA as a capacity knob.}  Sharing KV heads (MQA) trades quality for speed \cite{shazeer2019mqa}, while restoring multiple KV groups (GQA) recovers quality \cite{ainslie2023gqa} and is used in deployed models (e.g., Llama~3 \cite{meta2024llama3}); this is consistent with reducing/restoring \emph{effective} key capacity.

Taken together, these observations span \emph{architecture choices} (many small heads; MQA/GQA), \emph{ablation behavior} (pruning/specialization/head merging), \emph{representational structure} (redundancy), and \emph{algorithmic variants} (softmax vs.\ linear).  This provides empirical support that the abstractions studied in RGR capture constraints that trained LLMs already operate under.

\section{Modeling the Self-Attention Mechanism}
\label{models}

We model the \emph{key--query (QK)} computation of a single self-attention layer for RGR, retaining
permutation symmetry and parameter sharing while omitting the OV pathway (justified below). The input is an ordered context of distinct vertices of length $\ell\!\le\!m$, $\mathcal{C}=(v_{i_1},\dots,v_{i_\ell})$, one vertex per attention unit. Each $v\in V$ is described
by a unique embedding $\mathbf{x}_v\in\mathbb{R}^{d_{\text{model}}}$. Positional information is not explicitly modeled; if needed, positions can be incorporated by treating $(\text{token},\text{position})$ as distinct vertices.

We here describe the \emph{max-over-heads} variant of our model; its simplicity lends itself well to more extensive and precise analysis.
In Sec.~\ref{sec:softmax-sum} we describe and analyze the more standard version, in which each head applies a softmax over items in $\mathcal{C}$ and the resulting per-head probabilities are summed across heads. In both variants, the benefit of multiple heads arises from noise reduction.

\paragraph{Single head.}
Each attention unit with one head uses the same shared projection matrices
$W_Q,W_K\in\mathbb{R}^{d_{\text{model}}\times d_k}$. For each $v_{i_p}\!\in\!\mathcal{C}$,
$
\mathbf{q}_{i_p}=\mathbf{x}_{i_p}W_Q,%\qquad
\mathbf{k}_{i_p}=\mathbf{x}_{i_p}W_K.
$
The unnormalized score from source $v_{i_p}$ to target $v_{i_q}$ is
$
S_{pq}=\mathbf{q}_{i_p}\cdot\mathbf{k}_{i_q}^{\top}.
$
We declare an edge $(v_{i_p},v_{i_q})$ present iff $S_{pq}>\tau$ for a global threshold $\tau$.
Only pairs inside $\mathcal{C}$ are tested.

\paragraph{Multi-head, max-over-heads version.}
With $h$ heads, each head $k$ has $(W_Q^{(k)},W_K^{(k)})\in\mathbb{R}^{d_{\text{model}}\times d_k}$ and
produces $S_{pq}^{(k)}$. We aggregate by
$S_{pq}^{\max}=\max_{k\in\{1,\dots,h\}} S_{pq}^{(k)},$
and decide $(v_{i_p},v_{i_q})\in E\ \Leftrightarrow\ S_{pq}^{\max}>\tau.$

\paragraph{Remark: why max-over-heads (and why this is not a linear model).}
The $\max$ aggregation is a deliberate minimal nonlinearity that preserves the core ``competitive routing'' behavior of attention while making analysis tractable.
It prevents non-target heads from contributing additively to the decision for a given relation, which is a crucial function of softmax as well.  If one aggregates scores \emph{fully linearly}, then multi-heads are equivalent to having a single head (by concatenating the per-head projection matrices \cite{cordonnier2020collaborate}).\footnote{Viewed through an associative-memory lens, purely linear aggregation accumulates cross-term interference, yielding retrieval noise that scales with the number of stored associations relative to key dimension \cite{zhong2025associative}.}  Thus, the multi-head advantage requires a nonlinearity to suppress cross-head interference.
'Max-over-heads' is the cleanest theoretical abstraction of this suppression, and we also replicate key phenomena with standard scaled-softmax (Sec.~\ref{sec:softmax-sum}, Sec.~\ref{sec:softmax-experiments}), to show our conclusions are not an artifact of this simpler model.

\paragraph{Algorithmic objective and budget.}
A \emph{construction for RGR} maps a graph $G=(V,E)$ to weights
$\{(W_Q^{(k)},W_K^{(k)})\}_{k=1}^h$ and a threshold $\tau$ that realize the correct edge decisions for
\emph{all} contexts $\mathcal{C}$, regardless of length $\ell$. We measure complexity by the
\emph{total key dimension}
$
D_K \;=\; h\,d_k,
$
since both compute and parameter footprint for the QK channel scale with the width of the
concatenated projections (see Appendix~\ref{sec:model-details}).\footnote{All statements remain (up to
a factor of two) with $D_Q+D_K$.}
Our goal is to minimize $D_K$ over a graph family $\mathcal{G}_{m,m'}$ for a given $d_{\text{model}}$.

\paragraph{Analyzing the QK channel in isolation.}
Since RGR asks \emph{where} a source should connect, the key--query computation is the gating step: a correct edge can dominate only if the QK channel already separates the true neighbor from in-context distractors. The OV pathway then acts downstream of this routing decision—reweighting and propagating what QK has selected—so OV can amplify signal but cannot reliably fix systematic mis-routing.\footnote{Our model is scoped to a single self-attention layer; multi-layer iterative routing is outside our abstraction.}
This motivates omitting the OV channel in our model and treating \(D_K\) as the relevant budget for capacity.
This separation between QK (“where”) and OV (“what”) is also supported by recent mechanistic analyses of attention heads \cite{olah2025attentionqk,crovella2025}.
Finally, our experiments show that the conclusions from this abstraction are robust: when we add back a value channel and when we move to a full Transformer block with realistic frozen embeddings, the same core phenomena predicted by the QK analysis continue to govern performance (Sec.~\ref{sec:value-experiments}, Sec.~\ref{sec:full-transformer}).

\section{Explicit Constructions for RGR}
\label{sec:upper}

We give QK constructions in the max-over-heads model of Section~\ref{models}, yielding upper bounds on the key budget $D_K=h\,d_k$ to solve RGR.  This provides a concrete measure of the self-attention mechanism's capacity for this task.  We first sketch the one-hot embedding, permutation graph case (details in App.~\ref{sec:upper-details}), then present our main construction for compressive embeddings, which achieves $D_K=\Theta\!\big(\tfrac{m\log m}{d_{\text{model}}}\big)$ for permutation graphs. In App.~\ref{sec:upper-details}, we show how to generalize these results to arbitrary graphs and general embeddings in the max-over-heads model, and in Sec.~\ref{sec:softmax-sum} to permutation graphs in the softmax model.  Throughout, $i$ indexes the \emph{source} and $j$ the \emph{target}; keys are tied to targets and queries are tied to sources.

\paragraph{Construction I: one-hot permutation graphs.}
Let $G$ be a permutation graph on $m$ items with edges $(i,\pi(i))$, and let $\mathbf{x}_i=\mathbf{e}_i\in\mathbb{R}^m$ (so $d_{\text{model}}=m$). Draw signatures $W_{\mathrm{sig}}\in\{\pm 1\}^{m\times d_k}$ with i.i.d.\ Rademacher entries and let $\mathbf{w}_j$ be the $j$-th row. Use one head ($h=1$) and set $\mathbf{k}_j=\mathbf{w}_j$ and $\mathbf{q}_i=\mathbf{w}_{\pi(i)}$. Then $S_{i,\pi(i)}=\langle \mathbf{w}_{\pi(i)},\mathbf{w}_{\pi(i)}\rangle=d_k$, while for $j\neq \pi(i)$, Bernstein's inequality shows $S_{ij}=\langle \mathbf{w}_{\pi(i)},\mathbf{w}_j\rangle=O(\sqrt{d_k})$ w.h.p. Hence choosing $d_k=\Theta(\log m)$ and threshold $\tau=d_k/2$ yields simultaneous separation over all pairs $(i,j)$ by a union bound, so a single head recovers all edges w.h.p. Details are in App.~\ref{sec:upper-details}.  The following shows this implies correctness over all contexts:
\paragraph{Monotonicity under context restriction.}
If $S^{\max}_{i,\pi(i)}>\tau$ and $S^{\max}_{ij}<\tau$ for all $j\neq \pi(i)$ over the full vertex set $V$, then the same inequalities hold for any context $\mathcal{C}\subseteq V$ since restricting from $V$ to $\mathcal{C}$ only removes distractor targets.

\paragraph{Construction II: Permutations Under Compressive Embeddings}
%\label{subsec:compressive}

We now extend the permutation case to the compressive regime $d_{\text{model}}\ll m$ under a \emph{Gaussian unit‑norm} embedding:\footnote{Random spherical codes let us obtain clean dot‑product thresholds in our separation arguments, but we generalize this to arbitrary embeddings in App.~\ref{sec:upper-details}. This cosine‑geometry is also standard and effective in practice: many systems explicitly constrain features to a hypersphere (e.g., NormFace and ArcFace in face recognition; Spherical Text Embedding in NLP; spherical objectives in metric learning). See also \cite{wang2017normface,deng2019arcface,meng2019ste,zhang2020spherical}.}
Each item $v_i$ is embedded as a fixed vector $\mathbf{x}_i\in\mathbb{R}^{d_{\text{model}}}$ drawn i.i.d.\ as $\tilde{\mathbf{x}}_i\sim\mathcal N(0,I/d_{\text{model}})$ and then $L_2$‑normalized, i.e., $\mathbf{x}_i=\tilde{\mathbf{x}}_i/\|\tilde{\mathbf{x}}_i\|_2$.
Write $X\in\mathbb{R}^{m\times d_{\text{model}}}$ for the matrix with $i$‑th row $\mathbf{x}_i^\top$. Given such an embedding and permutation $\pi$, our goal is to construct attention parameters that recognize $G$.

\paragraph{\bf Multi‑Head Algorithmic Construction.}
The fundamental challenge with embeddings is that the input $\mathbf{x}_i$ is a superposed representation of the node's identity. Our construction first approximately inverts the embedding process, projecting the $d_{\text{model}}$-dimensional vector $\mathbf{x}_i$ back into the $m$-dimensional one-hot space using the transpose of the embedding matrix. We then apply the logic from the one-hot case.  However, doing this with a single head yields too much noise due to the inversion being only approximate.  We mitigate this noise by using multiple attention heads, where each is responsible for recognizing the outgoing edges from a disjoint subset of sources.  This results in smaller individual heads, and thus less noise.  For simplicity, we assume $d_{\text{model}}\mid m$ so that $h=m/d_{\text{model}}$; all bounds and proofs extend to the more general case.  The proof of the following theorem appears in Appendix~\ref{sec:upper-details}, where we also show how to extend these results to more general embeddings and graphs.

\begin{algorithm}[ht]
\caption{Permutation Graphs with $d_{\mathrm{model}} < m$}
\label{alg:compressive_construction}
\begin{algorithmic}[1]
\STATE \textbf{Input:} Permutation graph $G=(V,E)$ with $\pi:V\to V$; embedding matrix $X\in\mathbb{R}^{m\times d_{\text{model}}}$.

\STATE \textbf{Parameters:} $h=\frac{m}{d_{\text{model}}}$; $d_k=C\log m$ for sufficiently large constant $C$. 

\STATE \textbf{Set Threshold:} $\tau = \frac{1}{2}d_k$.

\STATE \textbf{Partition sources and targets.}
Split $V$ into $h$ disjoint blocks $V_1,\dots,V_h$ of size $|V_k|=d_{\text{model}}$. For each head $k$, define its target set $T_k := \pi(V_k)=\{\pi(s):s\in V_k\}$. $\pi$ is a bijection, so $\{T_k\}_{k=1}^h$ partition $V$.
Head $k$ is responsible for sources in $V_k$ and targets in $T_k$.

\STATE \textbf{Random signatures:} Draw $W_{\mathrm{sig}}\in\{\pm1\}^{m\times d_k}$ with i.i.d.\ Rademacher entries; let $\mathbf{w}_j$ be its $j$‑th row.

\STATE \textbf{Ideal one‑hot‑space templates (for each head $k$):}
\STATE \hspace{\algorithmicindent} 
$\textbf{Query Matrix:}\; W'_{Q,(k)}\in\mathbb{R}^{m\times d_k}$ with row $i$ equal to $\mathbf{w}_{\pi(i)}$ if $i\in V_k$, and $\mathbf{0}$ otherwise.
\STATE \hspace{\algorithmicindent} 
$\textbf{Key Matrix:}\; W'_{K,(k)}\in\mathbb{R}^{m\times d_k}$ with row $j$ equal to $\mathbf{w}_{j}$ if $j\in T_k$, and $\mathbf{0}$ otherwise.
\STATE \textbf{Project back to model space (approximate de‑embedding):}
\[
W^{(k)}_Q \;=\; X^\top W'_{Q,(k)},\qquad
W^{(k)}_K \;=\; X^\top W'_{K,(k)}.
\]
\end{algorithmic}
\end{algorithm}

\begin{theorem}[Multi-head recognition under Gaussian unit-norm embeddings, max-over-heads]
\label{thm:gue}
Assume Gaussian unit-norm embeddings with $d_{\mathrm{model}}\ge c_0\log m$ for a sufficiently large absolute constant $c_0$.
Let $h=\frac{m}{d_{\mathrm{model}}}$ heads, per-head dimension $d_k=C\log m$ for a sufficiently large absolute constant $C$, and threshold $\tau=\tfrac12 d_k$.
Construct $\{(W_Q^{(k)},W_K^{(k)})\}_{k=1}^h$ as in Algorithm \ref{alg:compressive_construction} and let $k(i)$ denote the unique head index such that $i\in V_{k(i)}$.
Then with probability at least $1-m^{-3}$ over the draw of $(X,W_{\mathrm{sig}})$, simultaneously for all $i\in V$:
\[
S^{(k(i))}_{i,\pi(i)}>\tau
\qquad\text{and}\qquad
\max_{k\in[h]}\max_{j\neq \pi(i)} S^{(k)}_{ij}<\tau.
\]
Consequently,
$
\forall j\neq \pi(i), S^{\max}_{i,\pi(i)}>\tau > S^{\max}_{ij},
$
so max-over-heads recovers all edges, and the total key budget satisfies
$
D_K=h\,d_k=\Theta\!\Big(\frac{m\log m}{d_{\mathrm{model}}}\Big).
$
\end{theorem}

\paragraph{Consequence.}
By Theorem~\ref{thm:gue} we have global separation under max-over-heads, and by monotonicity under context restriction the same parameters recognize $E|_{\mathcal{C}}$ for every context $\mathcal{C}\subseteq V$ and every context length. The total key budget matches our lower bound for permutation graphs up to constant factors.  Analogous separation-based arguments translate to softmax; see Sec.~\ref{sec:softmax-sum}.

\section{The power of multiple heads}
\label{sec:multiple}

With no compression (Construction~I), a single head suffices: queries and keys can coincide exactly on true edges and be nearly orthogonal otherwise, yielding true‑edge scores \(\Theta(d_k)\) and non‑edge scores concentrated near \(0\). In the compressive setting (Construction~II), we first approximately de‑embed
$\mathbf{u}_i := \mathbf{x}_i X^\top = \mathbf{e}_i + \boldsymbol{\delta}_i,$
so each source carries a small \emph{leakage} vector \(\boldsymbol{\delta}_i\) that spreads mass across many coordinates. With Rademacher signatures (see §\ref{sec:upper}) the head‑\(k\) score decomposes into a signal term—\(\Theta(d_k)\) for true edges and concentrated near 0 for non‑edges—and a noise term controlled by the leakage. The dominant component of this noise, denoted \(N_3\) in Appendix \ref{sec:upper-details}, scales with the \emph{block size} \(B:=|T_k|\) served by a head.  Intuitively, if the block size is too large, there is too much noise, and so multiple heads are required to keep the block size small.
\begin{equation}
N_3(B) \;\asymp\; \frac{B}{d_{\text{model}}}\,\sqrt{d_k \log m}.
\label{eq:N3}
\end{equation}
To guarantee (w.h.p.) a fixed margin between the true target and all non‑targets, it must be that, for constants $c_1,c_2>0$,
\begin{equation}
N_3(B) \;\le\; c_1d_k
\quad\Longrightarrow\quad
d_k \;\ge\; c_2\,\frac{B^2}{d_{\text{model}}^2}\,\log m.
\label{eq:dk-lb}
\end{equation}
\textbf{Single head with compression.}
If one head serves all items, then \(B=m\) and \(D_K=d_k\). Applying \eqref{eq:dk-lb} implies
\[
D_K = \Omega\left(\frac{m^2}{d_{\text{model}}^2}\,\log m\right).
\]
\textbf{Multiple heads with compression.}
Construction~II partitions the items into \(\frac{m}{d_{\text{model}}}\) heads with \(B=d_{\text{model}}\) per head. Plugging \(B=d_{\text{model}}\) into \eqref{eq:N3} yields \(N_3(B)=\Theta(\sqrt{d_k \log m})\). Taking \(d_k=c_3\log m\) with \(c_3\) larger than the constant in \eqref{eq:N3} ensures \(N_3\le c_1d_k\) w.h.p., and the total key dimension is
\[
D_K \;=\; h\,d_k \;=\; O\!\Big(\frac{m}{d_{\text{model}}}\log m\Big).
\]
\textbf{Consequence.}
The additional $\frac{m}{d_{\text{model}}}$ term for a single head implies that if $m = \omega(d_{\text{model}})$ (the compressive regime), the single head requirement above implies asymptotically larger $D_k$ than the multihead construction.  Or, equivalently, for a fixed $D_k$ budget, multiple heads can handle more edges (relationships) than a single head, even in a permutation graph.  Multiple heads do not boost per‑head expressivity; they \emph{localize} de‑embedding noise by reducing block size $B$, so that each head aggregates leakage over fewer coordinates, bringing the noise to a manageable level.  Note that this is not a lower bound for \emph{all} conceivable single‑head designs, but it shows that within the de‑embedding to signature template we use, a single head cannot perform as well as multiple heads.

\section{Experiments}
\label{sec:experiments}

We conduct experiments in a setting mirroring our theoretical model, to test several predictions. First, we compare the empirical minimum total key dimension, $\hat D_K^\star$, to the predicted theoretical scaling law of $\Theta\left(\frac{m\log m}{d_{\text{model}}}\right)$, noting that optimization may fail to find a solution matching the theoretical constructive bound. And second, we test predictions regarding head count: whether permutation graph exhibit a multi-head advantage and how the empirically optimal number of heads tracks the theory.  Permutation graphs ensure each query has at most one correct in-context target, so improvements from multiple heads cannot be attributed to attending to multiple true neighbors.

\paragraph{Experimental implementation}
We empirically instantiate the idealized attention layer of our framework with two learned projections $W_Q,W_K\in\mathbb{R}^{d_{\text{model}}\times D_K}$ partitioned into $h$ heads ($d_k=D_K/h$). For a context matrix $X_{\mathcal C}$, head $k$ computes $S^{(k)}=Q^{(k)}(K^{(k)})^\top$ with $Q^{(k)}=X_{\mathcal C}W_Q^{(k)}$ and $K^{(k)}=X_{\mathcal C}W_K^{(k)}$; scores are combined by an elementwise max $S_{\max}=\max_k S^{(k)}$, and we predict an edge $(p\!\to\!q)$ iff $S_{\max}(p,q)>\tau$ for a single learned global threshold $\tau$. There is no $1/\sqrt{d_k}$ scaling, softmax, or value pathway, so capacity is purely key–query. Tasks are permutation graphs on $m$ items (one out/in-edge per node). Node embeddings $x_i\!\sim\!\mathcal N(0,I/d_{\text{model}})$ are $L_2$‑normalized and frozen, making $D_K$ the sole capacity knob. Contexts of length $\ell$ (default $\ell{=}16$) are sampled with target‑in‑context rate $\rho{=}0.5$.

We train $W_Q,W_K,\tau$ with AdamW (lr $10^{-3}$, weight decay $0$) using a weighted logistic loss over all ordered pairs within a context (positive weight $\ell{-}1$; logit sharpness $\alpha{=}10$), one context per step. For each run, a single permutation $\pi$ and embedding matrix are fixed by seed; training contexts are drawn on‑the‑fly, with 500 validation and 2{,}000 held‑out test contexts from the same $(\ell,\rho)$ distribution. Early stopping checks validation micro‑F1 every 500 steps and halts after five consecutive checks above $0.995$. We report micro‑F1 on the fixed test set with the single learned $\tau$; the “minimum $D_K$” is the smallest $D_K$ achieving mean test micro‑F1 $\ge 0.99$ for at least one head count $h$. Full details appear in App.~\ref{app:exp-details}.

\subsection{Results and comparison to theory}
\label{sec:results}\label{sec:fit-to-theory}

We probe capacity on permutation graphs with $m\!\in\!\{64,128,256,512\}$ and $d_{\text{model}}\!\in\!\{16,32,64\}$. 
For each $(m,d_{\text{model}})$ we sweep head counts $h\!\in\!\{1,2,4,8,16,32,64\}$ and several total key sizes $D_K{=}h\,d_k$ (multiple $D_K$ per $h$). 
Each configuration is trained from $10$ seeds with the protocol described above (AdamW, fixed embeddings, single global threshold $\tau$). 
We evaluate average test micro–F1 on a fixed held‑out set and define the empirical threshold
$
D_K^\star \;=\; \min\{\, D_K \;:\; \exists h\ \text{s.t. mean test micro‑F1} \ge 0.99 \,\}.
$
We denote by $h^\star$ a head count that attains $D_K^\star$. 
Full grids and per‑config step limits are in App.~\ref{app:exp-details-results}.

To isolate sequence‑length effects at fixed embedding compression, we also traverse the diagonal $r\!\stackrel{\text{def}}{=}\!m/d_{\text{model}}{=}8$ with 
$(m,d_{\text{model}})\in\{(128,16),(256,32),(512,64),$ $(1024,128),(2048,256),(4096,512)\}$, using $3$ seeds for the largest points and increased budgets (App.~\ref{app:exp-details-results}). 
Finally, to probe extreme compression we include a second $r{=}32$ point $(1024,32)$ (in addition to $(512,16)$).

\begin{figure}[ht!] 
    \centering     
     \includegraphics[width=0.75\textwidth]{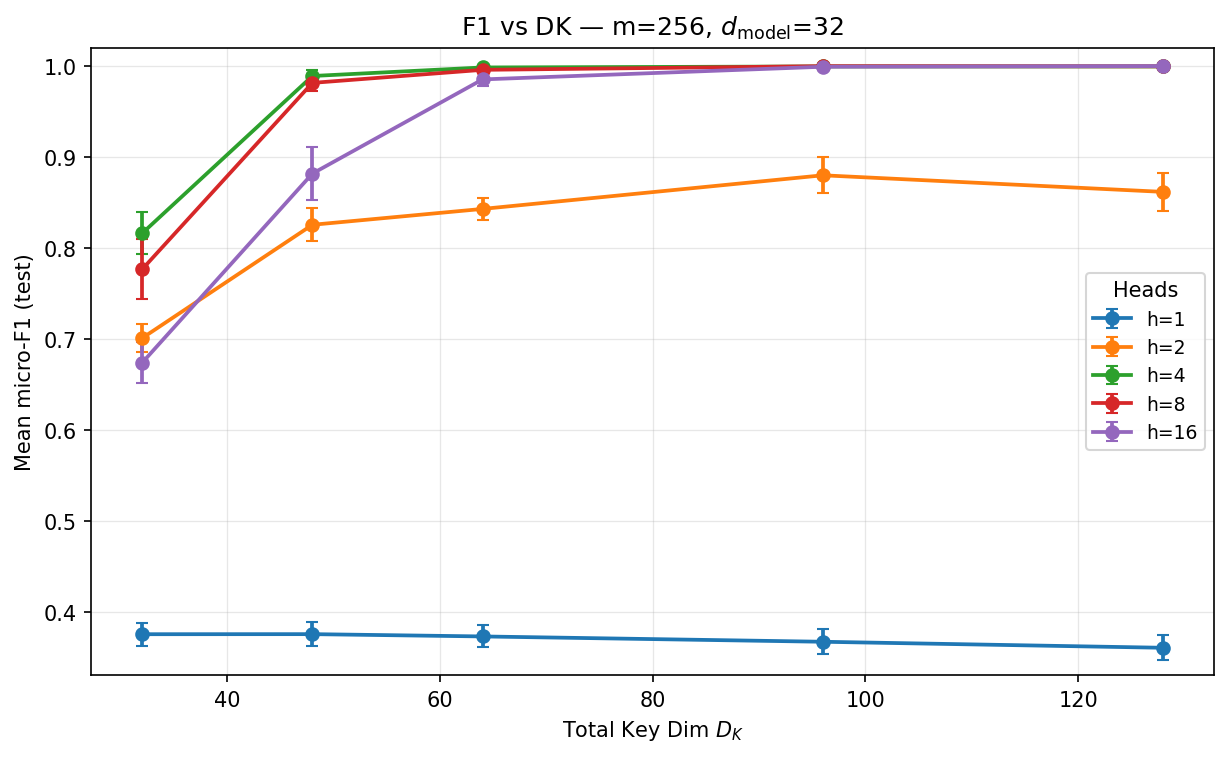}
          \caption{\textbf{Example F1–$D_K$ curves.}  Lines are a fixed number of heads for $m=256, d_{\text{model}}=32$.  A single head fails to separate the signal from superposition noise, and performs much worse than multiple heads.  Error bars are 95\% CIs over 10 runs.}
        \label{fig:m256_d32}
\end{figure}

\begin{figure}[ht!] 
    \centering     
        \includegraphics[width=0.55\textwidth]{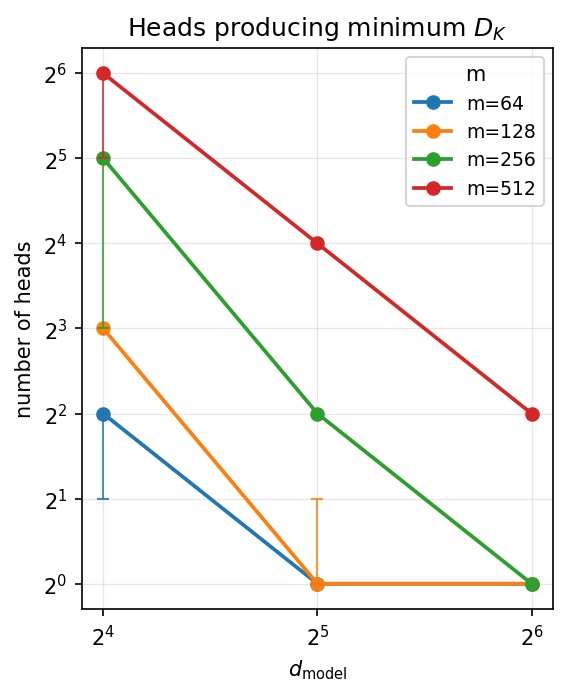}
    \caption{\textbf{$\mathbf{h^*}$ minimizing $\mathbf{D_K}$.} More heads are needed as $m$ grows or $d_{\text{model}}$ shrinks. See App.~\ref{app:exp-details-results} for error bar description.}
    \label{fig:heads-min}
\end{figure}

\paragraph{Qualitative phenomena.} 
 We observe: 
(i) a \emph{sharp} F1 transition in a narrow $D_K$ window (capacity threshold) across all $(m,d_{\text{model}},h)$ (Fig.~\ref{fig:m256_d32} in this section and Fig.~\ref{fig:example-curves} in App.~\ref{app:exp-details-results}); 
(ii) a pronounced \emph{multi‑head advantage} for many $(m,d_{\text{model}})$,
even though each query has a single target—splitting a fixed $D_K$ across more heads reduces interference from superposition (Fig.~\ref{fig:heads-min}); 
and (iii) the \emph{optimal} head count increases with compression $\lambda{=}m/d_{\text{model}}$ (Fig.~\ref{fig:heads-min}), while per‑head width at the threshold is modest.

\paragraph{Empirical thresholds on the base grid.}
The minimum $D_K^\star$ grows rapidly with $m$ and decreases rapidly with $d_{\text{model}}$; exact values appear in Figure~\ref{fig:dk-min} (App.~\ref{app:exp-details-results}). 
A single head often fails to reach $0.99$ F1 within the scanned $D_K$ (e.g., $(m,d_{\text{model}})\!\in\!\{(512,64),(256,32)\}$, Fig.~\ref{fig:example-curves}), whereas several small heads pass at substantially smaller $D_K$.

\paragraph{Scaling laws}
Plotting $D_K^\star$ against $\frac{m\log m}{d_{\text{model}}}$ yields a tight linear relation (Fig.~\ref{fig:cap-scaling}):
\[
D_K^\star \;\approx\; \mathbf{1.19}\cdot\frac{m\log m}{d_{\text{model}}}\qquad (R^2=\mathbf{0.944}).
\]
We see small deviations when $d_{\text{model}}$ is too small relative to $\log m$; this is consistent with our theoretical results.  Excluding the three $(d_{\text{model}}{=}16,\, m{>}64)$ points (above the line in Fig~\ref{fig:cap-scaling})—which violate the precondition $d_{\text{model}}\!\gtrsim\!c_0\log m$ used by our constructions—gives slope $0.966$ with $R^2{=}\mathbf{0.992}$. 
Thus, the empirical capacity closely matches the theoretical $\Theta\!\big(\tfrac{m\log m}{d_{\text{model}}}\big)$ rate.
%\paragraph{Head count and per‑head width at the threshold.}
The head count that attains $D_K^\star$ scales approximately linearly with compression (Fig.~\ref{fig:heads-second} in the Appendix):
\[
h^\star \;\approx\; \mathbf{1.65}\,\frac{m}{d_{\text{model}}}\;-\;\mathbf{6.64}\qquad (R^2=\mathbf{0.824}).
\]
At $D_K^\star$, per‑head widths are small: $d_k^\star\!\in\![5,24]$ on the base grid (median $11$), indicating gains come from \emph{adding heads} rather than making each head wide (Table~\ref{tab:per-head-min}; App.~\ref{app:exp-details-results}).

\begin{figure}[ht!] 
    \centering     
     \includegraphics[width=0.65\textwidth]{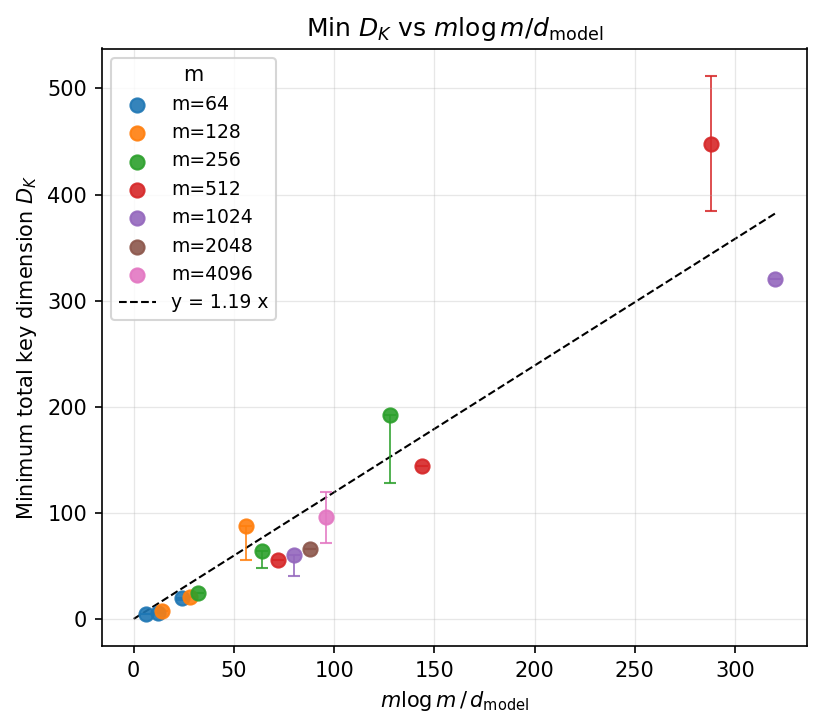}
    \caption{{\bf Empirical Validation of Capacity Law.}  The minimum key dimension $D_K^\star$ tracks the theoretical prediction $\frac{m \log m}{d_{\text{model}}}$ tightly ($R^2=0.944$).  See App.~\ref{app:exp-details-results} for error bar description.}
     \label{fig:cap-scaling}
     \end{figure}

 \begin{figure}[ht!] 
    \centering         
        \includegraphics[width=0.65\textwidth]{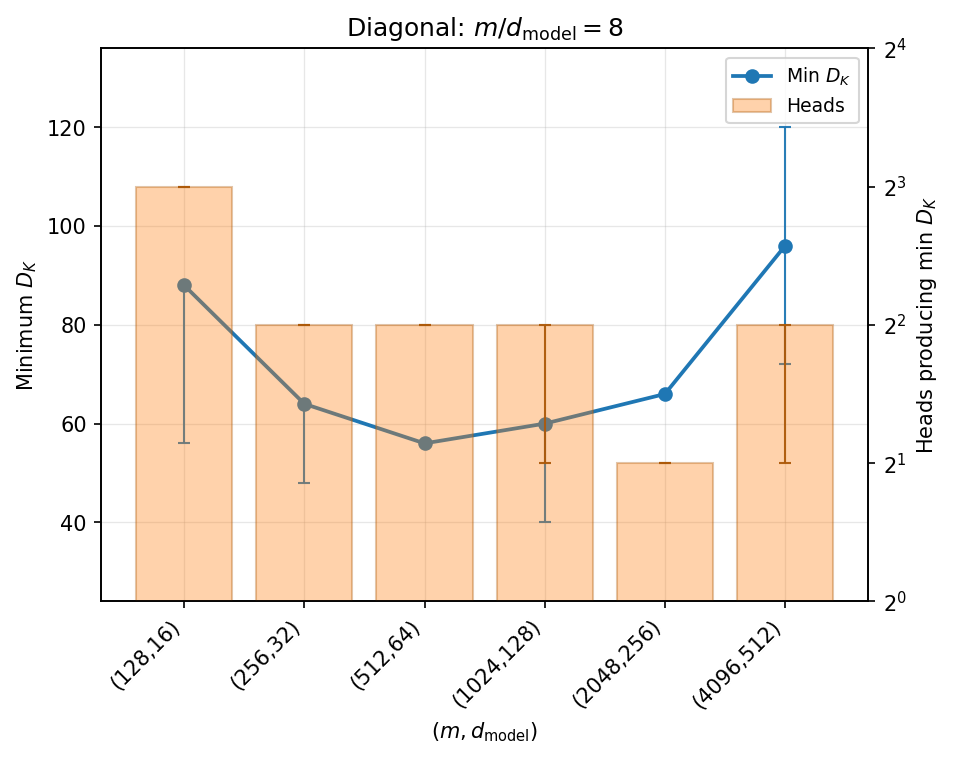}
    \caption{{\bf Empirical validation of optimal headcount.}  Fixed compression diagonal with $\lambda=8$.  Line (left axis): minimum $D_K^\star$ achieving F1$\ge.99$.
  Bars (right axis): $h$ achieving that minimum. See App.~\ref{app:exp-details-results} for error bar description.}
  \label{fig:diag-min-dk}
\end{figure}

\paragraph{Fixed‑compression diagonal ($\lambda{=}8$).}
Holding $\lambda$ constant collapses the prediction to $D_K^\star \propto \lambda \log m$, so the dependence on $m$ should be logarithmic. 
Along $(128,16)\!\to\! (4096,512)$ we observe roughly this behavior from $m{\ge}512$ onward (Fig.~\ref{fig:diag-min-dk}): $D_K^\star$ grows slowly while $m$ grows exponentially, matching the $\log m$ factor. 
The first two points are slightly conservative (smaller $d_{\text{model}}$) and align with the same $d_{\text{model}}\!\gtrsim\!\log m$ finite‑size effect. 
We also expect the optimal head count to be proportional to $\lambda$; the observed results align well with this expectation.

The Appendix describes further experimental results.  For denser (regular) graphs, we show that $D_K^\star$ and optimal head count scale as predicted by the theory: with the number of edges, demonstrating that edges not vertices define the constraint on capacity (App.~\ref{sec:dense}).

\paragraph{Takeaways.}
Empirical thresholds align closely with the \emph{$m\log m/d_{\text{model}}$} capacity law and expose a clear \emph{multi‑head advantage} even for one‑target graphs. 
Discrepancies appear exactly where theory anticipates stronger superposition (small $d_{\text{model}}$ and very large $m$). 
Overall, allocating key–query budget across \emph{more heads with modest width} is the efficient path to capacity in compressed embeddings.

\section{Robustness to Model Extensions}

We here stress-test our findings from the previous section in progressively more realistic model variants, and see that even with these extensions, the same core phenomena predicted by the simpler analysis continue to govern performance. (1) We incorporate scaled-softmax (without an OV channel) and demonstrate that the multi-head advantage remains.  This model version preserves the sharp transition but shifts the required \(D_K\) to the right, consistent with the extra log factor in our softmax construction. (2) We add a value channel and train on message retrieval; this also yields a multi-head advantage, showing the effect is not an artifact of thresholded edge classification.  (3) In a full single-layer Transformer block trained with frozen GPT-2 embeddings on an induction-style retrieval task that only requires attention to a single location, we again see a clear multi-head advantage.

\subsection{Experiments incorporating softmax}
\label{sec:softmax-experiments}

We also conducted experiments with the softmax version of the idealized model.  These experiments mirror our experiments in the base model, with only the following changes:

\begin{itemize}
    \item Scores are subject to a softmax along the context dimension.
    \item Scores are scaled by $\sqrt{d_k}$.
    \item Scores are aggregated over heads via a sum (instead of max).
    \item Training was allowed to run for up to 80,000 steps.
\end{itemize}

We depict several examples of the resulting F1-$D_K$ curves in Figure \ref{fig:softmax-curves}.  As in the model with max over heads (and no softmax), we here see a distinct multi-head advantage when in the compressed embedding range of $m \gg d_{\text{model}}$.  We also see that the required $D_K$ to reach an F1 of 0.99 is shifted to the right from the experiments run in our core model - consistent with the additional $\log m$ factor in our constructions.

\begin{figure}[!th]
  \centering
  \includegraphics[width=.49\linewidth]{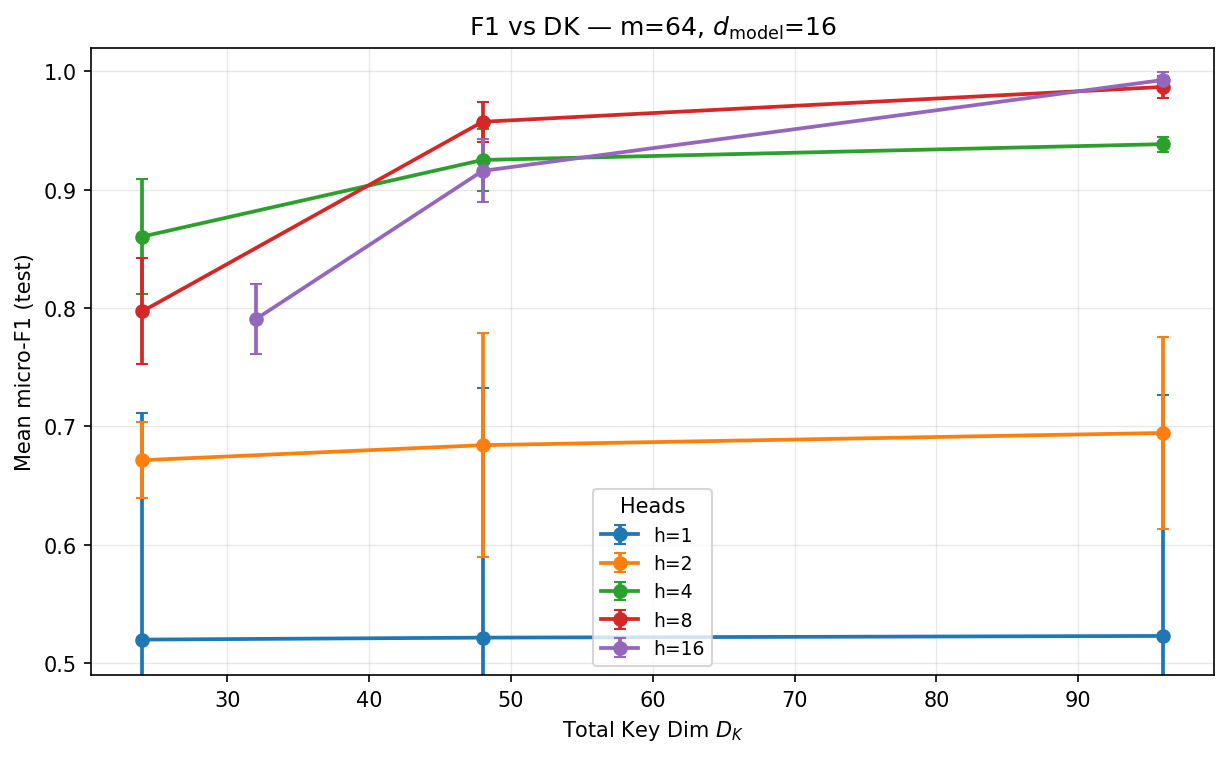}
  \includegraphics[width=.49\linewidth]{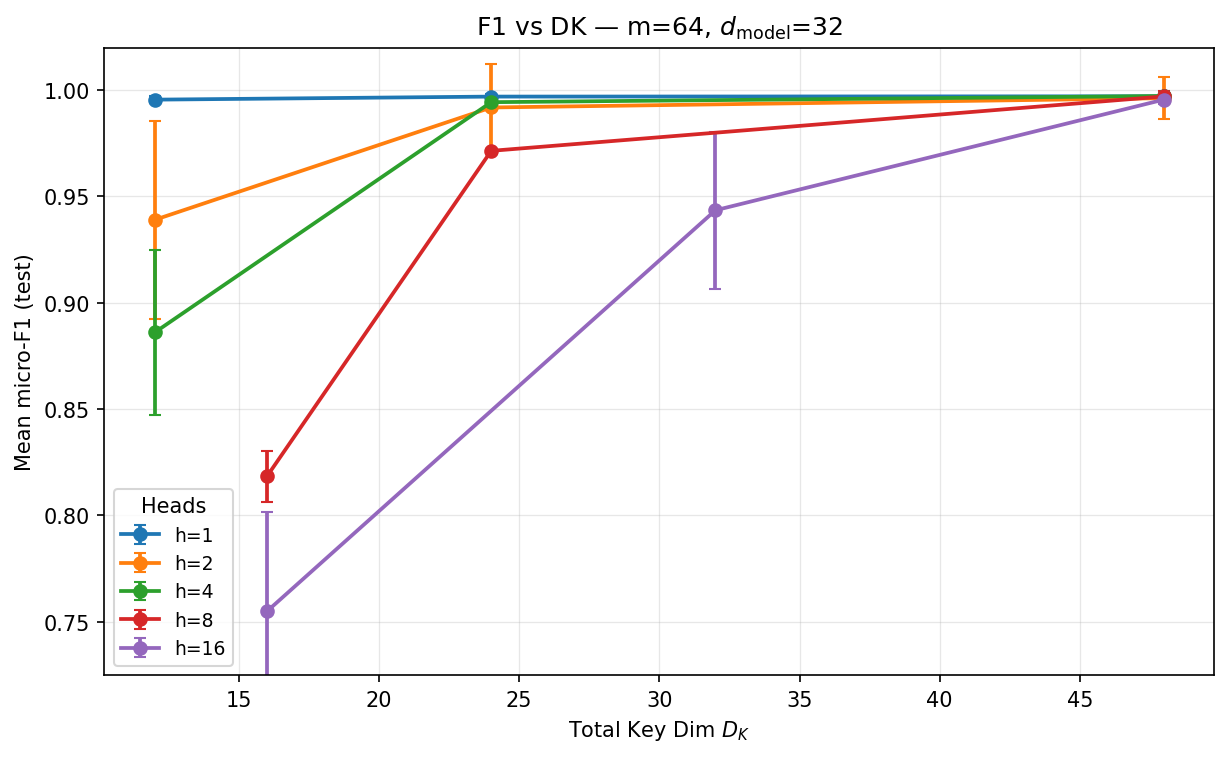}
  \includegraphics[width=.49\linewidth]{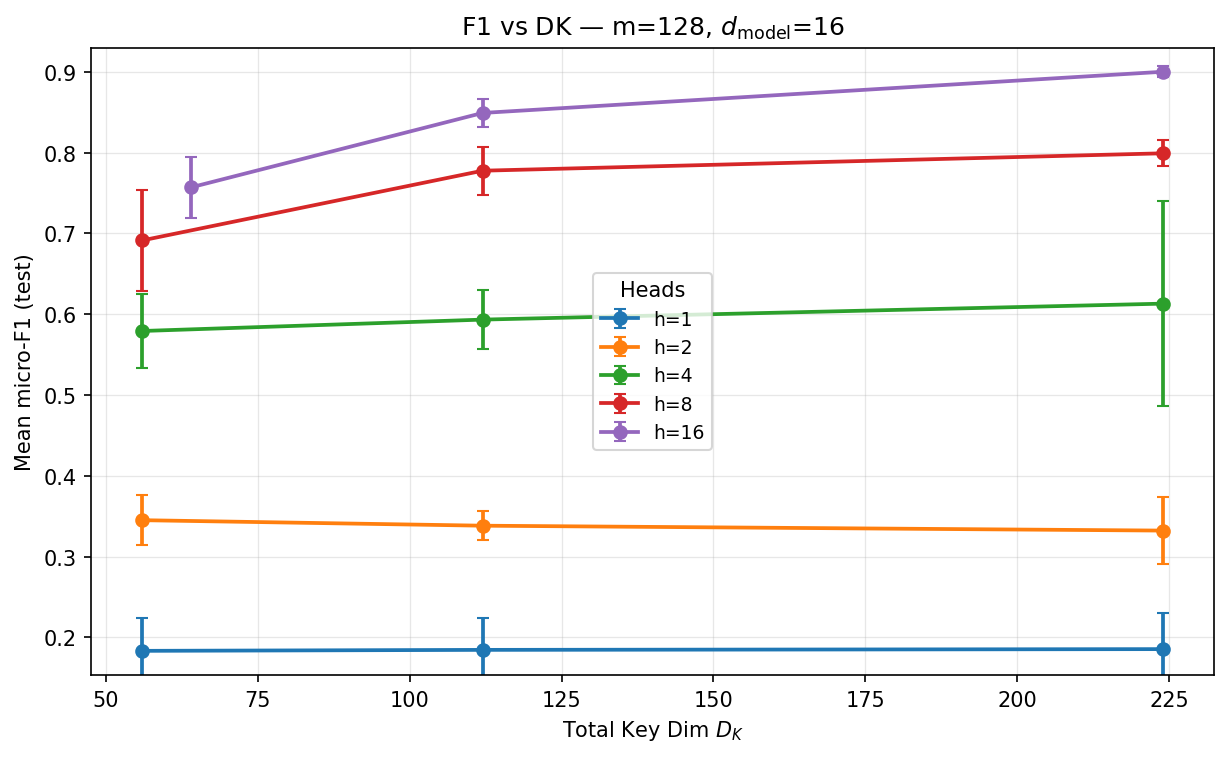}
  \includegraphics[width=.49\linewidth]{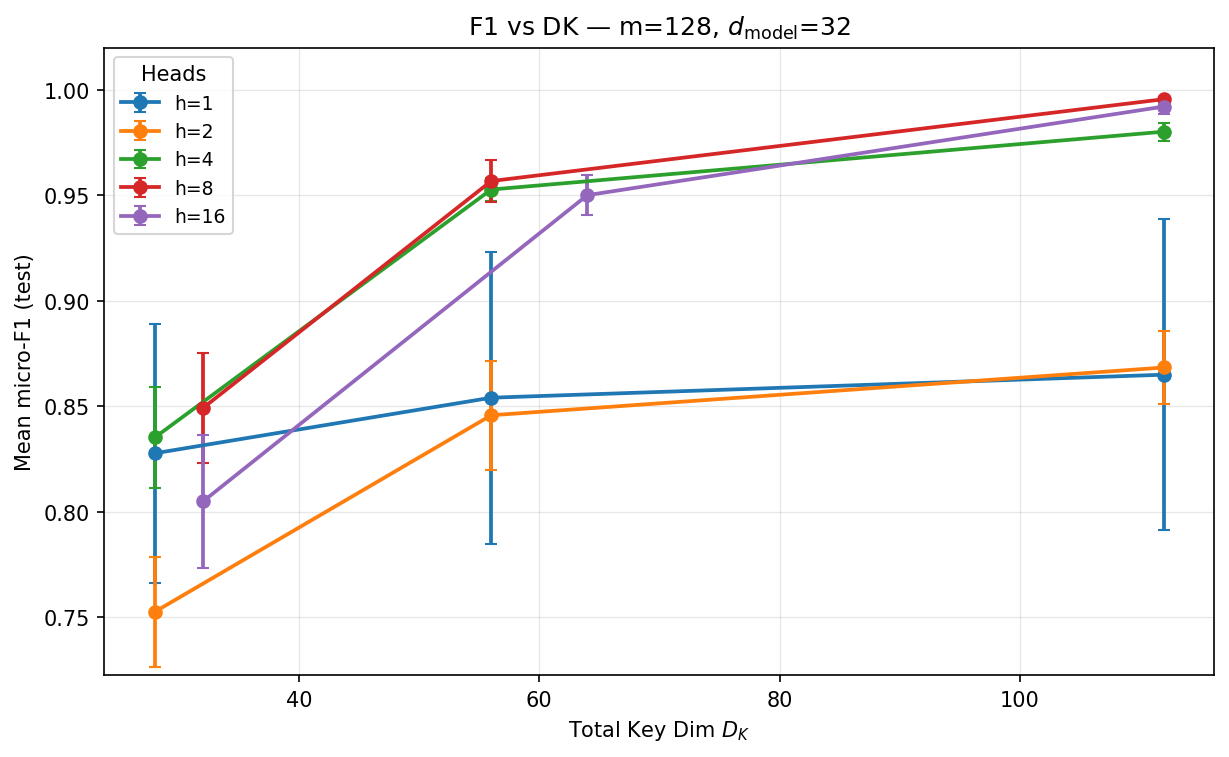}
  \caption{\textbf{Example F1–$D_K$ curves with softmax incorporated.}  
  Each panel fixes $(m,d_{\text{model}})$ and sweeps heads $h$ and $D_K=h\,d_k$; markers show mean test micro–F1 and error bars are 95\% CIs over 3 runs.  We see a distinct multi-head advantage when in the compressed embedding regime - i.e., all cases except $m = 64$, $d_{\text{model}} = 32$.}
  \label{fig:softmax-curves}
\end{figure}

\subsection{Extension to a value channel}
\label{sec:value-experiments}
We next study the empirical impact of adding a value channel to our model.  All aspects of the data generation and optimization protocol remain in our core model (permutation graphs on $m$ nodes, frozen normalized embeddings $x_i\in\mathbb R^{d_{\text{model}}}$, contexts of length $\ell$ with target-in-context rate $\rho$, and AdamW with the same learning rate and regularization), except that we equip the layer with a value pathway and change the learning objective from edge classification to message retrieval.

Concretely, the key–query path is unchanged: we retain the same learned projections $W_Q,W_K\in\mathbb R^{d_{\text{model}}\times D_K}$, partitioned into $h$ heads with per-head width $d_k=D_K/h$, and we continue to aggregate scores by an elementwise max over heads $S_{\max}=\max_k Q^{(k)}(K^{(k)})^\top$ without $1/\sqrt{d_k}$ scaling or softmax. The only architectural addition is a \emph{random value map} $W_V\in\mathbb R^{d_{\text{model}}\times d_{\text{msg}}}$, drawn once at initialization and then frozen. Each node $i$ carries a “message” $y_i = x_i W_V \in \mathbb R^{d_{\text{msg}}}$ that lies in the span of the original embeddings, matching the theoretical assumption of a fixed random value channel.

Given a context $\mathcal C$ with embedding matrix $X_{\mathcal C}$, we reuse the same attention scores $S_{\max}\in\mathbb R^{\ell\times \ell}$ to mix value messages. Let $V_{\mathcal C} = X_{\mathcal C}W_V$ collect the in-context messages. For each position $i$ whose outgoing permutation neighbor $\pi(i)$ also appears in the context, we form a predicted neighbor message
\begin{equation}
\hat y_i = \big(S_{\max} V_{\mathcal C}\big)_i
\end{equation}
using the raw (un-normalized) scores in $S_{\max}$ as mixing weights, and we supervise it toward the true neighbor message $y_{\pi(i)} = x_{\pi(i)}W_V$. Training minimizes the mean squared error between $\hat y_i$ and $y_{\pi(i)}$ over all such “valid” positions in the batch, i.e.,
\begin{equation}
\mathcal L_{\text{MSE}} \;=\; \frac{1}{|\mathcal I_{\text{valid}}|}\sum_{i\in\mathcal I_{\text{valid}}} \big\|\hat y_i - y_{\pi(i)}\big\|_2^2,
\end{equation}
where $\mathcal I_{\text{valid}}$ is the set of indices whose permutation neighbor lies in the same context. Early stopping now operate on validation MSE for this message-retrieval task (with the same check interval and patience as before), and we report test-set MSE as the primary metric. Micro-F1 and score margins induced by the learned $W_Q,W_K$ no longer play any role in optimization or stopping.

\paragraph{Findings (Value Retrieval).}
We observe that the multi-head advantage found with edge classification transfers directly to the message-retrieval task. Figure~\ref{fig:mse-value} displays the test MSE as a function of total key dimension $D_K$ for $m=64$ with embedding dimensions $d_{\text{model}}=16$ and $d_{\text{model}}=32$.

\paragraph{Multi-head advantage in compressed regimes.}
In the compressed regime where $d_{\text{model}} \ll m$ (Figure~\ref{fig:mse-value}, left, $d_{\text{model}}=16$), we observe a significant separation between single-head and multi-head performance. The single-head model ($h=1$) fails to reduce MSE significantly even as $D_K$ increases, plateauing at a high error rate. In contrast, models with $h \ge 4$ are able to leverage the key budget effectively, driving the MSE down sharply. This confirms that even when the task is soft message retrieval rather than hard binary classification, the geometric bottleneck of identifying the correct neighbor requires the noise suppression impact of multiple heads.

\begin{figure}[!ht]
\centering
\includegraphics[width=.49\linewidth]{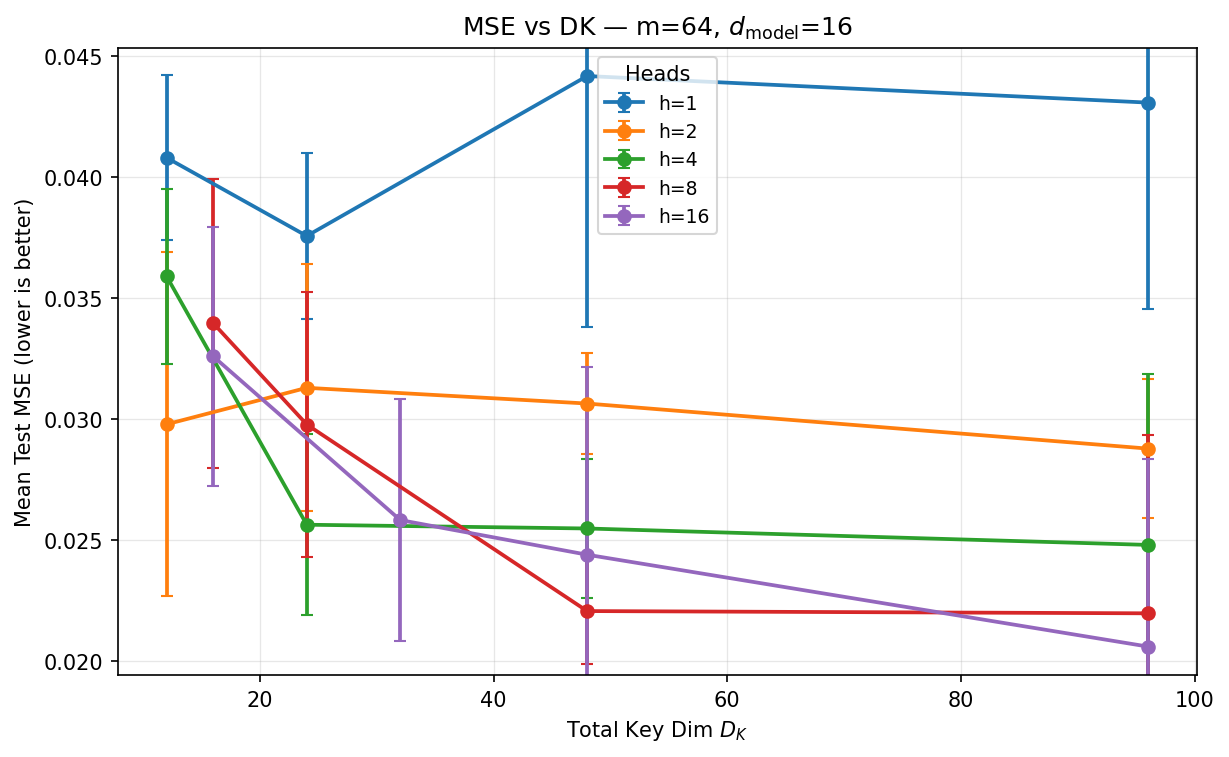}
\includegraphics[width=.49\linewidth]{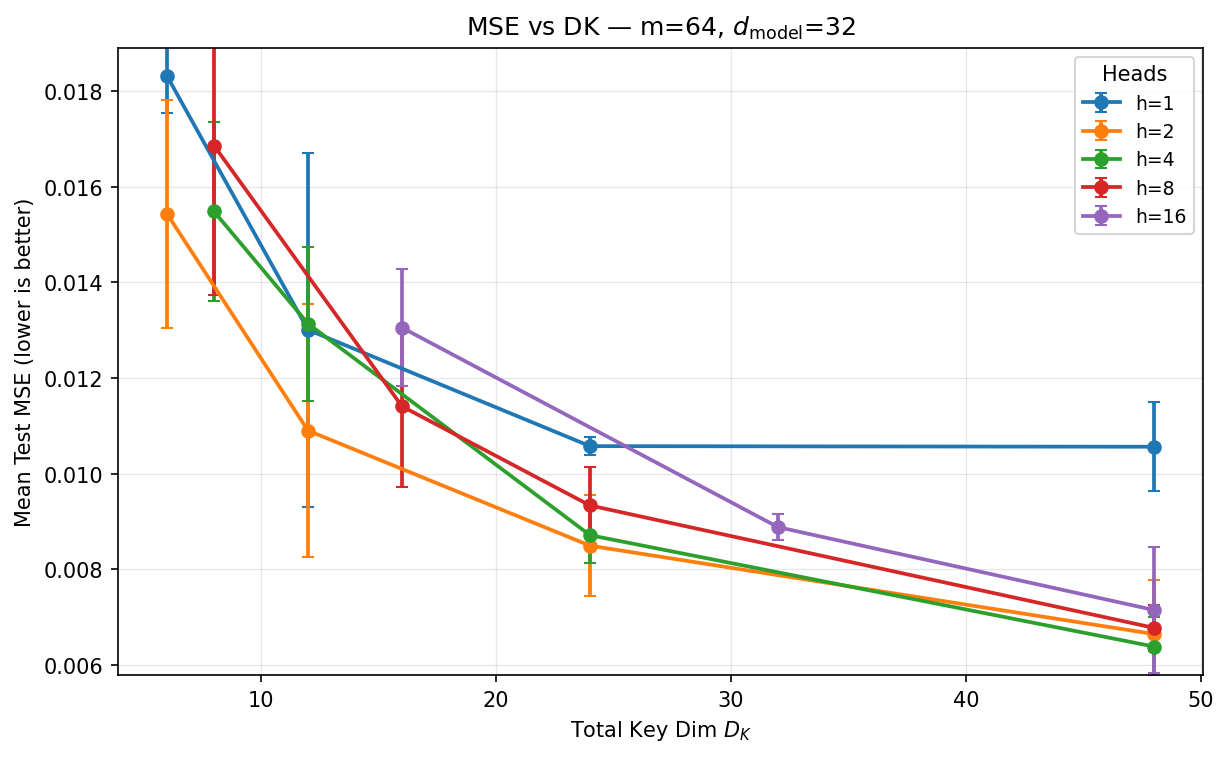}
\caption{\textbf{Value retrieval MSE mirrors capacity transitions.} Mean test MSE vs.\ total key dimension $D_K$ for a value-retrieval task with frozen random value mappings ($m{=}64$). \textbf{Top ($d_{\text{model}}{=}16$):} In the compressed regime, single-head attention (blue) fails to retrieve the correct message, while multi-head attention succeeds. \textbf{Bottom ($d_{\text{model}}{=}32$):} With less compression pressure, the gap narrows, though multi-head models still attain the lowest final MSE.  Error bars are 95\% CIs over 3 runs}
\label{fig:mse-value}
\end{figure}

\paragraph{Behavior in less compressed regimes.}
When the embedding dimension is relaxed to $d_{\text{model}}=32$ (Figure~\ref{fig:mse-value}, right), the necessity for multiple heads diminishes, consistent with our theoretical predictions. Here, a single head ($h=1$) is competitive, and in some low-$D_K$ settings even outperforms highly fragmented architectures (e.g., $h=16$, where $d_k$ becomes very small). However, intermediate head counts ($h=4, 8$) still achieve the lowest ultimate MSE.

\paragraph{Takeaway.}
These results demonstrate that our core findings are not an artifact of the thresholded classification objective. The attention mechanism’s ability to route information—specifically, to select the correct value vector to mix—is governed by the same geometric capacity constraints derived for the edge-existence problem.

\subsection{Full transformer layer with frozen GPT-2 embeddings}
\label{sec:full-transformer}

We next turn to a much more complex and realistic scenario: a full single layer Transformer block used to solve a controlled retrieval problem similar to induction-head pointer indrection.  We use a fairly standard structure:
pre-LN attention plus MLP block with residuals, and use frozen GPT-2 token embeddings with a tied unembedding.

\paragraph{Frozen vocabulary and embedding space.}
Let $W_{\mathrm{TE}}\in\mathbb{R}^{|\mathcal V|\times d_{\mathrm{model}}}$ be GPT-2-small's token embedding matrix
($d_{\mathrm{model}}=768$). We restrict to an item set of size $m$, using the $m$ tokens with the highest unigram frequency in the WikiText-103 training corpus (estimated from the first 10 million tokens using the GPT-2 tokenizer), excluding the EOS token.  This yields a frozen table
\[
E\in\mathbb{R}^{(m+1)\times d_{\mathrm{model}}},
\]
containing the $m$ item embeddings plus $E[\textsc{null}]=W_{\mathrm{TE}}[\mathrm{EOS}]$, preventing the model from ``hiding'' structure in embeddings and keeping the bottleneck aligned with QK capacity.

\paragraph{Task: permutation$\to$matching retrieval (symbolic IOI / induction-style indirection).}
For each run (fixed $(m,\ell,D_K,h,\mathrm{seed})$), we sample a random permutation $\pi:[m]\to[m]$ and keep it fixed across
train/val/test. Each example samples a query $q$ and finds its \emph{successor} $s=\pi(q)$ (rejecting $s=q$).  We then create a sample $S$ of
$2\ell-1$ items, where the items of $S$ are distinct from each other, as well as from $s$ and $q$.  The items in $S\cup\{s\}$ are grouped into $\ell$ disjoint pairs, defining a perfect matching $M$ over the $2\ell$ items. The label is $y = M(s)$, where $M(s)$ is the item matched by $M$ with the target of the query $q$.

This implements a two-step pointer computation: a \emph{stored} relation $\pi$ (compiled into parameters) followed by a
\emph{prompt-defined} binding $M$ (recovered from the context). It is directly analogous to
(i) \emph{induction-head} pointer indirection and (ii) \emph{indirect-object-identification}-style compositional retrieval of ``the other entity.''  However, we use an input encoding that ensures that all the information needed by a query is contained in a single context location, and thus "paying attention to multiple locations simultaneously" is never required.

\paragraph{Input encoding (superposition; set-like context).}
Each pair $(a_i,b_i)$ is encoded as a single \emph{superposed} slot
\[
x_i = E[a_i]+E[b_i],\qquad i=1,\dots,\ell.
\]
The sequence is
\[
X=[x_1,\dots,x_\ell,\;E[\textsc{null}],\;E[q]]\in\mathbb{R}^{(\ell+2)\times d_{\mathrm{model}}}.
\]
No positional embeddings are used, making the $\ell$ context slots exchangeable; the model is evaluated only at the final
(query) position.

\paragraph{Model: one pre-LN Transformer block with explicit QK budget.}
We train a single GPT-style pre-LN block (MHA + MLP + residuals) and read out only the last token. The key experimental
lever is the total key/query width
\[
D_K = h\,d_k,
\]
swept across head partitions $(h,d_k)$ at fixed $D_K$. Values use $d_v=d_{\mathrm{model}}/h$.
We train $\{W_Q,W_K,W_V,W_O\}$ (no bias) plus a small 2-layer GELU MLP of fixed width (e.g.\ $d_{\mathrm{ff}}=16$) to
keep the primary bottleneck in the QK channel. Output uses weight tying:
\[
\mathrm{logits} = h_{\mathrm{last}} E^\top\in\mathbb{R}^{m+1}.
\]

\paragraph{Training and capacity measurement.}
Data are generated on the fly; the objective is cross-entropy on the final-position prediction. We optimize with AdamW
(weight decay $0.01$; constant learning rate; mixed precision for throughput) and train for a fixed budget of updates, recording
the best validation accuracy achieved and reporting the corresponding held-out test accuracy.\footnote{In this task the label is never \textsc{null}, so the trivial baseline that always predicts \textsc{null} has test accuracy $0$.}
For each $(m,\ell,D_K,h,\mathrm{seed})$ we train from scratch and report held-out accuracy.
We compare head partitions $(h,d_k)$ at fixed $D_K$ to determine if there is a multi-head advantage.

\paragraph{Results: multi-head attention remains strongly beneficial in a single layer Transformer and frozen GPT-2 embeddings.}
We evaluate the setting $m=6144$, $\ell=16$, $d_{\mathrm{model}}=768$ over $D_K\in\{64,128,256,512\}$ and head counts
$h\in\{1,2,4,8\}$ (so $d_k=D_K/h$), using $7$ random seeds per configuration, and a maximum of 100,000 training steps per configuration/seed.
Figure~\ref{fig:gpt2_frozen_acc_vs_dk} summarizes the results.

\begin{figure}[!ht]
  \centering
  \includegraphics[width=0.7\linewidth]{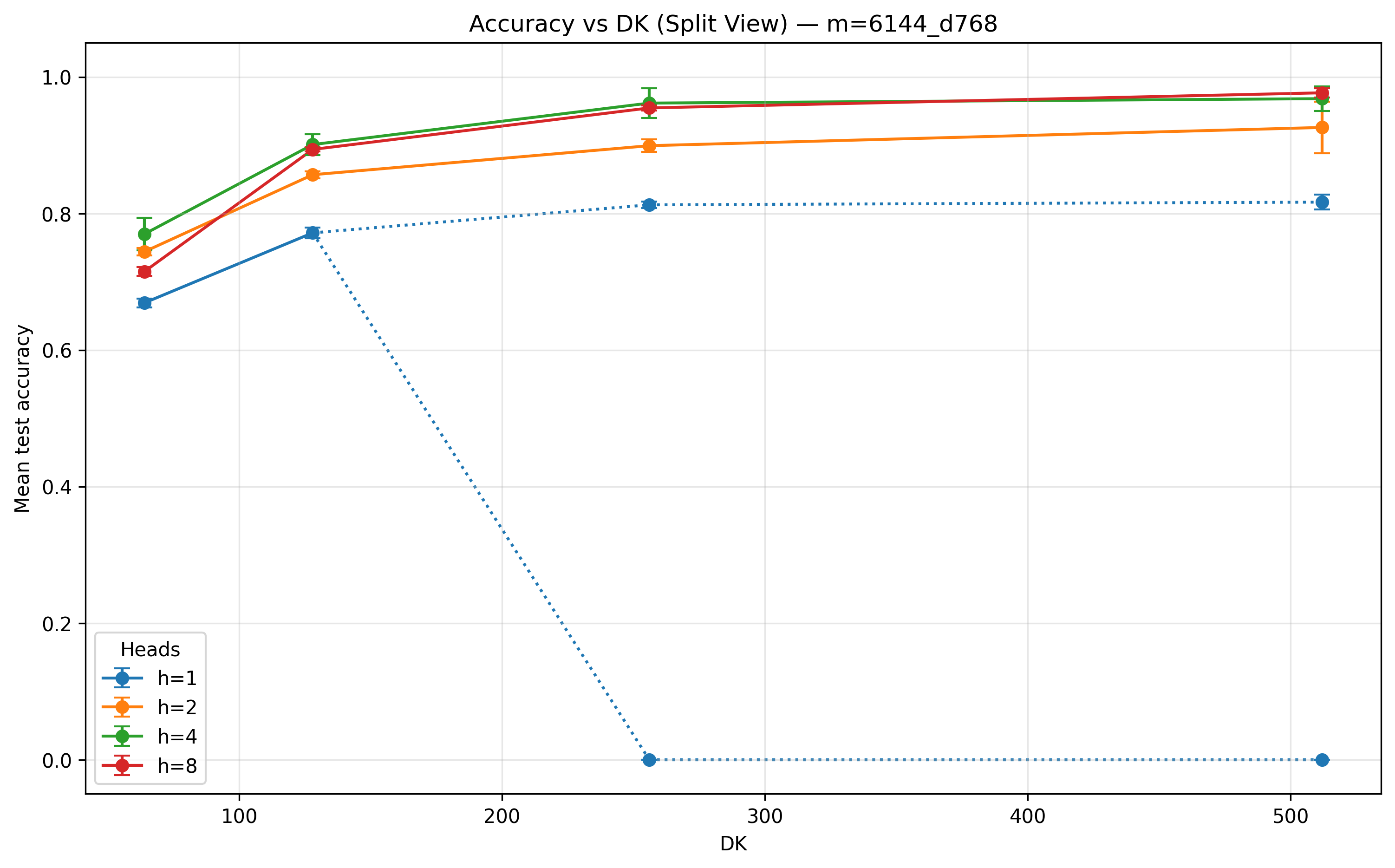}
  \caption{\textbf{Frozen GPT-2 embeddings; full pre-LN block.} Mean test accuracy vs.\ total key dimension $D_K$ for different
  head partitions $(h,d_k)$ at fixed $D_K$ (with $d_k=D_K/h$), for $m=6144$, $d_{\mathrm{model}}=768$, $\ell=16$.
  Error bars show 95\% confidence intervals. For the single-head setting at larger $D_K$ we observe (and depict) a clear bimodality:
  some runs collapse to the \textsc{null} baseline (lower dotted, near $0$), while the remaining runs converge to a partial solution (upper dotted).}
  \label{fig:gpt2_frozen_acc_vs_dk}
\end{figure}

Three qualitative phenomena stand out.

\paragraph{(1) Increasing $D_K$ improves performance, but the attainable accuracy depends heavily on how $D_K$ is partitioned into heads.}
For multi-head models ($h\ge 2$), accuracy increases smoothly with $D_K$ and begins to saturate by $D_K\approx 512$.
%For example, with $h=8$ the mean test accuracy rises from $0.715\pm0.007$ at %$D_K=64$ to $0.894\pm0.003$ at $D_K=128$,
%$0.955\pm0.004$ at $D_K=256$, and $0.977\pm0.007$ at $D_K=512$ (reaching %$0.981\pm0.004$ at $D_K=1024$).
%With $h=4$ the corresponding curve is $0.770\pm0.026\to 0.901\pm0.017\to 0.962\pm0.023\to 0.968\pm0.020$ (and $0.961\pm0.011$ at $D_K=1024$).
Thus, once $D_K$ is moderately large, the model can approach near-perfect two-step retrieval---but only for sufficiently multi-headed partitions.

\paragraph{(2) A pronounced multi-head advantage at fixed $D_K$, despite the task never requiring attention to multiple locations.}
At every $D_K$, allocating the same total key budget across multiple heads yields substantially higher test accuracy.
%At $D_K=128$, a single head reaches $0.772\pm0.009$, while $h=2$ reaches $0.857\pm0.005$ and $h=4$ reaches $0.901\pm0.017$.
%At $D_K=256$, $h=2$ reaches $0.899\pm0.010$ but $h=4$ reaches $0.962\pm0.023$ and $h=8$ reaches $0.955\pm0.004$.
%At $D_K=512$, $h=8$ achieves the best performance ($0.977\pm0.007$), followed by $h=4$ ($0.968\pm0.020$), while $h=2$ remains lower and
%more variable ($0.926\pm0.041$).
Because these comparisons hold at fixed $D_K$ (and thus fixed $W_Q,W_K$ parameter shapes), they isolate a structural advantage of multi-head computation rather than a parameter-count effect.
Crucially, this advantage arises even though each query's information is contained in a \emph{single} context slot and there is never a need to attend to multiple slots simultaneously.

\paragraph{(3) Single-head models exhibit a sharp failure mode and saturate far below multi-head performance.}
For $h=1$, performance improves from $D_K=64$ to $128$,
%($0.669\pm0.007\to 0.772\pm0.009$), 
but for $D_K\ge 256$ training becomes bimodal:
a substantial fraction of seeds collapse to the $\textsc{null}$ baseline (test accuracy $\approx 0$), while the remaining runs converge to a partial solution near $0.81$ and do not improve meaningfully with larger $D_K$.
%Concretely, at $D_K=256$ only $3/7$ runs succeed (mean $0.813\pm0.002$ among successes), at $D_K=512$ $5/7$ succeed (mean $0.817\pm0.009$ among successes),
%and at $D_K=1024$ only $2/7$ succeed (mean $0.821\pm0.006$ among successes). 
No such complete collapse is observed for $h\ge 2$.
This indicates that simply increasing total key width is not sufficient in the single-head setting: splitting the QK budget into multiple heads is critical both for representational performance \emph{and} optimization stability.

\paragraph{Takeaway.}
Even with a full Transformer block, real (non-orthogonal) frozen GPT-2 embeddings, and a task that never requires attending to multiple context positions,
we observe a strong and robust multi-head advantage at fixed total key dimension.
The best-performing partitions use several heads (typically $h\in\{4,8\}$ in this regime), while overly many heads at very small $D_K$ can be counterproductive
(e.g.\ $h=8$ at $D_K=64$, where $d_k=8$).
Overall, these findings mirror the capacity-based prediction from Section~\ref{sec:multiple}: when embeddings are compressed and superposed,
distributing a fixed QK budget across multiple heads can substantially reduce interference and improve retrieval accuracy.

\section{Analysis of Softmax Model Variant}
\label{sec:softmax-sum}

We now instantiate the QK-only model of \S\ref{models} with standard scaled dot-product attention:
each head applies a \emph{softmax over the items in the context} (with the usual $d_k^{-1/2}$ scaling
inside the logits). As in \S\ref{models}, we omit the value channel and treat attention weights as
per-pair scores. We aggregate across heads by \emph{summation} and then threshold the aggregated score.  We view summation as the natural aggregation computation, given the lack of a value channel.
In Sec.~\ref{sec:lower}, we prove that this model variant requires $D_K = \Omega\big(\tfrac{m'}{d_{\text{model}}}\log\tfrac{m^2}{m'}\big)$, and so our goal here is to match this lower bound as closely as possible.

\paragraph{Explicit Construction.}
We use the same Gaussian unit-norm embedding and the same random-signature de-embedding scheme as in Algorithm~\ref{alg:compressive_construction} and Theorem~\ref{thm:gue}. Given a context $\mathcal C\subseteq V$ of length $\ell$, for head $k$ we form the usual (scaled) logits and per-head probabilities
\begin{equation}
L_{ij}^{(k)} \;:=\; \frac{S_{ij}^{(k)}}{\sqrt{d_k}},
\;\;
p_{ij}^{(k)} \;:=\; \frac{\exp\!\big(L_{ij}^{(k)}\big)}{\sum_{t\in\mathcal C}\exp\!\big(L_{it}^{(k)}\big)}\quad (j\in\mathcal C).
\end{equation}
The \emph{aggregated score} we threshold is the sum over heads
\begin{equation}
A_{ij} \;:=\; \sum_{k=1}^h p_{ij}^{(k)}.
\end{equation}
We declare that $i$ has an edge to $j$ iff $A_{ij}>\tau$ with
$\tau := \tfrac12$.

\begin{algorithm}[t]
\caption{Softmax Construction for Permutations with Compressive Embeddings}
\label{alg:softmax-sum}
\begin{algorithmic}[1]
\STATE \textbf{Input:} Permutation graph $G=(V,E)$ with $\pi:V\to V$; Gaussian unit-norm embedding matrix $X\in\mathbb{R}^{m\times d_{\text{model}}}$ (rows $\mathbf{x}_i^\top$).
\STATE \textbf{Parameters:} $h=\frac{m}{d_{\text{model}}}$ heads; per-head width $d_k$; random Rademacher signatures as in Construction~II; \textbf{threshold} $\tau = \tfrac12$.
\STATE \textbf{Partition sources/targets:} Split $V$ into disjoint blocks $V_1,\dots,V_h$ of size $|V_k|=d_{\text{model}}$ and $T_k:=\{\pi(s):s\in V_k\}$.
\STATE \textbf{Templates and de-embedding:} As in Algorithm~\ref{alg:compressive_construction}, build one-hot-space templates $W'_{Q,(k)},W'_{K,(k)}$ and set $W_Q^{(k)}=X^\top W'_{Q,(k)}$, $W_K^{(k)}=X^\top W'_{K,(k)}$.
\STATE \textbf{Per-head scores:} For each head $k$ and $(i,j)$, compute $S_{ij}^{(k)}=\langle \mathbf{q}_i^{(k)},\mathbf{k}_j^{(k)}\rangle$, then logits $L_{ij}^{(k)}=S_{ij}^{(k)}/\sqrt{d_k}$ and softmax $p_{ij}^{(k)}$ over $j\in\mathcal C$.
\STATE \textbf{Aggregate across heads:} $A_{ij}=\sum_{k=1}^h p_{ij}^{(k)}$; declare edge $(i,j)$ present iff $A_{ij}>\tau$.
\end{algorithmic}
\end{algorithm}

The only substantive changes from Construction~II are (i) computing the per-head softmax (with scaling) over $j\in\mathcal C$ and (ii) replacing $\max_k$ aggregation over heads by a sum over $k$.

\subsection{Main guarantee and efficiency}
\label{sec:softmax-sum:main}

\begin{theorem}[Softmax analogue of Construction~II]
\label{thm:softmax-sum}
Assume the setup of Algorithm~\ref{alg:softmax-sum} with $h=\frac{m}{d_{\mathrm{model}}}$ heads
and Gaussian unit-norm embeddings, where $d_{\mathrm{model}}\ge c_0\log m$ for a sufficiently large
absolute constant $c_0$. There exist absolute constants $C,c_0,C_\ell,\gamma>0$ such that if $d_k \ge C(\log m)^2$ and $\ell \ge C_\ell \max\{h,\log m\}$,
then with probability at least $1-m^{-3}$ over the draw of $(X,\mathrm{signatures})$, the following holds:
for any \emph{fixed} context $\mathcal C\subseteq V$ of length $\ell$, simultaneously for all sources $i$ and all $j\in\mathcal C$,
\[
A_{i,\pi(i)} \;>\; \tfrac12
\qquad\text{and}\qquad
A_{ij} \;<\; \tfrac12 \quad (j\neq \pi(i)).
\]
Consequently, for every source $i$, if the unique target $\pi(i)$ is in the context, then using the fixed threshold $\tau=\tfrac12$ recovers that target within the context, and if $\pi(i)$ is not in the context, then no edge is identified.
Finally, the total key dimension satisfies
\[
\boxed{\quad D_K \;=\; h\,d_k \;=\; \Theta\!\Big(\frac{m}{d_{\mathrm{model}}}\,(\log m)^2\Big).\quad}
\]
\end{theorem}

\begin{proof}

We demonstrate this using the following steps:

\noindent\textbf{Step 1: Raw-score separation (owner head).}
Let $k^\star$ be the unique head with $i\in V_{k^\star}$. The signal/noise analysis in the proof of
Theorem~\ref{thm:gue} yields absolute constants such that, simultaneously for all $i$ and all $j$,

\begin{equation}
\label{eq:sep-raw}
S_{i,\pi(i)}^{(k^\star)} \;\ge\; \tfrac{3}{4}\,d_k,
\qquad
S_{ij}^{(k^\star)} \;\le\; \tfrac{1}{4}\,d_k\quad (j\neq \pi(i)).
\end{equation}

Dividing by $\sqrt{d_k}$ gives the corresponding logit gap
\begin{equation}
\label{eq:sep-logit}
L_{i,\pi(i)}^{(k^\star)} \;\ge\; \tfrac{3}{4}\sqrt{d_k},
\qquad
L_{ij}^{(k^\star)} \;\le\; \tfrac{1}{4}\sqrt{d_k}\quad (j\neq \pi(i)).
\end{equation}

\noindent\textbf{Step 2: Softmax calibration within the owner head.}
We use the standard calibration lemma.

\begin{lemma}[Softmax calibration]
\label{lem:softmax-calib}
If $z_\star\ge a$ and $z_t\le b$ for all $t\neq\star$ in a set of size $\ell$, then
\[
\frac{e^{z_\star}}{\sum_t e^{z_t}}
\;\ge\;
\frac{1}{1+(\ell-1)e^{-(a-b)}}.
\]
\end{lemma}

Applying Lemma~\ref{lem:softmax-calib} to head $k^\star$ with
$a=\tfrac{3}{4}\sqrt{d_k}$ and $b=\tfrac{1}{4}\sqrt{d_k}$ from \eqref{eq:sep-logit} gives
\begin{equation}
\label{eq:owner-head-mass-new-half}
p_{i,\pi(i)}^{(k^\star)}
\;\ge\;
\frac{1}{1+(\ell-1)e^{-\tfrac{1}{2}\sqrt{d_k}}}.
\end{equation}
Using $\frac{1}{1+u}\ge 1-u$ for $u\ge 0$ and $\ell\le m$, we further obtain
\[
p_{i,\pi(i)}^{(k^\star)}
\;\ge\;
1-(\ell-1)e^{-\tfrac{1}{2}\sqrt{d_k}}
\;\ge\;
1-m\,e^{-\tfrac{1}{2}\sqrt{d_k}}.
\]
If $d_k\ge C(\log m)^2$, then $e^{-\tfrac{1}{2}\sqrt{d_k}}\le e^{-\tfrac{1}{2}\sqrt{C}\,\log m}
= m^{-\tfrac{1}{2}\sqrt{C}}$, so
\[
m\,e^{-\tfrac{1}{2}\sqrt{d_k}}
\;\le\;
m^{1-\tfrac{1}{2}\sqrt{C}}
\;=\;
m^{-\gamma}
\text{ with }
\gamma := \tfrac{1}{2}\sqrt{C}-1.
\]
Choosing $C$ large enough ensures $\gamma>0$, and we conclude that
\begin{equation}
\label{eq:owner-head-mass-simplified}
p_{i,\pi(i)}^{(k^\star)} \;\ge\; 1-m^{-\gamma}.
\end{equation}
Similarly, for any $j\neq\pi(i)$ we have
\begin{equation}
\label{eq:owner-head-wrong-mass-new-half}
p_{ij}^{(k^\star)}
\;\le\;
\frac{e^{\tfrac{1}{4}\sqrt{d_k}}}{e^{\tfrac{3}{4}\sqrt{d_k}}}
\;=\;
e^{-\tfrac12\sqrt{d_k}}
\;\le\;
m^{-\gamma},
\end{equation}
after possibly decreasing $\gamma$ by an absolute factor (absorbed by increasing $C$).

\noindent\textbf{Step 3: Background mass from non-owner heads.}
Fix $(i,j)$ and write the non-owner background as
\begin{equation}
\label{eq:Bij-def-half}
B_{ij} \;:=\; \sum_{k\neq k^\star} p_{ij}^{(k)}.
\end{equation}
We will show that $B_{ij}$ concentrates around its mean $(h-1)/\ell$ with deviation
\begin{equation}
\label{eq:delta-tilde-half}
\widetilde\Delta_\ell
\;:=\;
\frac{C_2}{\ell}\Big(\sqrt{h\log m} \;+\; \log m\Big)
\end{equation}
for an absolute constant $C_2>0$.

\smallskip
\noindent\textbf{(i) Mean $1/\ell$ per non-owner head.}
For any $k\neq k^\star$ and any fixed context $\mathcal C$, the non-owner head has no planted signal
singling out any particular $j\in\mathcal C$. Under the randomness of the signatures (and using the
standard symmetry trick of independently permuting the signature rows used by head $k$ prior to
de-embedding), the distribution of the logit vector $(L_{it}^{(k)})_{t\in\mathcal C}$ is exchangeable
across $t\in\mathcal C$. Hence the softmax masses are exchangeable and sum to $1$, implying
\begin{equation}
\label{eq:nonowner-mean-half}
\mathbb E\!\left[p_{ij}^{(k)}\,\middle|\,X,\mathcal C\right] \;=\; \frac{1}{\ell}
\qquad (k\neq k^\star,\ j\in\mathcal C).
\end{equation}
Therefore $\mathbb E[B_{ij}\mid X,\mathcal C]=(h-1)/\ell$.

\smallskip
\noindent\textbf{(ii) Sub-exponential scale $\asymp 1/\ell$ for a single non-owner mass.}
The following lemma is unchanged.

\begin{lemma}[Noisy-softmax coordinate has $1/\ell$-scale tails]
\label{lem:noisy-softmax-psi1-half}
Let $(Z_t)_{t=1}^\ell$ be i.i.d.\ positive random variables with $\mu_1:=\mathbb E[Z_t]\in(0,\infty)$
and $\|Z_t\|_{\psi_1}\le K_0$ for an absolute constant $K_0$. Define
\[
p \;:=\; \frac{Z_1}{\sum_{t=1}^\ell Z_t}.
\]
Then $\mathbb E[p]=1/\ell$ and there exists an absolute constant $K_1$ (depending only on $K_0$ and
$\mu_1$) such that
\[
\|p-\tfrac1\ell\|_{\psi_1} \;\le\; \frac{K_1}{\ell}.
\]
\end{lemma}

\begin{proof}
By exchangeability, $\mathbb E[p]=1/\ell$. Let $S:=\sum_{t=1}^\ell Z_t$. Since the $Z_t$ are i.i.d.\
sub-exponential, Bernstein's inequality implies that with probability at least $1-2e^{-c\ell}$ (for
an absolute $c>0$),
\[
S \;\ge\; \tfrac12\,\mu_1\,\ell.
\]
On this event,
$
p = Z_1/S \le \frac{2}{\mu_1\ell}Z_1,
$
so $p$ is sub-exponential with $\psi_1$-norm at most $\frac{2K_0}{\mu_1\ell}$ on the good event. The
complement event has probability $2e^{-c\ell}$, which can be absorbed into the $\psi_1$ bound at the
stated scale. Centering does not change the $\psi_1$ norm by more than an absolute factor, yielding
$\|p-\tfrac1\ell\|_{\psi_1}\le K_1/\ell$.
\end{proof}

In our setting, for each non-owner head $k\neq k^\star$ and fixed $(i,\mathcal C)$, the logits are
random (over the signatures) with sub-Gaussian tails and no planted signal, so the lemma applies with
$Z_t=\exp(L_{it}^{(k)})$, whose $\psi_1$ norm is bounded by an absolute constant (since
$L_{it}^{(k)}$ is sub-Gaussian).

\smallskip
\noindent\textbf{(iii) Sum over non-owner heads.}
Conditional on $X$ and $\mathcal C$, the random variables $\{p_{ij}^{(k)}\}_{k\neq k^\star}$ are
independent across $k$ because different heads use disjoint signature rows (the sets $T_k$ are
disjoint). Combining \eqref{eq:nonowner-mean-half} with Lemma~\ref{lem:noisy-softmax-psi1-half}, we may
write
\[
X_k \;:=\; p_{ij}^{(k)} - \tfrac1\ell,
\;\;
\mathbb E[X_k\mid X,\mathcal C]=0,
\;\;
\|X_k\|_{\psi_1}\le \frac{K_1}{\ell}.
\]
A standard Bernstein inequality for sums of independent sub-exponential variables then implies that
for any $t>0$,
\begin{equation}
\label{eq:subexp-bernstein-half}
\Pr\!\left(
\left| \sum_{k\neq k^\star} X_k \right|
\;>\;
\frac{C}{\ell}\Big(\sqrt{h\,t} + t\Big)
\right)
\;\le\;
2e^{-t}
\end{equation}
for an absolute constant $C>0$. Setting $t=6\log m$ and union-bounding over all $(i,j)$ (at most $m^2$
pairs) yields that with probability at least $1-m^{-3}$, simultaneously for all $(i,j)$,
\begin{equation}
\label{eq:bg-conc}
\left|B_{ij} - \frac{h-1}{\ell}\right|
\;\le\;
\frac{C_2}{\ell}\Big(\sqrt{h\log m} + \log m\Big)
\;=\;
\widetilde\Delta_\ell
\end{equation}

\noindent\textbf{Step 4: Aggregated scores.}
For the true target $j=\pi(i)$, combine \eqref{eq:owner-head-mass-simplified} with
\eqref{eq:bg-conc}:
\begin{equation}
\label{eq:Atrue-lb-half}
A_{i,\pi(i)}
\;=\;
p_{i,\pi(i)}^{(k^\star)} + B_{i,\pi(i)}
\;\ge\;
(1-m^{-\gamma}) + \frac{h-1}{\ell} - \widetilde\Delta_\ell.
\end{equation}
For any $j\neq\pi(i)$, combine \eqref{eq:owner-head-wrong-mass-new-half} with \eqref{eq:bg-conc}:
\begin{equation}
\label{eq:Awron-ub-half}
A_{ij}
\;=\;
p_{ij}^{(k^\star)} + B_{ij}
\;\le\;
m^{-\gamma} + \frac{h-1}{\ell} + \widetilde\Delta_\ell.
\end{equation}

\noindent\textbf{Step 5: Derivation of the fixed threshold $\tau=\tfrac12$.}
We now show that the context-length condition
\[
\ell \;\ge\; C_\ell \max\{h,\log m\}
\]
implies $A_{ij}<\tfrac12$ for all $j\neq\pi(i)$, while $A_{i,\pi(i)}>\tfrac12$ automatically.

\smallskip
\noindent\textbf{(i) True target exceeds $\tfrac12$.}
Since each softmax mass is nonnegative,
\[
A_{i,\pi(i)} \;=\; p_{i,\pi(i)}^{(k^\star)} + \sum_{k\neq k^\star} p_{i,\pi(i)}^{(k)}
\;\ge\;
p_{i,\pi(i)}^{(k^\star)}
\;\ge\;
1-m^{-\gamma}.
\]
Choose $C$ (hence $\gamma$) large enough so that $\gamma\ge 3$; then for all $m\ge 2$,
$m^{-\gamma}\le 2^{-3}=\tfrac18$, so $A_{i,\pi(i)}\ge \tfrac78>\tfrac12$.

\smallskip
\noindent\textbf{(ii) Any wrong item is below $\tfrac12$.}
Using $\ell \ge C_\ell \max\{h,\log m\}$, we have the crude bounds
\[
\frac{h-1}{\ell} \;\le\; \frac{h}{\ell} \;\le\; \frac{1}{C_\ell},
\]
\[
\sqrt{h\log m}\;\le\;\max\{h,\log m\},
\]
\[
\log m \;\le\; \max\{h,\log m\}.
\]
Therefore, by \eqref{eq:delta-tilde-half},
\begin{align*}
\widetilde\Delta_\ell
& = \frac{C_2}{\ell}\big(\sqrt{h\log m}+\log m\big) \\
& \le \frac{C_2}{\ell}\cdot 2\max\{h,\log m\}
\le \frac{2C_2}{C_\ell}.
\end{align*}
Plugging into \eqref{eq:Awron-ub-half} gives, for all $j\neq\pi(i)$,
\[
A_{ij}
\;\le\;
m^{-\gamma} + \frac{1}{C_\ell} + \frac{2C_2}{C_\ell}
\;=\;
m^{-\gamma} + \frac{1+2C_2}{C_\ell}.
\]
Now choose $C_\ell \ge 8(1+2C_2)$ and (as above) $\gamma\ge 3$. Then for all $m\ge 2$, $A_{ij} \le \frac18 + \frac18 < \frac12$.
This proves that, simultaneously for all sources $i$ and all $j\in\mathcal C$,
\[
A_{i,\pi(i)}>\tfrac12
\qquad\text{and}\qquad
A_{ij}<\tfrac12\quad(j\neq \pi(i)),
\]
so the fixed threshold $\tau=\tfrac12$ recovers the unique target $\pi(i)$ within the context for
every source $i$.

\noindent\textbf{Step 6: Total key dimension.}
By definition $D_K=h\,d_k$. Under the theorem assumptions we have $h=\frac{m}{d_{\mathrm{model}}}$, and
we may take $d_k=\Theta((\log m)^2)$ (e.g.\ the minimal choice $d_k=C(\log m)^2$ up to constants),
yielding
\[
D_K \;=\; h\,d_k
\;=\;
\Theta\!\Big(\frac{m}{d_{\mathrm{model}}}\,(\log m)^2\Big).
\]
\end{proof}

\subsection{Multiple heads remain advantageous under softmax}
\label{sec:multi-softmax}

We next show that the multi-head advantage we saw in the max over heads model variant carries over to softmax, albeit with a slightly smaller ratio.  We saw above that softmax normalization introduces a new—and unavoidable—per-head requirement: even with perfect separation of raw scores, converting an additive margin into a probability $1-\varepsilon$ \emph{uniformly over contexts of length $\ell$} forces $d_k=\Omega((\log\ell)^2)$ under the standard $d_k^{-1/2}$ scaling. This cost is orthogonal to the compressive de-embedding noise that drove the multi-head advantage in \S\ref{sec:multiple}. In the softmax model, the two constraints simply stack: (i) the owner head must achieve a raw score gap large enough to survive de-embedding leakage (as in \eqref{eq:N3}–\eqref{eq:dk-lb}), and (ii) that gap must be at least $\Omega(\sqrt{d_k})$ so that Lemma~\ref{lem:softmax-calib} pushes the owner-head probability to $1-\varepsilon$ against $\ell$ distractors. The first favors \emph{more} heads (to shrink the per-head block size $B$), while the second imposes a universal $(\log\ell)^2$ floor on $d_k$.  

\begin{proposition}[Multi-head vs.\ single head with softmax, within the compressive template]
\label{prop:multi-softmax}
Under the setup of Algorithm~\ref{alg:softmax-sum} and Construction~II (Gaussian unit-norm embeddings; Rademacher signatures; de-embedding noise scaling \eqref{eq:N3}), any design that succeeds w.h.p.\ uniformly over all contexts $\mathcal C$ of length $\ell\le m$ that \emph{contain} the true target must satisfy the combined per-head width condition
\begin{equation}
d_k \;\ge\; \max\!\Big\{\,C_1(\log \ell)^2,\;\;C_2\,\frac{B^2}{d_{\text{model}}^2}\,\log m\,\Big\},
\end{equation}
where $B$ is the number of targets served by the head (the block size). Consequently:
\begin{align}
\intertext{\textbf{Single head} ($B=m$):}
D_K^{\text{single}} &\ge \Omega\!\left(\frac{m^2}{d_{\text{model}}^2}\,\log m + (\log m)^2\right), \\
\intertext{\textbf{Multi-head} ($B=d_{\text{model}}$):}
D_K^{\text{multi}} &= \Theta\!\left(\frac{m}{d_{\text{model}}}\,(\log m)^2\right).
\end{align}

Hence, in the compressive regime $m\gg d_{\text{model}}$, the single-head total key dimension is asymptotically larger by at least
\begin{equation}
\frac{D_K^{\text{single}}}{D_K^{\text{multi}}}
\;\;\gtrsim\;\;
\frac{(m^2/d_{\text{model}}^2)\log m}{(m/d_{\text{model}})(\log m)^2}
\;=\;
\frac{m/d_{\text{model}}}{\log m}.
\end{equation}
\end{proposition}

\begin{proof}[Proof sketch]
The leakage term $N_3(B)\asymp \frac{B}{d_{\text{model}}}\sqrt{d_k\log m}$ from \eqref{eq:N3} enforces $N_3\le c_1 d_k$, which is equivalent to $d_k\ge C_2\frac{B^2}{d_{\text{model}}^2}\log m$ (Eq.\,\eqref{eq:dk-lb}). This guarantees the raw owner-head separation used in Step~1 of the softmax proof (inequalities \eqref{eq:sep-raw}–\eqref{eq:sep-logit}). Lemma~\ref{lem:softmax-calib} then turns an $\Omega(\sqrt{d_k})$ logit gap into owner-head mass $1-O(m^{-\gamma})$ provided $d_k\ge C_1(\log \ell)^2$. 
Summing across heads, the non-owner heads contribute a nearly uniform background $(h-1)/\ell$ with
$\Delta_\ell$ fluctuations (Eq.\,\eqref{eq:bg-conc}); Theorem~\ref{thm:softmax-sum} shows that the
fixed threshold $\tau=\tfrac12$ separates the aggregated scores whenever
$\ell\ge C_\ell\max\{h,\log m\}$.
For the single-head case ($h=1$) the background term vanishes, but the leakage constraint uses $B=m$, giving the stated lower bound on $d_k$; for the multi-head choice in Construction~II we have $B=d_{\text{model}}$, so the leakage term becomes $O(\log m)$ and the softmax floor $C_1(\log m)^2$ dominates, yielding the claimed $d_k^{\text{multi}}$ and $D_K^{\text{multi}}$.
\end{proof}

\section{Lower Bounds on Relational Graph Recognition}
\label{sec:lower}

We begin with a simple bit‑budget argument that already forces a large key dimension when each parameter has $O(1)$ effective bits. We then prove a precision‑independent lower bound, based on a metric-entropy argument, that holds provided there is a fixed margin separation between positive and negative decisions, as well as a bound on the scale of the parameter and embedding norms.  All of our upper bounds satisfy these assumptions. 

\paragraph{Warm‑up: fixed‑precision (bit‑budget) lower bound.}
Let \(N=m(m\!-\!1)\) be the number of ordered, loop‑free pairs.
Suppose each real parameter in \(\{W_Q^{(k)},W_K^{(k)},\tau\}\) is represented
with \(b=\Theta(1)\) effective bits. Counting only the QK channel
(and the global threshold), the parameter budget is
\[
B \;=\; b\,(2\,d_{\text{model}}D_K+1)\quad\text{bits}.
\]
Hence the model can realize at most \(2^B\) distinct edge labelings of the \(N\)
ordered pairs. Uniform recovery of \emph{every} graph with \(m'\) edges requires
at least \(\binom{N}{m'}\) distinct labelings, so \(2^B\!\ge \binom{N}{m'}\).
Rearranging gives
\[
d_{\text{model}}D_K \ \ge\ \tfrac{1}{2b}\,\log \binom{m(m-1)}{m'}\ -\ O(1),
\]
or equivalently,
\[D_K \;=\; \Omega\!\Bigg(\frac{\log \binom{m(m-1)}{m'}}{d_{\text{model}}}\Bigg)
= \Omega\!\Big(\tfrac{m'}{d_{\text{model}}}\log\tfrac{m^2}{m'}\Big).
\]
This argument is independent of the aggregator, and thus holds specifically for both our max-over-heads variant and our softmax variant.  It holds for any context length \(\ell\ge 2\), and does not assume a constant margin between the scores of present and missing edges.

\paragraph{Precision‑agnostic lower bound assuming constant margin.}
We now prove the same result without assuming discretization or bit precision: as long as there is a fixed positive margin~$\gamma$ separating “yes’’ from “no’’ decisions and a fixed bound on the scale of the parameters, the same $\Omega\!\big(\tfrac{m'}{d_{\text{model}}}\log\tfrac{m^2}{m'}\big)$
lower bound holds for real‑valued parameters. All of our upper bounds achieve constant margins under the same model, so the comparison is apples‑to‑apples.

\begin{definition}[Constant‑margin recovery]
\label{def:margin-agg}
A parameter choice \(\big(\{U_k,V_k\}_{k=1}^h,\tau\big)\) \emph{recovers}
a graph \(G=(V,E)\) with margin \(\gamma>0\) if, for every ordered pair
\((u,v)\) with \(u\neq v\),
\[
(u,v)\in E \Rightarrow S(u,v) \ge \tau+\gamma,
\]
\[
(u,v)\notin E \Rightarrow S(u,v) \le \tau-\gamma,
\]
with \(S\) given by \eqref{eq:agg-score}. 
\end{definition}

\begin{theorem}[Description‑length lower bound for QK]
\label{thm:dl-agg}
Fix \(m\in\mathbb{N}\) and \(m'\in\{0,\dots,m(m\!-\!1)\}\).
Suppose a single self‑attention QK channel with total key dimension
\(D_K=\sum_{k=1}^h d_k\) and embeddings \(\mathbf{x}_v, v \in V\) can recover every graph in
\(\mathcal{G}_{m,m'}\) with margin \(\gamma\in(0,1)\) on contexts of any length $\ell \geq 2$.  Further, assume that the recovery is achievable with parameters and embeddings whose norms are bounded by a universal constant independent of $m$.
Then there exists a constant \(c(\gamma)>0\) such that
\begin{equation}
\label{eq:dl-lower-agg}
d_{\text{model}}\,D_K
\ \ \ge\ \ c(\gamma)\,\log \binom{m(m-1)}{m'}\ -\ O(1),
\end{equation}
or equivalently
\begin{equation}
\label{eq:entropy-form-agg}
D_K \ \ =\ \ \Omega\!\Bigg(\frac{\log \binom{m(m-1)}{m'}}{d_{\text{model}}}\Bigg)
\ =\ \Omega\!\Big(\tfrac{m'}{d_{\text{model}}}\log\tfrac{m^2}{m'}\Big).
\end{equation}
\end{theorem}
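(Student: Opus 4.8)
The plan is to run a metric-entropy (covering-number) argument that upgrades the bit-budget warm-up to real-valued parameters, using the constant margin $\gamma$ to absorb the effect of finite-precision rounding. First I would fix the embeddings $\{\mathbf{x}_v\}$ (bounded norm by hypothesis) and think of each graph's ``certificate'' as the tuple of parameters $\theta=(\{W_Q^{(k)},W_K^{(k)}\}_{k=1}^h,\tau)$, which lives in a Euclidean ball of radius $R=O(1)$ in dimension $\dim(\theta)=2\,d_{\text{model}}D_K+1$. The score $S(u,v)$ realized on any fixed pair is a fixed bilinear-then-sum function of $\theta$ (quadratic in the entries of $\theta$), so with bounded parameter and embedding norms it is Lipschitz in $\theta$ on that ball with a Lipschitz constant $L=O(1)$ that does not depend on $m$. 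Consequently, if I take an $\varepsilon$-net $\mathcal N_\varepsilon$ of the ball with $\varepsilon = \gamma/(2L)$, then for any $\theta$ recovering $G$ with margin $\gamma$, the nearest net point $\theta'$ still satisfies $(u,v)\in E\Rightarrow S_{\theta'}(u,v)\ge\tau+\gamma/2$ and $(u,v)\notin E\Rightarrow S_{\theta'}(u,v)\le\tau-\gamma/2$ — in particular $\theta'$ still produces the \emph{correct} edge labeling of all $N=m(m-1)$ ordered pairs (evaluated on length-$2$ contexts, which by hypothesis suffices). So the map $G\mapsto\theta'(G)$ from $\mathcal G_{m,m'}$ into $\mathcal N_\varepsilon$ is injective.

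Next I would plug in the standard volumetric bound on the covering number of a ball, $|\mathcal N_\varepsilon|\le (3R/\varepsilon)^{\dim(\theta)} = (6RL/\gamma)^{2d_{\text{model}}D_K+1}$, and combine it with injectivity to get
\[
\binom{m(m-1)}{m'}\ \le\ |\mathcal G_{m,m'}|\ \le\ |\mathcal N_\varepsilon|\ \le\ \Big(\tfrac{6RL}{\gamma}\Big)^{2d_{\text{model}}D_K+1}.
\]
Taking logarithms, and using that $R,L$ are universal constants so that $\log(6RL/\gamma)=:\kappa(\gamma)=O(1)$ is a positive constant depending only on $\gamma$, yields
\[
(2 d_{\text{model}} D_K + 1)\,\kappa(\gamma)\ \ge\ \log\binom{m(m-1)}{m'},
\]
which rearranges to $d_{\text{model}}D_K\ge c(\gamma)\log\binom{m(m-1)}{m'}-O(1)$ with $c(\gamma)=1/(2\kappa(\gamma))$, i.e.\ \eqref{eq:dl-lower-agg}. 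The equivalent form \eqref{eq:entropy-form-agg} then follows from the elementary estimate $\log\binom{N}{m'}=\Theta\!\big(m'\log(N/m')\big)$ with $N=m(m-1)=\Theta(m^2)$, exactly as in the warm-up.

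The main obstacle — and the only place that needs genuine care — is the Lipschitz bound on $\theta\mapsto S_\theta(u,v)$ with a constant independent of $m$. The score on a pair is $S(u,v)=\max_k \mathbf{x}_u^\top W_Q^{(k)}(W_K^{(k)})^\top\mathbf{x}_v$ (or the log-sum-exp variant), which is a maximum of bilinear forms in the blocks of $\theta$; its gradient in $\theta$ has entries bounded by products like $\|\mathbf{x}_u\|\,\|W_K^{(k)}\mathbf{x}_v\|$, and controlling these uniformly is exactly where the hypothesis ``parameters and embeddings have norms bounded by a universal constant'' is used. I would state this as a short lemma: on the ball $\{\|\theta\|\le R\}$ with $\max_v\|\mathbf{x}_v\|\le R'$, the function $\theta\mapsto S_\theta(u,v)$ is $L$-Lipschitz with $L=L(R,R')=O(1)$; the proof is a routine product-rule / triangle-inequality calculation (the $\max$ over heads is $1$-Lipschitz in the vector of per-head scores, and each per-head score is smooth with bounded gradient on the ball), so I would not grind through it in the main text. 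One subtlety worth a sentence: the net must be taken in the \emph{joint} $(W_Q,W_K,\tau)$ space, and since $\tau$ enters the decision additively it is handled by the same $\varepsilon$-net (perturbing $\tau$ by $\le\varepsilon$ shifts the margin by $\le\varepsilon\le\gamma/2$). Everything else is bookkeeping, and the context-length-$\ell\ge2$ point is immediate because length-$2$ contexts already expose every ordered pair $(u,v)$, so recovering all of $\mathcal G_{m,m'}$ on $\ell\ge2$ contexts is at least as hard as distinguishing their edge sets pairwise.
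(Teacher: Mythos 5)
Your proposal is correct and follows essentially the same metric-entropy argument as the paper: reduce to length-2 contexts, establish an $O(1)$ Lipschitz bound for the per-pair score over the bounded parameter ball (the paper phrases this via the factorization $A_k=U_kV_k^\top$ and a WLOG rescaling to unit Frobenius norm, but this is the same use of the bounded-norms hypothesis as your direct Lipschitz lemma), take a $\gamma/\Theta(1)$-net so that each net point determines a unique edge labeling, and compare the volumetric covering bound $(\Theta(1)/\gamma)^{2d_{\text{model}}D_K+1}$ against $\binom{m(m-1)}{m'}$. The parameter count, the injectivity step, and the final rearrangement all match the paper's proof.
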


\begin{proof}
We first point out that we can focus only on length‑2 contexts.  Uniform correctness for RGR requires that, for \emph{every} ordered pair \((u,v)\in V\times V\), the decision ``\((u,v)\in E\)?'' is the same in every context containing \(u\) and \(v\). In particular it must be correct in the length‑2 context \(\mathcal C=(u,v)\). Hence any lower bound proved using only length‑2 contexts applies to the full problem.  

For head \(k\), write the per‑pair score on a length‑2 context as
\[
s_k(u,v)\;=\;\mathbf x_u^\top A_k\,\mathbf x_v,
\]
\[A_k \;=\; W_Q^{(k)}W_K^{(k)\top},
\quad\mathrm{rank}(A_k)\le d_k.
\]
Define the final score $S(u,v)=\max_ks_k(u,v)$, and decide
\begin{equation}
\label{eq:agg-score}
(u,v)\in E\;\Leftrightarrow\; S(u,v)>\tau.
\end{equation}

We factor each \(A_k=U_kV_k^\top\) with \(U_k,V_k\in\mathbb R^{d_{\text{model}}\times d_k}\),
and concatenate \(U=[U_1|\cdots|U_h]\in\mathbb R^{d_{\text{model}}\times D_K}\),
\(V=[V_1|\cdots|V_h]\in\mathbb R^{d_{\text{model}}\times D_K}\).
The following normalization assumption is without loss of generality by scale
homogeneity of the decision rule (simultaneously scaling \((U,V)\) and \(\tau\)), and our assumption of bounded norms of our embeddings and parameters:

\begin{equation}
\label{eq:scale-agg}
\begin{split}
    & \|U\|_F \le 1, \quad \|V\|_F \le 1, \\
    & |\tau| \le 1, \quad \|\mathbf x_v\|_2 \le 1 \quad \forall v\in V.
\end{split}
\end{equation}

Note that without the bounded norms assumption, this normalization step could introduce a dependence on $m$ into the margin $\gamma$, which in turn would carry into the final lower bound.

\begin{lemma}[Lipschitz property of the Decision Function]
\label{lem:lipschitz-agg}
Under the normalization assumptions in \eqref{eq:scale-agg}, for any two parameter
settings \((U,V,\tau)\) and \((\tilde U,\tilde V,\tilde\tau)\), the following holds for the decision function:
\begin{equation}
\label{eq:lipschitz-agg}
\begin{split}
    \big| (S(u,v)-\tau) - (\tilde{S}(u,v)-\tilde{\tau}) \big| \\
    \le \|U-\tilde U\|_F+\|V-\tilde V\|_F + |\tau-\tilde\tau|.
\end{split}
\end{equation}
\end{lemma}

\begin{proof}[Sketch]
For each head, using \(\|\mathbf x_u\|_2,\|\mathbf x_v\|_2\le 1\) and
\(\|U_k\|_F,\|V_k\|_F\le 1\) (implied by \eqref{eq:scale-agg}),
\[
\begin{aligned}
\big|s_k-\tilde s_k\big|
&= \big|\mathbf x_u^\top(U_kV_k^\top-\tilde U_k\tilde V_k^\top)\mathbf x_v\big| \\
&\le \|U_k-\tilde U_k\|_F+\|V_k-\tilde V_k\|_F.
\end{aligned}
\]
With our max aggregation rule, we have
\(\big|S-\tilde S\big|\le \max_k|s_k-\tilde s_k|
\le \max_k\|U_k-\tilde U_k\|_F + \max_k\|V_k-\tilde V_k\|_F
\le \|U-\tilde U\|_F+\|V-\tilde V\|_F\).
Finally, \(|\tau-\tilde\tau|\) adds linearly.
\end{proof}

We also note that the same Lemma holds (using a similar proof) if we instead use either a log-sum-exp aggregation rule or a summation aggregation rule: our lower bounds hold for those cases as well.

We also note that the same covering-number argument applies to other common aggregations (including $\sum_k$ and log-sum-exp across heads), and specifically to our more standard softmax model in which each head applies a softmax over the context and we sum the resulting attention weights across heads before thresholding; see the paragraph “Softmax attention” below.

From our normalization assumption (\ref{eq:scale-agg}), every parameterization lies in the compact ball
\[
\mathbb B
\ :=\ \big\{(U,V,\tau):\ \|U\|_F\le 1,\ \|V\|_F\le 1,\ |\tau|\le 1\big\},
\]
whose radius is at most 1.  Therefore, under the normalization \eqref{eq:scale-agg} and Lemma~\ref{lem:lipschitz-agg},
the parameter set admits an
\(\varepsilon\)-net (in Frobenius metric for \(U,V\) and absolute value for
\(\tau\)) of radius \(\varepsilon=\gamma/4\) and size at most
\[
N_{\mathrm{cov}}(\varepsilon)\ \le\
\Big(\tfrac{C}{\varepsilon}\Big)^{\,2\,d_{\text{model}}D_K+1}
\ =\ \Big(\tfrac{C'}{\gamma}\Big)^{\,2\,d_{\text{model}}D_K+1}
\]
for absolute constants \(C,C'>0\).
Each net point induces a \emph{unique} labeling of the
\(N=m(m-1)\) ordered pairs by the \(\gamma\)‑margin, because any perturbation
of radius \(\varepsilon\) preserves all signs by \eqref{eq:lipschitz-agg}.
Therefore at most \(N_{\mathrm{cov}}(\varepsilon)\) distinct edge sets can be
realized at margin \(\gamma\). Since the mechanism must realize all
\(\binom{N}{m'}\) edge sets of size \(m'\), we obtain
\(\binom{N}{m'} \le N_{\mathrm{cov}}(\varepsilon)\), which rearranges to
\eqref{eq:dl-lower-agg}–\eqref{eq:entropy-form-agg}.
\end{proof}

\paragraph{Softmax attention.}
Theorem~\ref{thm:dl-agg} also applies to the softmax version of our model.
Uniform recovery requires the decision for a fixed ordered pair $(i,j)$
to be identical in every context containing $i$ and $j$, so it suffices
to analyze the length-$2$ context $\mathcal C=(i,j)$.
For $\ell=2$ each head reduces to a two-way softmax:
\begin{equation}
\begin{aligned}
p_{ij}^{(k)}
&=
\frac{\exp(L_{ij}^{(k)})}
{\exp(L_{ii}^{(k)})+\exp(L_{ij}^{(k)})} \\
&=
\sigma\!\big(L_{ij}^{(k)}-L_{ii}^{(k)}\big),
\qquad
\|\sigma'\|_\infty \le \tfrac14 ,
\end{aligned}
\end{equation}
where $\sigma(t)=1/(1+e^{-t})$.
Using \eqref{eq:scale-agg} and $L_{ij}^{(k)}=S_{ij}^{(k)}/\sqrt{d_k}$,
the mean-value theorem and Cauchy--Schwarz give, for any two settings
$(U,V)$ and $(\tilde U,\tilde V)$,
\begin{equation}
\label{eq:softmax-lipschitz}
|A_{ij}-\tilde A_{ij}|
\;\le\;
\tfrac12\big(\|U-\tilde U\|_F+\|V-\tilde V\|_F\big).
\end{equation}
Thus the decision map remains $O(1)$-Lipschitz, so the same
$\varepsilon$-net/margin-stability argument as in the proof of
Theorem~\ref{thm:dl-agg} yields the identical lower bound
\eqref{eq:entropy-form-agg}.

\paragraph{Bounds for specific regimes.}
The theorem implies:
\begin{itemize}
    \item Exactly \(m' = m\) edges (e.g., permutation graphs):
    \(D_K=\Omega\!\big(\tfrac{m'\,\log m}{d_{\text{model}}}\big)\).
    \item Dense regime with \(m' =\Theta(m^2)\):
    \(D_K=\Omega\!\big(\tfrac{m'}{d_{\text{model}}}\big)\).
    \item Sparse regime with \(m' = O(m^{2-\epsilon})\) for some \(\epsilon>0\):
    \(D_K=\Omega\!\big(\tfrac{m'\,\log m}{d_{\text{model}}}\big)\).
\end{itemize}

\section{Limitations}
Our theoretical  guarantees on $D_K$ are not uniformly tight: for max-over-heads we asymptotically match the information-theoretic lower bound for permutation graphs and broad sparse regimes, whereas gaps remain for very dense and highly degree-imbalanced graphs.  For softmax our sufficient condition incurs an extra log factor whose necessity is unclear. Since a central motivation is to show the multi-head advantage does \emph{not} rely on “attending to multiple targets at once,” permutation graphs are the most diagnostic setting, and this is where our bounds and evidence are strongest. Our analysis assumes a controlled embedding geometry to quantify interference; we partially bridge this by validating the phenomenon in a full single-layer Transformer block with frozen GPT-2 embeddings, and we expect realistic embeddings to create \emph{more} (not less) interference via concentration. While learned embeddings could improve constants, our lower bounds (which still apply with learned embeddings) together with our near-matching upper bounds limit how much they can help. Empirically, we focus on controlled single-layer (largely synthetic) tasks, so we do not model multi-layer iterative routing, or end-to-end training dynamics.

\bibliographystyle{icml2026}
\bibliography{bibliography}
\appendix
\section{More details on our experiments}

\subsection{Experimental implementation details}
\label{app:exp-details}

This appendix reproduces the full experimental protocol (model specification, context sampling procedure, loss, optimization, early stopping, and evaluation criteria) described in Section~\ref{sec:experiments}. 

Our experiments instantiate the upper‑bound model from Section~\ref{models} as follows.  The parameters are two learned projections $W_Q,W_K\in\mathbb{R}^{d_{\text{model}}\times D_K}$ and a \emph{single global scalar threshold} $\tau$. We conceptualize $W_Q,W_K$ as $h$ head blocks of width $d_k=D_K/h$.  Scoring is done for a context matrix $X_{\mathcal C}\in\mathbb{R}^{\ell\times d_{\text{model}}}$, where head $k$ produces $S^{(k)}=Q^{(k)}(K^{(k)})^\top\in\mathbb{R}^{\ell\times\ell}$ with $Q^{(k)}=X_{\mathcal C}W_Q^{(k)}$ and $K^{(k)}=X_{\mathcal C}W_K^{(k)}$.
Scores are aggregated by elementwise max across heads: $S_{\max}=\max_k S^{(k)}$. At evaluation time we predict an edge $(p\!\to\!q)$ iff $S_{\max}(p,q)>\tau$.
This matches the theoretical mechanism exactly: there is \emph{no} $1/\sqrt{d_k}$ scaling, \emph{no} softmax, and \emph{no} value pathway—so capacity is purely key–query driven.

Our primary testbed is the family of permutation graphs $\{(V,E_\pi)\}$ with $|V|=m$ and $E_\pi=\{(i,\pi(i)):\,i\in V\}$, where $\pi$ is a uniformly random permutation. This realizes the $m'=m$ constructive case used in our upper bound and isolates the single‑target setting in which head specialization is most interpretable.
Consistent with our constructions, each node $i\in V$ has a fixed embedding $x_i\in\mathbb{R}^{d_{\text{model}}}$ drawn i.i.d.\ from $\mathcal N(0,I/d_{\text{model}})$ and then $L_2$-normalized. Embeddings are frozen throughout training and evaluation. This both aligns with the random (nearly orthogonal) embedding assumption in our proofs and makes $D_K$ the sole capacity knob.
An example is a context $\mathcal C$ of length $\ell$ (baseline $\ell=16$). To prevent degenerate class imbalance when $\pi(i)$ often falls outside $\mathcal C$, we enforce a target per‑context positive rate $\rho\in(0,1)$ as follows:
\begin{enumerate}
\item Sample a set $S\subseteq V$ of $\ell$ distinct nodes uniformly.
\item Sample $b\sim\mathrm{Binomial}(\ell,\rho)$ and choose $U\subseteq S$ with $|U|=b$.
\item For each $i\in U$, if $\pi(i)\notin S$ replace a random $j\in S\setminus\{i\}$ by $\pi(i)$, preserving $|S|=\ell$ and distinctness.
\end{enumerate}
All of our experiments use $\rho=0.5$. This preserves the RGR semantics (positives remain exactly those $(i,\pi(i))$ that land in the same context) while reducing the time required to train.

For each experiment we sample one permutation $\pi$ and one embedding matrix $X$ using a fixed seed.  We then generate a validation set of 500 contexts and a held‑out test set of 2{,}000 contexts with the same $(\ell,\rho)$ distribution.  We then draw training contexts on the fly from the same generator (one context per optimization step).

We train $W_Q,W_K,\tau$ by minimizing a weighted logistic loss on all ordered pairs within a context:
\[
z_{pq} \;=\; \alpha\big(S_{\max}(p,q)-\tau\big),
\]
\[
\begin{split}
\mathcal L \;=\; \tfrac{1}{|\mathcal C|^2}\!\sum_{p,q}\!\Big[&\underbrace{\mathrm{softplus}(-z_{pq})y_{pq}}_{\text{positive term}}\cdot\underbrace{\mathrm{pos_{weight}}}_{=\ell-1} \\
&+\; \mathrm{softplus}(z_{pq})(1-y_{pq})\Big],
\end{split}
\]

where $y_{pq}=1$ iff $(v_{i_p},v_{i_q})\in E$. The weighting $\mathrm{pos_{weight}}=\ell-1$ reflects that each source has at most one positive among $\ell$ candidates.

We use AdamW with learning rate $10^{-3}$ and weight decay $0$. Parameters are initialized with $W_Q,W_K\sim\mathcal N(0,1/\sqrt{d_{\text{model}}})$ and $\tau=0$. The logit sharpness is $\alpha=10$. We train for a number of steps with early stopping: every 500 steps we compute validation micro‑F1; if it exceeds $0.995$ for 5 consecutive checks, training halts.  The number of steps increases with problem complexity.
We use one context per step (contexts are small and independent), which keeps the implementation close to the theoretical algorithm and avoids artifacts from large mini‑batches.

All evaluation is conducted on the fixed held‑out test set of 2{,}000 contexts using the \emph{single learned} threshold $\tau$ shared across all contexts.  Our metric is \emph{Micro‑F1} over all ordered pairs across all test contexts.   This directly measures correctness of binary edge recognition per the RGR objective.  While the {\em stopping rule} uses validation F1 \(>0.995\), the {\em minimum \(D_K\)} we report below is extracted on the \emph{test} set using a looser criterion: the smallest \(D_K\) achieving mean micro--F1 \(\ge 0.99\) for at least one \(h\). We use \(0.99\) to keep a margin from the stopping rule. All tables and statements about minimum \(D_K\) are based on this \(0.99\) test criterion.

\subsection{Result details}
\label{app:exp-details-results}

We provide more detail on the results we found, additional details on the configurations used to find them, as well as the methodology we used for error bar determinination.

\subsubsection*{Methodology for Error Interval Construction}

\paragraph{Display CIs for F1 curves.}
Unless otherwise noted, error bars are 95\% $t$-intervals across seeds:
$\bar F_1 \pm t_{0.975,\,n-1}\,s/\sqrt{n}$, where $n$ is the number of runs and $s$ their sample standard deviation.
Intervals reflect training-run variability with a fixed test set.

\paragraph{Minimum key dimension $D_K^\star$.}
The error interval for the minimum total key dimension, $D_K^\star$, is designed to reflect the uncertainty in the F1 score. For any given model configuration, we determine a central estimate along with an optimistic lower bound and a conservative upper bound, all based on a required F1 score of at least 0.99.

Let the mean F1 score from a set of trials be $\bar{F_1}$, with its corresponding 95\% confidence interval being $[F_{1, \text{low}}, F_{1, \text{high}}]$. The three reported values for $D_K^\star$ are defined as follows:

\begin{itemize}
    \item \textbf{Central Estimate:} The primary value reported. It's the minimum $D_K$ found for which the \textbf{mean F1 score} meets the performance threshold ($\bar{F_1} \geq 0.99$).
    \item \textbf{Conservative Upper Bound:} This is the minimum $D_K$ for which the \textbf{lower bound of the F1 confidence interval} meets the threshold ($F_{1, \text{low}} \geq 0.99$). This stricter condition identifies the $D_K$ needed to be 95\% confident that the true performance is sufficient.
    \item \textbf{Optimistic Lower Bound:} This is the minimum $D_K$ for which the \textbf{upper bound of the F1 confidence interval} meets the threshold ($F_{1, \text{high}} \geq 0.99$). This looser condition identifies the $D_K$ for which it is merely plausible that the true performance is sufficient.
\end{itemize}

\paragraph{Optimal number of heads.}
Let $h^\star$ be the head count achieving $D_K^\star$ (ties broken by larger $\bar F_1$).
We form a candidate pool of head counts whose tested $D_K$ lies within $10\%$ of $D_K^\star$.
Each candidate is compared to $h^\star$ using a \emph{paired} two-sided $t$-test on per-seed F1; candidates with $p>0.05$ are labeled “not significantly different” and retained.\footnote{We do not interpret $p>0.05$ as proof of equivalence; it only indicates insufficient evidence of a difference at $\alpha=0.05$.}
The reported interval spans the minimum and maximum head counts retained.

\subsubsection*{Further Details on Results}

\begin{figure}[th]
  \centering
  \includegraphics[width=0.6\linewidth]{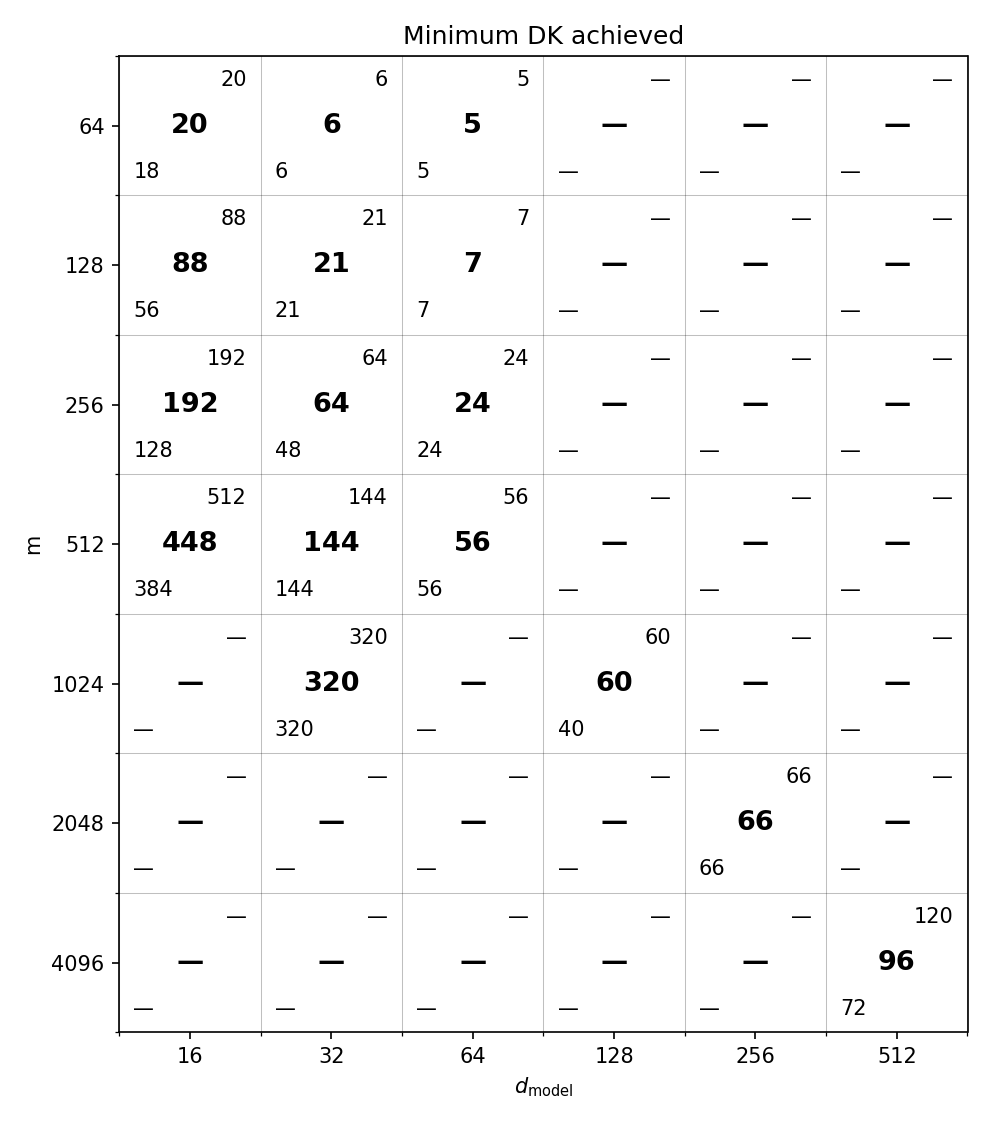}
  \caption{\textbf{Minimum total key dimension} $D_K^\star$.  Upper right and lower left numbers represent confidence range; methodology described in the text.}
  \label{fig:dk-min}
\end{figure}

Figure~\ref{fig:dk-min} lists $D_K^\star$, the minimum total key dimension, found for each configuration of $m$ and $d_{\text{model}}$ we tested.  We use our minimum key dimension $D_K^\star$ intervals methodology, with the upper right corner being the upper bound and the lower left corner being the lower bound.  The $d_K^\star$ (per head key size) used to achieve these $D_K^\star$s are shown in Table~\ref{tab:per-head-min}, for numbers in the main sweep.

\begin{table}[th]
\centering
\begin{tabular}{@{}c|ccc@{}}
\toprule
& \multicolumn{3}{c}{$d_{\text{model}}$} \\
$m$ & $16$ & $32$ & $64$ \\
\midrule
$64$  & $5$   & $6$   & $5$   \\
$128$ & $11$  & $21$  & $7$   \\
$256$ & $6$   & $16$  & $24$  \\
$512$ & $7$   & $9$   & $14$  \\
\bottomrule
\vspace{0.1 in} 
\end{tabular}
\caption{\textbf{Per-head key dimension} $d_k$ from the main sweep.}
\label{tab:per-head-min}
\end{table}

These are found using the training step upper bounds shown in Table~\ref{tab:training-cutoffs}, where we increase the steps as the problem size and complexity increases.

\begin{table}[t]
\centering
\begin{tabular}{@{}c c c@{}}
\toprule
$m$ & $d_{\text{model}}$ & Training step cutoff \\
\midrule
64   & 16, 32, 64 & 20{,}000 \\
128  & 16, 32, 64 & 20{,}000 \\
256  & 32, 64     & 20{,}000 \\
256  & 16         & 30{,}000 \\
512  & 32, 64     & 20{,}000 \\
512  & 16         & 80{,}000 \\
1024 & 128        & 80{,}000 \\
2048 & 256        & 80{,}000 \\
4096 & 512        & 200{,}000 \\
\bottomrule \\
\end{tabular}
\caption{\textbf{Training step cutoffs by configuration.}  
Default cutoff is 20,000 steps, with extended budgets for larger problem sizes.}
\label{tab:training-cutoffs}
\end{table}

Also, we provide additional examples of our findings from the main sweep of configurations in Fig.~\ref{fig:example-curves}.

\begin{figure}[!th]
  \centering
  \includegraphics[width=0.49\linewidth]{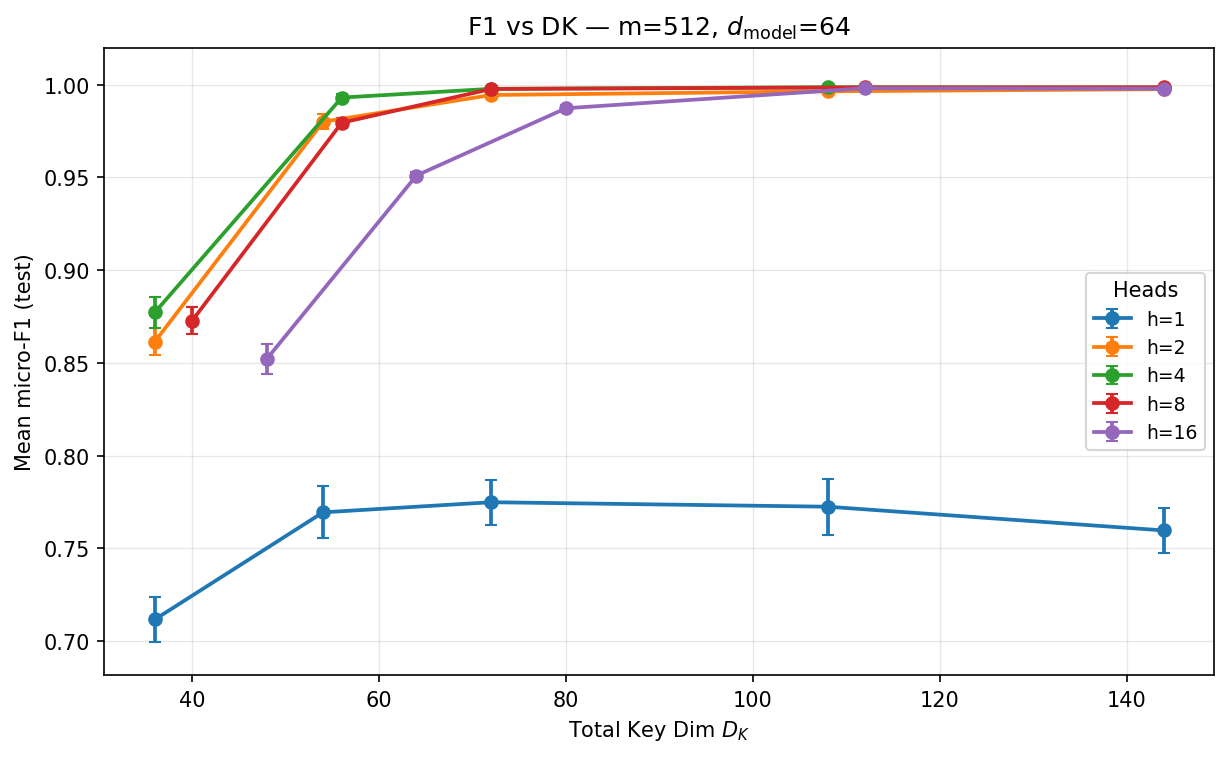}
  \includegraphics[width=0.49\linewidth]{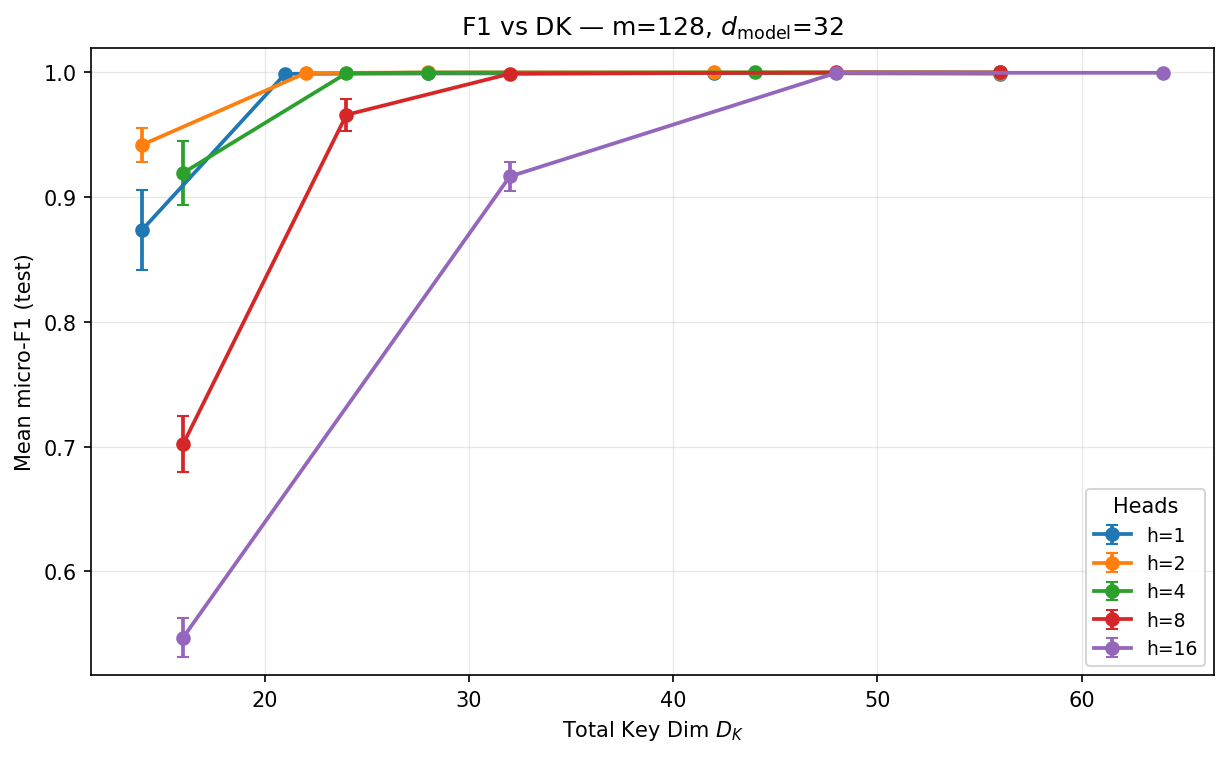}
  \includegraphics[width=0.49\linewidth]{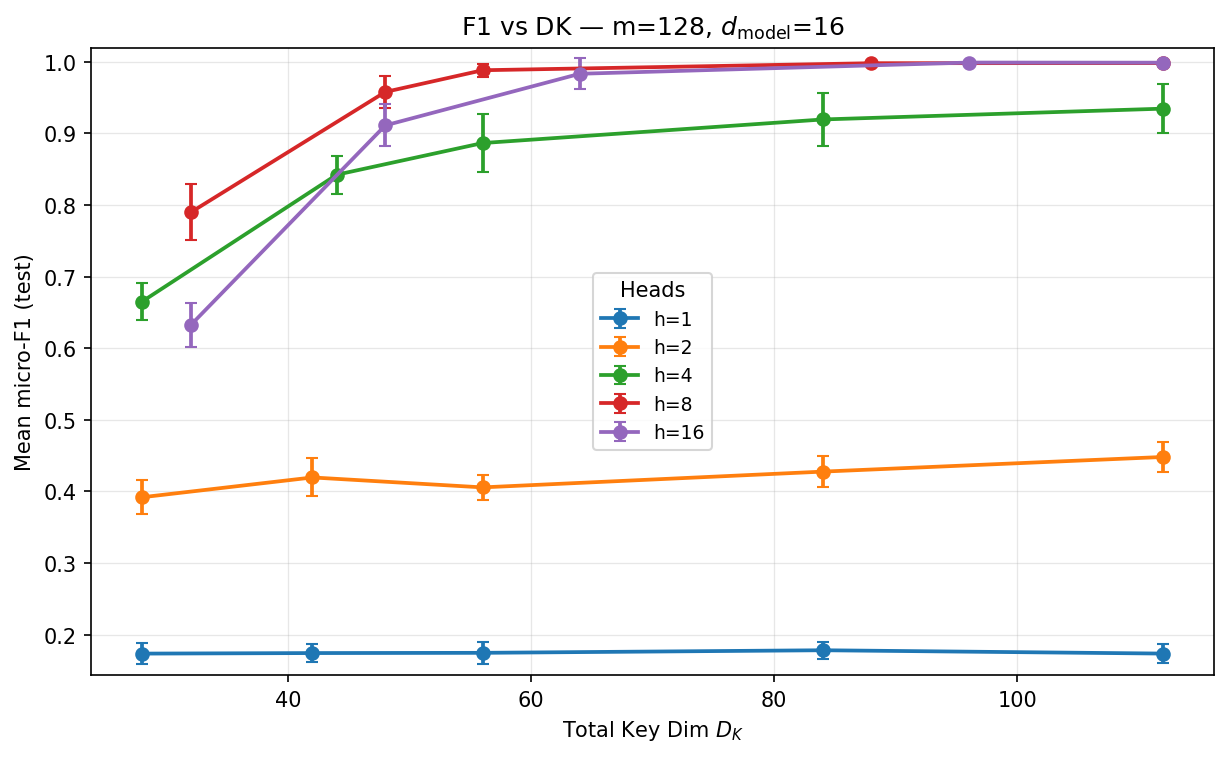}
  \includegraphics[width=0.49\linewidth]{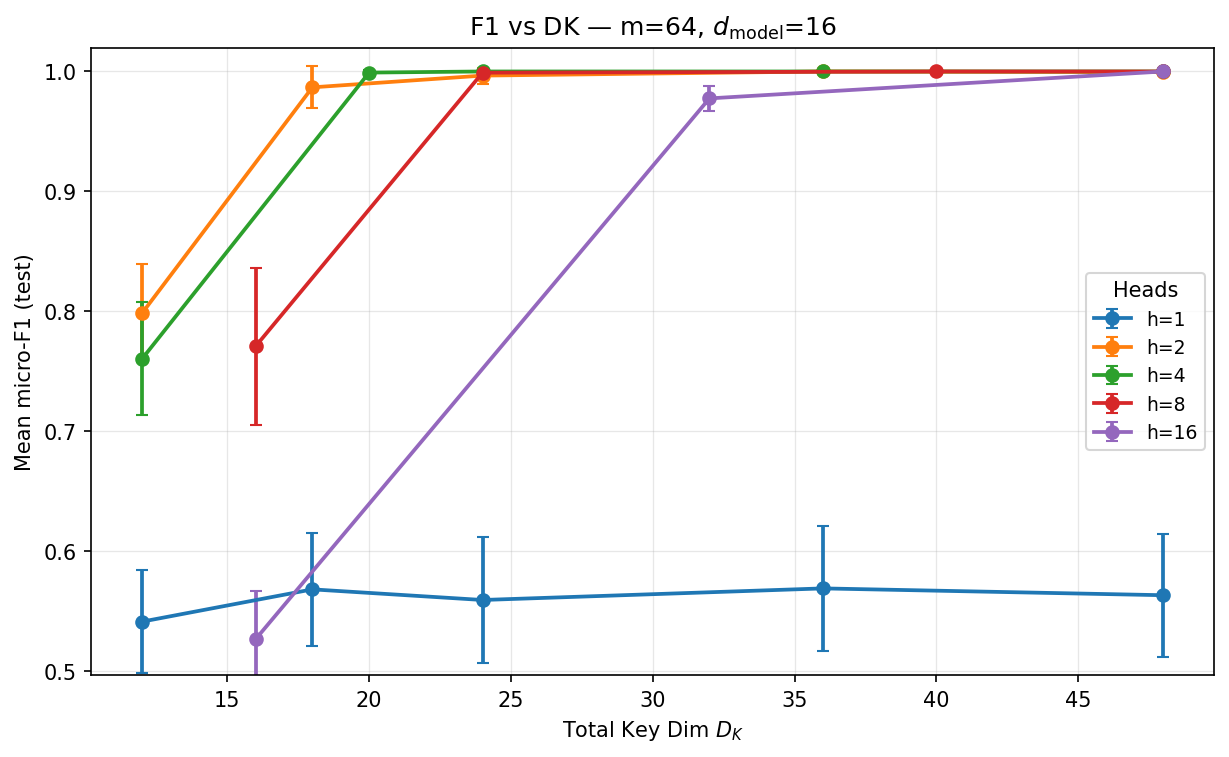}
  \caption{\textbf{Example F1–$D_K$ curves.}  
  Each panel fixes $(m,d_{\text{model}})$ and sweeps heads $h$ and $D_K=h\,d_k$; markers show mean test micro–F1 and error bars are 95\% CIs over 10 runs.
  The transition from failure to success occurs at a configuration-specific $D_K$ threshold which is dependent on $h$.}
  \label{fig:example-curves}
\end{figure}

In Fig.~\ref{fig:heads-second}, we plot the number of heads used in the optimal found configuration versus the compression $m/d_{\text{model}}$.

\begin{figure}[ht]
    \centering
    \includegraphics[width=0.65\linewidth]{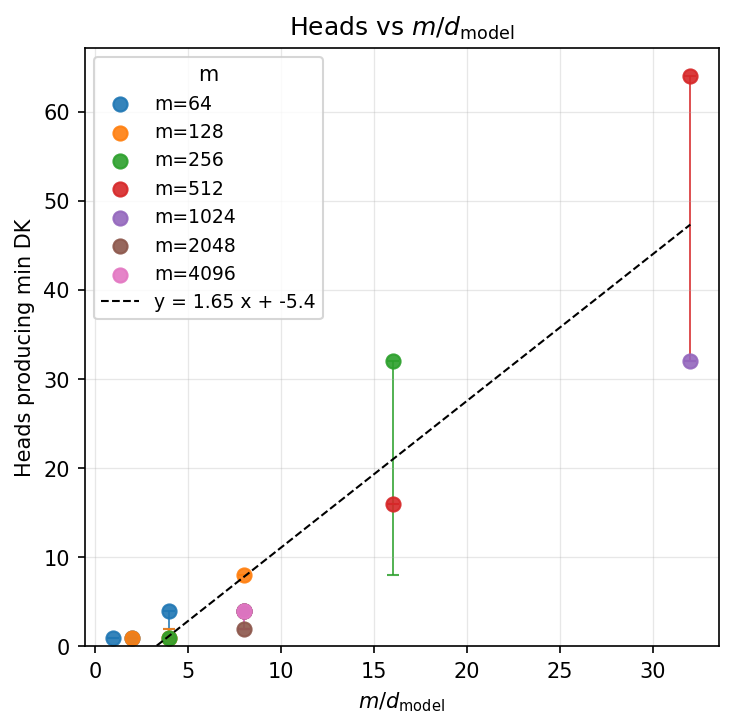}
    \caption{The number of heads needed grows approximately linearly with compression; the dashed line shows a least‑squares fit.  See text for a description of the error bars.  We do not tie the line to the origin, since heads are clipped at $h\geq 1$.}
    \label{fig:heads-second}
\end{figure}

\subsection{Experiments for Denser graphs}
\label{sec:dense}

We next consider denser graphs in the same max-over-heads model variant. Our theoretical results demonstrate tight or nearly tight asymptotic bounds for a broad class of denser graphs; this section seeks to validate and refine those results through experiments in a setting that mirrors our theoretical model. In particular, we examine $r$ regular graphs, and study the impact of scaling $r$.

Unless noted, the architecture, training loop, optimizer, early‑stopping protocol, evaluation metric, and data split are identical to the permutation‑graph experiments. The only substantive differences are:

\begin{itemize}
\item \textbf{Task / graph family.} We replace the permutation graph (out‑degree $=1$) with a directed $r$‑regular graph on $m$ nodes (every node has exactly $r$ out‑neighbors and $r$ in‑neighbors; no self‑loops, no multi‑edges). Concretely, the graph is sampled as the union of $r$ random perfect matchings (``layers''), each built by a randomized greedy permutation under ``forbidden'' constraints to avoid self‑loops and duplicates; we verify that all in‑degrees equal $r$.

\item \textbf{Labeling.} For a context $X_{\mathcal C}$, an ordered pair $(p\!\to\!q)$ is positive iff $q\in N^+(p)$ and both $p,q\in\mathcal C$. Thus each row can contain up to $r$ positives (vs.\ at most one in the permutation case). Prediction still uses the per‑head scores $S^{(k)}$, an elementwise max across heads $S_{\max}$, and a single learned global threshold $\tau$.

\item \textbf{Context construction (fixed target‑in‑context rate).} We retain the same target‑in‑context rate $\rho$, but now implement it over sources: we draw $b\!\sim\!\mathrm{Binom}(\ell,\rho)$ sources from the sampled context and ensure that for each selected source $u$ at least one out‑neighbor of $u$ is included in the context. When $r{=}1$ this reduces to the permutation procedure.

\item \textbf{Class‑imbalance weighting.} Because the number of positives per context grows with $r$, we replace the fixed positive weight $(\ell{-}1)$ used for permutations with a batch‑adaptive weight
\begin{equation}
w_{+} \;=\; \frac{\#\text{negatives}}{\#\text{positives}}
\end{equation}
computed per context inside the same logistic loss on logits $\alpha\!\left(S_{\max}-\tau\right)$ (with the same $\alpha$).
\end{itemize}

Everything else (frozen, normalized Gaussian node embeddings; idealized attention with no softmax/value path; elementwise max across heads; single learned $\tau$; AdamW with the same hyperparameters; validation/test protocol; and the micro‑F1 reporting criterion) is as in the baseline/core permutation‑graph experiments.  We also here test the specific case of $m = 128$ and $d_{\text{model}}=32$ and scale $r$ from 1 to 32.  Given the consistency of our results across different values of $m$ and $d_{\text{model}}$, we believe that this serves as a good proxy for more general behavior.

% Findings & impact for denser graphs
\paragraph{Findings and impact (denser graphs).}
Moving from permutation graphs (\(r{=}1\)) to denser, \(r\)-regular graphs preserves the qualitative behavior of the capacity transition and sharpens several quantitative predictions about how the key--query budget should scale.

\paragraph{Sharp capacity transition persists with density.}
For fixed \(m\) and \(d_{\text{model}}\), micro-F1 as a function of the total key budget \(D_K\) remains almost step-like: performance stays low until a small window in \(D_K\) where it rapidly approaches perfect recovery, and multi-head models cross the threshold at smaller \(D_K\) than single-head models. This is visible for \(r{=}2\) and \(r{=}4\) in the F1--vs--\(D_K\) sweeps in Figure~\ref{fig:dense-f1-vs-dk}: 1--2 heads plateau well below perfect accuracy, whereas 4--8 heads exhibit a sudden jump to micro-F1 \(\approx 1\). Increasing graph density does not blur the phase transition; it simply shifts it to the right, as expected from the larger number of edges that must be separated.

\paragraph{Capacity is governed by edges, not vertices.}
Normalizing by the \emph{edge} budget
\begin{equation}
x \;=\; \frac{m'}{d_{\text{model}}}\log_2 m \;=\; \frac{r\,m}{d_{\text{model}}}\log_2 m,
\end{equation}
the empirically minimal \(D_K\) required for high accuracy collapses to a single linear trend across degrees \(r\in\{1,2,4,8,16\}\) (Figure~\ref{fig:dense-linear}). A single proportionality constant fits all densities:
\begin{equation}
D_K^\star \;\approx\; 0.46\,x,
\end{equation}
so, as in our theoretical results, the total key dimension tracks \(m'\) much more tightly than \(m\). Intuitively, the model’s \emph{where-to-attend} budget must scale with the number of \emph{encoded relationships}; increasing vertices without adding edges exerts far less pressure on \(D_K\).

\paragraph{Head count scales with edge density.}
The number of heads at the empirical threshold grows with graph density and is well predicted (up to a small constant factor) by the edge-normalized ratio \(m'/d_{\text{model}}\) (Figure~\ref{fig:heads-vs-k}). In our runs with \(m{=}128\) and \(d_{\text{model}}{=}32\), the head count that achieves the smallest passing \(D_K\) increases roughly linearly with \(r\) and stays close (within a factor of \(\sim\!2\)) to the theoretical target \(h^\star \propto m'/d_{\text{model}} = (r\,m)/d_{\text{model}}\). This reinforces a capacity-based rationale for multi-head attention: as more edges are superposed in the compressed embedding space, distributing the key--query budget across more, narrower heads reduces interference and lowers the required \(D_K\).

\paragraph{Relation to bounds.}
For denser graphs (Figure \ref{fig:dense-linear}, $r=16$, the empirical thresholds lie slightly below our constructive upper-bound designs—i.e., we need slightly less total key dimension than the construction would guarantee—suggesting either slack in the analysis or other factors the model exploits during training. At the same time, as \(r\) grows the gap between the constructive upper bound and the information-theoretic lower bound grows. Together, these trends indicate the true optimum is closer to the lower-bound scaling and that there is room to tighten (or redesign) dense-graph constructions.

\paragraph{Takeaway.}
Across densities, capacity remains a threshold phenomenon; the threshold is controlled by the \emph{number of edges} \(m'\) rather than the vertices \(m\); and the optimal head count scales with \(m'/d_{\text{model}}\). Practically, when budgeting attention for relational workloads, counting \emph{relationships} is a more reliable guide than counting vocabulary size. The denser-graph experiments therefore extend the capacity law beyond permutations and provide further evidence for a principled multi-head advantage that grows with graph density.

% -------------------- Figures --------------------
\begin{figure}[!t]
\centering
\includegraphics[width=0.49\linewidth]{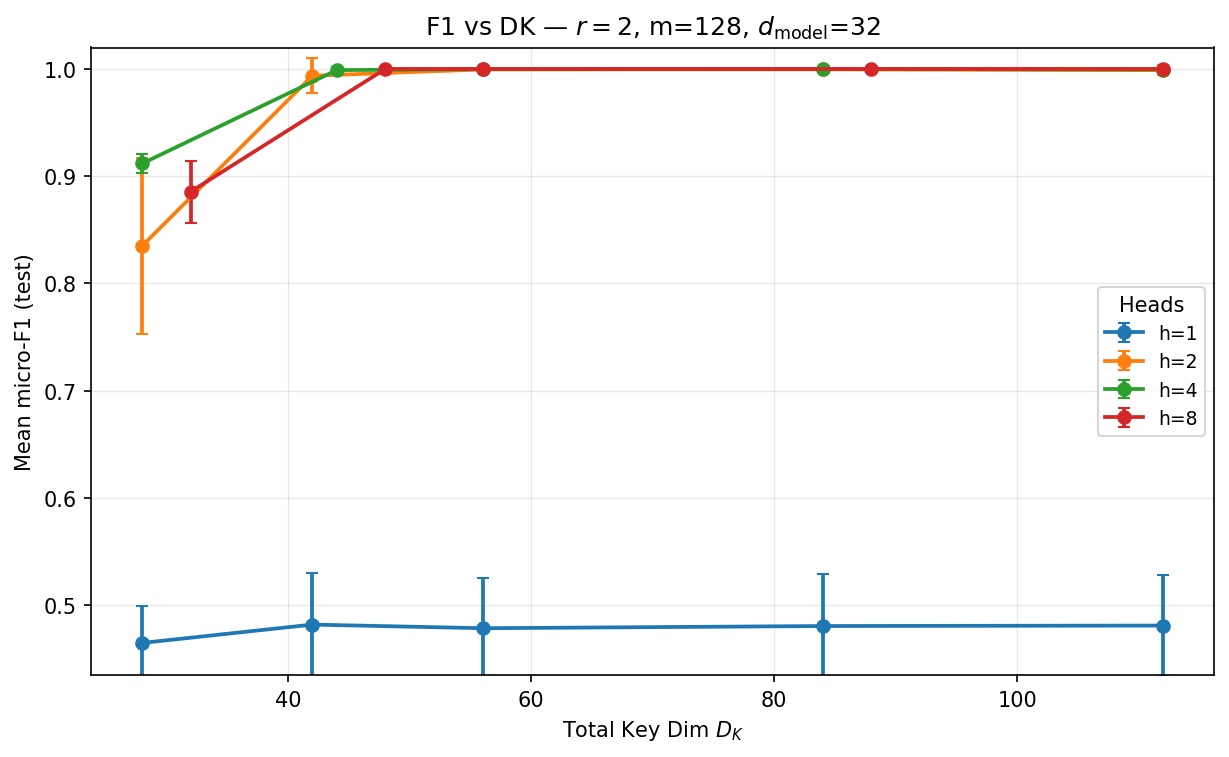}
\includegraphics[width=0.49\linewidth]{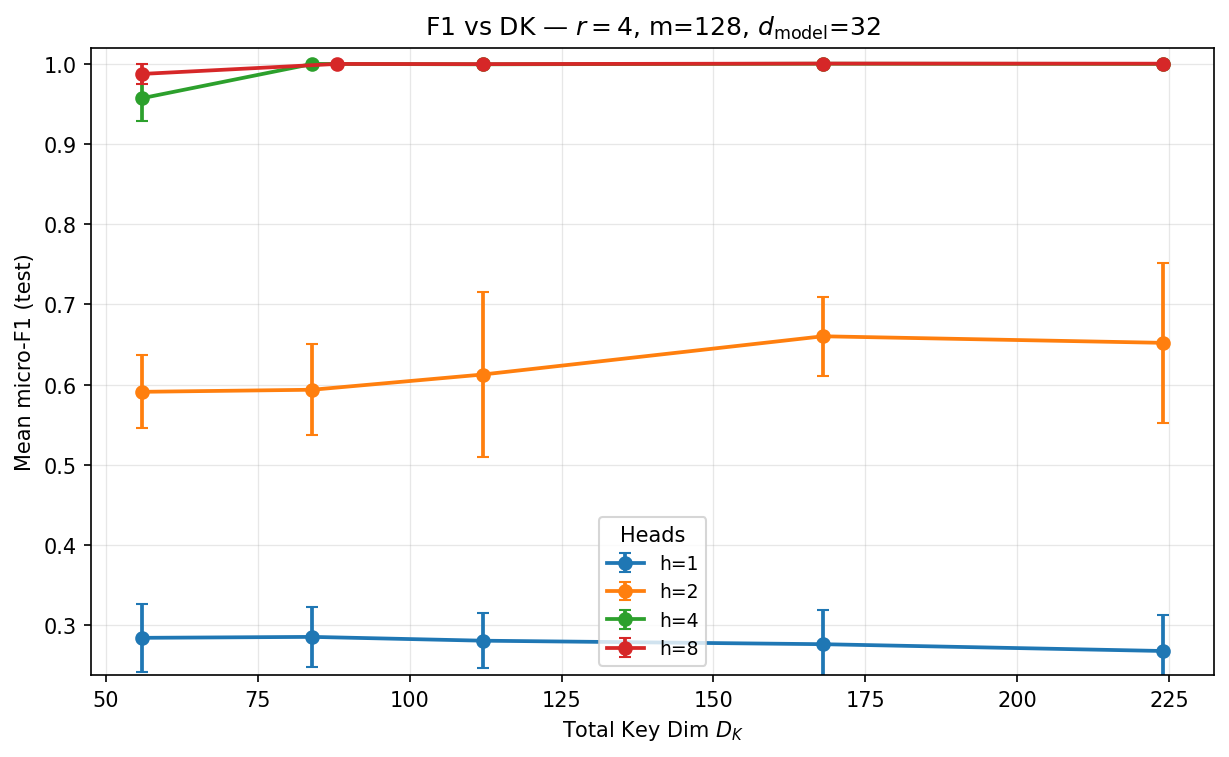}
\caption{\textbf{Sharp transition persists and shifts right for \(r\in\{2,4\}\).} Mean test micro-F1 vs.\ total key dimension \(D_K\) for \(m{=}128\), \(d_{\text{model}}{=}32\). More heads reach perfect recovery at smaller \(D_K\); for $r=2$, a single head plateaus well below 1.0, while for $r=4$ 2 heads has the same property.  More heads exhibit a sudden jump to \(\approx\!1.0\).}
\label{fig:dense-f1-vs-dk}
\end{figure}

\begin{figure}[ht!]
  \centering
       \includegraphics[width=0.65\textwidth]{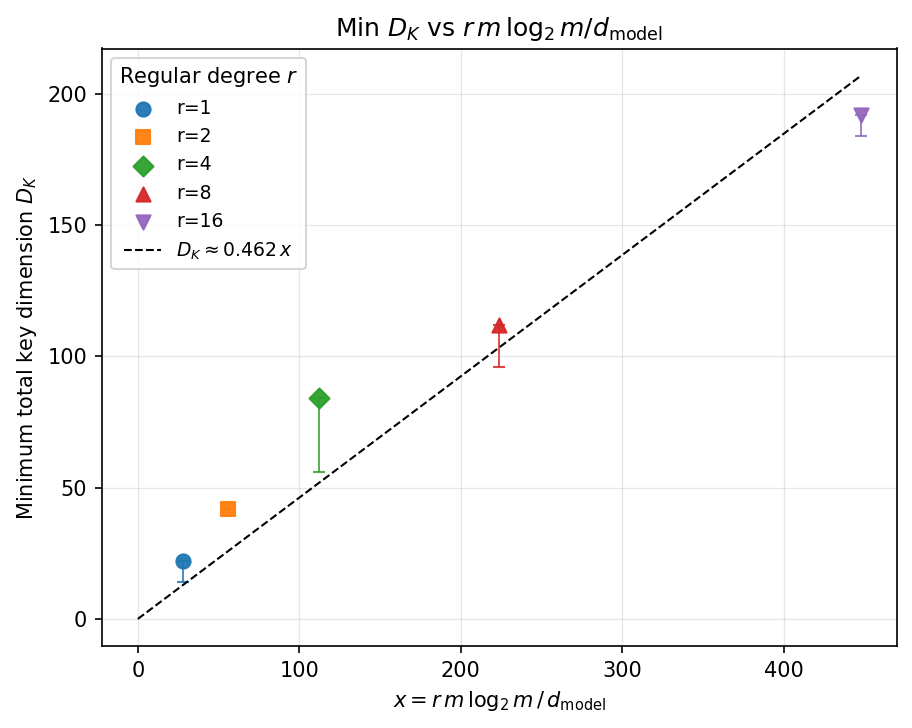}
         \caption{\textbf{Capacity scales with \(m'\).} Minimal passing \(D_K\) versus \(x = (r\,m/d_{\text{model}})\log_2 m\) collapses across degrees \(r\). The dashed line \(D_K \approx 0.46\,x\) is a one-parameter fit, highlighting that the number of \emph{edges} \(m'\) predicts the threshold more accurately than the vocabulary size \(m\).  Error bars are calculated using the same methodology as described in Appendix \ref{app:exp-details-results}.}
         \label{fig:dense-linear}
\end{figure}
\begin{figure}[ht!]
  \centering
       \includegraphics[width=0.65\textwidth]{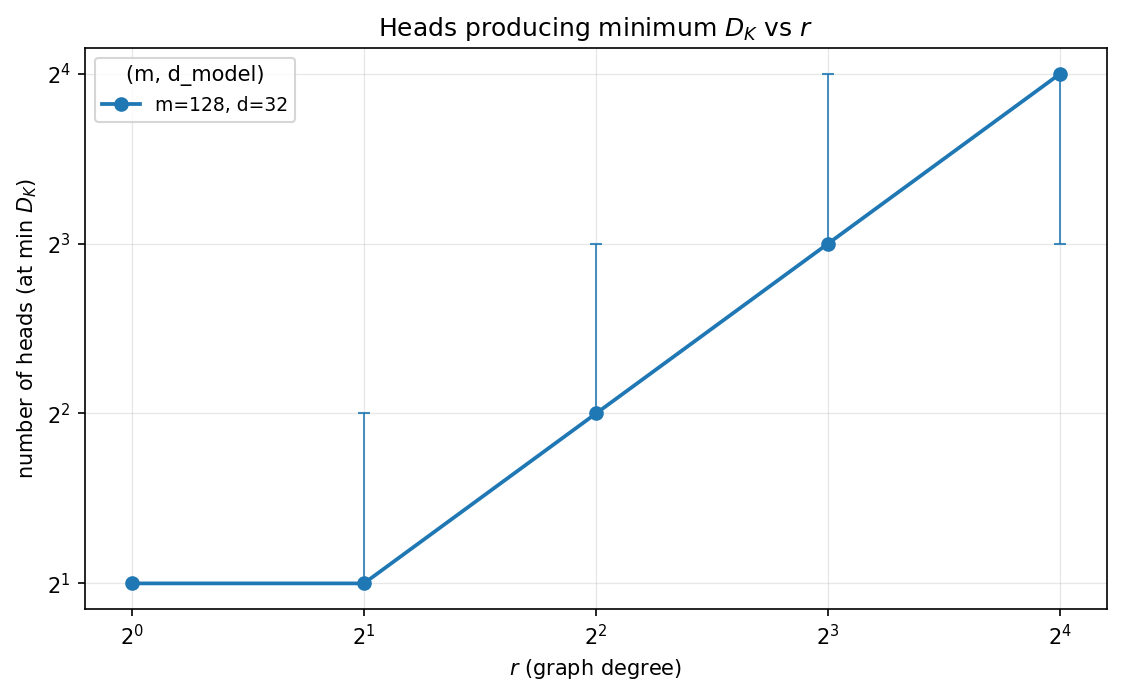}
         \caption{\textbf{Optimal head count rises with density.} Number of heads that achieves the smallest passing \(D_K\) versus graph degree \(r\) (here \(m{=}128\), \(d_{\text{model}}{=}32\)). The trend tracks \(m'/d_{\text{model}} = (r\,m)/d_{\text{model}}\) up to a constant factor and shows increasing benefit from additional heads as the graph becomes denser. Error bars indicate neighboring head counts that tied for the minimum within the measurement resolution (using the same methodology as described in Appendix \ref{app:exp-details-results}).}
    \label{fig:heads-vs-k}
\end{figure}

\section{Further Details on Our Explicit Constructions}
\label{sec:upper-details}

We here provide additional details on our explicit constructions from Section~\ref{sec:upper}.  The computational complexity of all the constructive algorithms we describe in this paper is $O(m d_{model} \log m)$.  This follows fairly directly from the constructions themselves, although a small bit of care is needed to multiply the matrices together in the right order.  This is significantly lower than the computation required to train an attention mechanism using gradient descent.
We start with the easiest case - permutation graphs with no embedding.  This result is subsummed by the second construction, so is only included as a warm up for the more general case.

\subsection{Construction I: Permutation Graphs with One‑Hot Embeddings}

\paragraph{Setup.}
We start with the following two assumptions: (i) $G$ is a permutation on $m$ items, defined by a function $\pi: V \to V$, where the edges are $E = \{(v_i, v_{\pi(i)}) \mid v_i \in V\}$, and thus $m' = m$.  (ii) Node $v_i$ is represented by the one-hot vector $\mathbf{x}_i = \mathbf{e}_i \in \mathbb{R}^m$, setting the model dimension $d_{\text{model}} = m$.

With these assumptions, a single attention head ($h=1$) suffices, so the total key dimension is $D_K = d_k$. Our goal is to define $W_K$, $W_Q$, and a global threshold $\tau$ such that the score
$S_{ij} = (\mathbf{x}_i W_Q)\cdot(\mathbf{x}_j W_K)$
exceeds $\tau$ iff $j=\pi(i)$.  Our construction works for all vertices $V$ of the graph $G$, independent of the current context in our model; we formalize how this applies to specific contexts below.

\paragraph{Algorithmic Construction (one-hot case)}
The core idea is to assign each node $v_j$ a random "signature" via its key vector $\mathbf{k}_j$. The query vector $\mathbf{q}_i$ for $v_i$ is the signature of its target, $v_{\pi(i)}$. The dot product between vectors is maximized when the query signature matches the key signature.

\begin{algorithm}[ht]
\caption{Construction for Permutation Graphs with One-Hot Inputs}
\label{alg:onehot_construction}
\begin{algorithmic}[1]
\STATE \textbf{Input:} Graph $G=(V, E)$ defined by permutation $\pi$.

\STATE \textbf{Setup:} Choose a probability $p \in (0, 1/2)$, e.g., $p=1/4$, and dimension $d_k = C \log m$, for sufficiently large constant $C$.

\STATE \textbf{Construct Key Matrix:} Draw $W_K \in \mathbb{R}^{m\times d_k}$ with i.i.d. entries $(W_K)_{jl}\sim \text{Bernoulli}(p)$.
\STATE \hspace{\algorithmicindent} For each node $v_j$, the key is $\mathbf{k}_j = \mathbf{e}_j W_K$.
\STATE \textbf{Construct Query Matrix:} For each node $v_i$, set its query $\mathbf{q}_i = \mathbf{k}_{\pi(i)}$.
\STATE \hspace{\algorithmicindent} This is equivalent to setting the $i$-th row of $W_Q$ to be the $\pi(i)$-th row of $W_K$.
\STATE \textbf{Set Threshold:} $\tau = \frac{p+p^2}{2}\,d_k$.
\end{algorithmic}
\end{algorithm}

\begin{theorem}[Single-head recognition under one-hot inputs]
\label{thm:onehot}
Under the construction above, for $d_k = C\log m$ with $C$ sufficiently large (depending only on $p$), we have with probability at least $1-m^{-3}$ over the draw of $W_K$ that
\[
S_{i,\pi(i)}>\tau \quad\text{and}\quad S_{ij}<\tau \;\;\text{for all } i\in V,\, j\neq \pi(i).
\]
Hence a single attention head correctly identifies all edges of $G$.
\end{theorem}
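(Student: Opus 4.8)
The plan is to show that, after fixing the random key matrix $W_K$, the true-edge scores $S_{i,\pi(i)}$ and the false-edge scores $S_{ij}$ ($j\neq\pi(i)$) concentrate on opposite sides of $\tau=\tfrac{p+p^2}{2}d_k$, then union-bound over all $m^2$ ordered pairs. First I would compute the score distributions exactly. Since $\mathbf{x}_i=\mathbf{e}_i$, we have $\mathbf{q}_i=\mathbf{k}_{\pi(i)}$ (the $\pi(i)$-th row of $W_K$) and $\mathbf{k}_j$ (the $j$-th row of $W_K$), so
\[
S_{ij}=\mathbf{k}_{\pi(i)}\cdot\mathbf{k}_j=\sum_{\ell=1}^{d_k}(W_K)_{\pi(i),\ell}\,(W_K)_{j,\ell}.
\]
When $j=\pi(i)$ this is $\sum_\ell (W_K)_{\pi(i),\ell}^2=\sum_\ell (W_K)_{\pi(i),\ell}$ (Bernoulli entries are idempotent), which is $\mathrm{Binomial}(d_k,p)$ with mean $p\,d_k$. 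When $j\neq\pi(i)$, rows $\pi(i)$ and $j$ are independent, so each term is a product of two independent $\mathrm{Bernoulli}(p)$'s, i.e.\ $\mathrm{Bernoulli}(p^2)$, giving $S_{ij}\sim\mathrm{Binomial}(d_k,p^2)$ with mean $p^2 d_k$. The threshold $\tau$ is the midpoint $\tfrac{p+p^2}{2}d_k$ of the two means, and the gap between means is $(p-p^2)d_k=p(1-p)d_k=\Theta(d_k)$ since $p\in(0,1/2)$ is a fixed constant.

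Next I would apply a multiplicative Chernoff bound to each side. For the true edge, $\Pr[S_{i,\pi(i)}\le\tau]=\Pr[\mathrm{Binomial}(d_k,p)\le p\,d_k-\tfrac{p(1-p)}{2}d_k]$; the relative deviation is $\delta=\tfrac{1-p}{2}$, a fixed constant, so the Chernoff lower-tail bound gives a bound of the form $\exp(-c(p)\,d_k)$ for some $c(p)>0$. Similarly, for a false edge, $\Pr[S_{ij}\ge\tau]=\Pr[\mathrm{Binomial}(d_k,p^2)\ge p^2 d_k+\tfrac{p(1-p)}{2}d_k]$; the relative deviation here is $\delta'=\tfrac{1-p}{2p}$, which may exceed $1$ but is still a fixed constant, so the Chernoff upper-tail bound again yields $\exp(-c'(p)\,d_k)$ for some $c'(p)>0$. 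Taking $d_k=C\log m$ with $C\ge 4/\min\{c(p),c'(p)\}$ makes each individual failure probability at most $m^{-4}$. A union bound over the $m$ true edges and the at most $m^2$ false edges gives total failure probability at most $m^{-4}\cdot m + m^{-4}\cdot m^2\le 2m^{-2}\le m^{-3}$ for $m$ large enough (absorbing the constant into $C$ if needed); this is where the constraint on $C$ being "sufficiently large depending only on $p$" is used, and it also pins down the high-probability guarantee $1-m^{-3}$ stated in the theorem.

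The routine part is bookkeeping the Chernoff constants and verifying that the same $\mathbf{k}_{\pi(i)}$ vector appears in multiple scores $S_{ij}$ for different $j$ without breaking the union bound — it does not, since a union bound needs no independence across the events being combined. The one genuine subtlety, and the step I'd treat most carefully, is the dependence structure: the query row $\mathbf{k}_{\pi(i)}$ is correlated with the true-edge score $S_{i,\pi(i)}$ (same row) but independent of $\mathbf{k}_j$ for the false-edge scores; the clean split $S_{i,\pi(i)}\sim\mathrm{Binomial}(d_k,p)$ versus $S_{ij}\sim\mathrm{Binomial}(d_k,p^2)$ relies on exactly this, so I would state it explicitly. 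Once all $m^2$ events are controlled simultaneously w.h.p., the separation $S_{i,\pi(i)}>\tau>S_{ij}$ holds for all $i$ and all $j\neq\pi(i)$, which is precisely the claim; combined with Lemma~\ref{lem:context} this also yields correct recognition on every context, though that extension is not needed for the statement of Theorem~\ref{thm:onehot} itself.
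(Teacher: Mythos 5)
Your proposal takes essentially the same approach as the paper: identify the true-edge and false-edge scores as $\mathrm{Binomial}(d_k,p)$ and $\mathrm{Binomial}(d_k,p^2)$ respectively, place $\tau$ at the midpoint of the two means, apply multiplicative Chernoff bounds to each tail, and union-bound over all $O(m^2)$ pairs. One small arithmetic slip at the end: with each individual failure probability bounded by $m^{-4}$, the union over $m^2$ false-edge events already contributes $m^{2}\cdot m^{-4}=m^{-2}$, and $2m^{-2}\le m^{-3}$ is false for $m\ge 2$ (the inequality is reversed); you need $C$ slightly larger, e.g.\ $C\ge 5/\min\{c(p),c'(p)\}$ so each event fails with probability at most $m^{-5}$, which you effectively acknowledge with "absorbing the constant into $C$ if needed" — this is a bookkeeping correction, not a gap in the argument.
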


\begin{proof}
For $j=\pi(i)$,
\[
S_{i,\pi(i)} = \mathbf{k}_{\pi(i)}\cdot \mathbf{k}_{\pi(i)} \sim \text{Binomial}(d_k,p)
\]
with mean $\mu_1=d_k p$. For $j\neq\pi(i)$,
\[
S_{ij}=\mathbf{k}_{\pi(i)}\cdot \mathbf{k}_{j}\sim \text{Binomial}(d_k,p^2)
\]
with mean $\mu_2=d_k p^2$. Take $\tau=\frac{\mu_1+\mu_2}{2}=\frac{p+p^2}{2}d_k$.

For the (lower) tail at the true edge, the Chernoff bound gives
\[
\Pr\!\big[S_{i,\pi(i)}\le \tau\big] \;\le\; \exp\!\Big(-\tfrac{\mu_1\delta_1^2}{2}\Big)
\]
where
\[
\delta_1 = 1-\tfrac{\tau}{\mu_1}=\tfrac{1-p}{2},
\]
so $\Pr[S_{i,\pi(i)}\le \tau]\le \exp\!\big(-\frac{d_k p(1-p)^2}{8}\big)$.
For the (upper) tail at non-edges, the Chernoff bound yields
\[
\Pr\!\big[S_{ij}\ge \tau\big] \;\le\; \exp\!\Big(-\tfrac{\mu_2\delta_2^2}{2+\delta_2}\Big)
\]
where
\[
\delta_2=\tfrac{\tau}{\mu_2}-1=\tfrac{1-p}{2p},
\]
hence $\Pr[S_{ij}\ge \tau]\le \exp\!\big(-\frac{d_k\,p(1-p)^2}{2(1+3p)}\big)$.
A union bound over the $m$ target pairs and the $m(m-1)$ non-target pairs gives a total failure probability
\[
m\,e^{-c_1 d_k}+m^2 e^{-c_2 d_k}
\quad\text{with}\quad
c_1=\tfrac{p(1-p)^2}{8},\; c_2=\tfrac{p(1-p)^2}{2(1+3p)}.
\]
Choosing $d_k=C\log m$ with $C>\max\{3/c_1,2/c_2\}$ makes this at most $m^{-3}$, establishing the simultaneous separation $S_{i,\pi(i)}>\tau>S_{ij}$ and correctness.
\end{proof}

Monotonicity under context restriction immediately now yields correctness for \emph{every} context, independent of context length.  Our lower bound from Section~\ref{sec:lower} for this case is $\Omega(\frac{m'}{d_{\text{model}}} \log m) = \Omega(\log m)$.  Our construction achieves an upper bound of $d_k = O(\log m)$, demonstrating that the bound is tight for this class of problems.  Also note that the threshold proof is identical if softmax is used; see Appendix~\ref{sec:model-details}.

\subsection{Construction II: Permutations Under Compressive Embeddings}

We next prove the correctness of Construction II, which follows from Theorem~\ref{thm:gue}, restated here for convenience.

\begin{theorem}[Multi-head recognition under Gaussian unit-norm embeddings, max-over-heads]
Assume Gaussian unit-norm embeddings with $d_{\mathrm{model}}\ge c_0\log m$ for a sufficiently large absolute constant $c_0$.
Let $h=\frac{m}{d_{\mathrm{model}}}$ heads, per-head dimension $d_k=C\log m$ for a sufficiently large absolute constant $C$, and threshold $\tau=\tfrac12 d_k$.
Construct $\{(W_Q^{(k)},W_K^{(k)})\}_{k=1}^h$ as in Algorithm \ref{alg:compressive_construction} and let $k(i)$ denote the unique head index such that $i\in V_{k(i)}$.

Then with probability at least $1-m^{-3}$ over the draw of $(X,W_{\mathrm{sig}})$, simultaneously for all $i\in V$:
\[
S^{(k(i))}_{i,\pi(i)}>\tau
\qquad\text{and}\qquad
\max_{k\in[h]}\max_{j\neq \pi(i)} S^{(k)}_{ij}<\tau.
\]
Consequently,
$
\forall j\neq \pi(i), S^{\max}_{i,\pi(i)}>\tau > S^{\max}_{ij},
$
so max-over-heads recovers all edges, and the total key budget satisfies
$
D_K=h\,d_k=\Theta\!\Big(\frac{m\log m}{d_{\mathrm{model}}}\Big).
$
\end{theorem}

\begin{proof}
Let $\mathbf{u}_i:=\mathbf{x}_i X^\top=\mathbf{e}_i(XX^\top)$ and $\boldsymbol{\delta}_i:=\mathbf{u}_i-\mathbf{e}_i\in\mathbb{R}^m.$
Thus $\mathbf{u}_i$ is the $i$‑th row of the Gram matrix $G:=XX^\top$; it satisfies $\mathbf{u}_i(i)=1$ and, for $j\neq i$, $\mathbf{u}_i(j)=\langle \mathbf{x}_i,\mathbf{x}_j\rangle$.  Fix a head $k$ and a source $i\in V_k$.  Analogously to Construction~I, write
\[
\mathbf{q}^{(k)}_i=\mathbf{u}_iW'_{Q,(k)},\qquad
\mathbf{k}^{(k)}_j=\mathbf{u}_jW'_{K,(k)}.
\]
Using $\mathbf{u}_t=\mathbf{e}_t+\boldsymbol{\delta}_t$ and the definitions of $W'_{Q,(k)}$ and $W'_{K,(k)}$, decompose, for any $j$,
\begin{align*}
S^{(k)}_{ij}
&=(\mathbf{u}_iW'_{Q,(k)})\cdot(\mathbf{u}_jW'_{K,(k)})\\
&=\underbrace{\mathbf{w}_{\pi(i)}\cdot \mathbf{w}_j\cdot \mathbb{I}(j\in T_k)}_{\text{Signal}}
+\underbrace{\mathbf{w}_{\pi(i)}\cdot \textstyle\sum_{t\in T_k}\delta_{j,t}\mathbf{w}_t}_{N_1} \\
&\quad+\underbrace{\Big(\sum_{s\in V_k}\delta_{i,s}\mathbf{w}_{\pi(s)}\Big)\cdot \mathbf{w}_j\cdot \mathbb{I}(j\in T_k)}_{N_2}\\
&+\underbrace{\Big(\sum_{s\in V_k}\delta_{i,s}\mathbf{w}_{\pi(s)}\Big)\cdot \Big(\sum_{t\in T_k}\delta_{j,t}\mathbf{w}_t\Big)}_{N_3}.
\end{align*}

Signal here means the contribution that would remain under a perfect inverse (i.e., if $XX^\top=I$): $\mathbf{w}_{\pi(i)}\!\cdot\!\mathbf{w}_j\cdot\mathbb{I}(j\in T_k)$.  The Noise terms $N_1,N_2,N_3$ arise solely from the leakage vectors $\boldsymbol{\delta}_i,\boldsymbol{\delta}_j$ due to approximate de-embedding.  For $j\in T_k\setminus\{\pi(i)\}$ the cross-inner product $\mathbf{w}_{\pi(i)}\!\cdot\!\mathbf{w}_j$ is \emph{not} counted as noise (it is intrinsic signature cross-correlation) and is bounded separately.  To bound the Noise terms, we next quantify properties of the approximate inverse $XX^\top$ for unit‑norm Gaussian rows.

\begin{lemma}[Concentration of the approximate inverse]
\label{lem:approx-inv-gauss}
Let $X$ be as above and $d_{\mathrm{model}}\ge c_0\log m$ for a sufficiently large constant $c_0$. With probability at least $1-m^{-4}$, simultaneously for all $i\in[m]$ and heads $k\in[h]$:
\begin{enumerate}
\item $\,\mathbf{u}_i(i)=1$ (deterministically).
\item (\emph{Leakage $L_2$‑mass}) For $S\in\{V_k\setminus\{i\},\,T_k\}$,
\[
\|\boldsymbol{\delta}_{i,S}\|_2^2=\sum_{s\in S}\langle \mathbf{x}_i,\mathbf{x}_s\rangle^2\ \le\ C_2
\]
for an absolute constant $C_2$ (e.g., $C_2=2$).
\item (\emph{Cross‑correlations}) For all $j$,
\[
\Big|\sum_{a\in T_k}\delta_{i,\pi^{-1}(a)}\,\delta_{j,a}\Big|\ \le\ C_3\,\sqrt{\frac{\log m}{d_{\mathrm{model}}}}
\]
for an absolute constant $C_3$.
\end{enumerate}
\end{lemma}

\begin{proof}
For $j\neq i$, $\langle \mathbf{x}_i,\mathbf{x}_j\rangle$ is mean‑zero sub‑Gaussian with parameter $\Theta(1/\sqrt{d_{\mathrm{model}}})$, and $\{\langle \mathbf{x}_i,\mathbf{x}_j\rangle\}_{j\in S}$ are independent given $\mathbf{x}_i$.  Then $(\langle \mathbf{x}_i,\mathbf{x}_j\rangle^2)_{j\in S}$ are i.i.d.\ sub‑exponential with $\psi_1$‑norm $\Theta(1/d_{\mathrm{model}})$ and mean $1/d_{\mathrm{model}}$.  For $|S|=d_{\mathrm{model}}$, Bernstein’s inequality gives
\[
\Pr\!\Big[\sum_{s\in S}\!\langle \mathbf{x}_i,\mathbf{x}_s\rangle^2 > 2\Big]\ \le\ e^{-\Omega(d_{\mathrm{model}})}.
\]
A union bound over $i$ and the $2h$ choices of $S$ (recall $h=m/d_{\mathrm{model}}$) yields Item~2.  

For Item~3, define independent mean‑zero sub‑exponential variables
\(
Y_a:=\langle \mathbf{x}_i,\mathbf{x}_{\pi^{-1}(a)}\rangle\cdot \langle \mathbf{x}_j,\mathbf{x}_a\rangle
\)
for $a\in T_k$.  Each has $\psi_1$‑norm $\Theta(1/d_{\mathrm{model}})$ and $\mathbb{E}[Y_a]=0$. Bernstein’s inequality implies
\(
\Pr\big[\big|\sum_{a\in T_k}Y_a\big|\ge t\big]\le 2\exp(-\Omega(\min\{d_{\mathrm{model}}t^2,\ d_{\mathrm{model}}t\})).
\)
Taking $t=C_3\sqrt{(\log m)/d_{\mathrm{model}}}$ and union bounding over all $i,j,k$ proves Item~3 for $c_0$ large enough.  Item~1 is immediate from unit‑norm rows.
\end{proof}

\paragraph{Signal.}
If $j=\pi(i)$, then $\text{Signal}=\|\mathbf{w}_{\pi(i)}\|_2^2=d_k$ (exactly).  If $j\in T_k$ and $j\neq\pi(i)$, then $\text{Signal}=\mathbf{w}_{\pi(i)}\cdot \mathbf{w}_j$ is a sum of $d_k$ i.i.d.\ Rademacher variables and thus sub‑Gaussian with mean $0$ and variance $d_k$.  By a union bound over all $(i,j,k)$, with probability at least $1-m^{-5}$,
\[
|\text{Signal}|\ \le\ C_\star\sqrt{d_k\log m}\qquad\text{for all } (i,j\in T_k\setminus\{\pi(i)\},k),
\]
for an absolute constant $C_\star$.

\paragraph{Noise.}  Condition on $X$ and apply Lemma~\ref{lem:approx-inv-gauss}.  For $N_1$,
\[
N_1=\sum_{r=1}^{d_k}\Big(\sum_{t\in T_k}\delta_{j,t}\,w_t[r]\Big)\,w_{\pi(i)}[r]
\]
is a sum of $d_k$ i.i.d.\ mean‑zero sub‑Gaussian variables with variance proxy $\|\boldsymbol{\delta}_{j,T_k}\|_2^2\le C_2$.  Hence, by Bernstein/Hoeffding and a union bound over $(i,j,k)$,
\[
|N_1|\ \le\ C_4\sqrt{C_2\,d_k\,\log m}
\]
holds w.h.p.\ for an absolute constant $C_4$.  The same bound holds for $N_2$ with $\|\boldsymbol{\delta}_{i,V_k}\|_2^2\le C_2$.

For $N_3$, write for each column $r$,
\[
X_r:=\sum_{s\in V_k}\delta_{i,s}\,w_{\pi(s)}[r],\qquad
Y_r:=\sum_{t\in T_k}\delta_{j,t}\,w_t[r].
\]
Then $N_3=\sum_{r=1}^{d_k}X_rY_r$.  Conditional on $X$, $\{(X_r,Y_r)\}_{r=1}^{d_k}$ are i.i.d.; each $X_r$ and $Y_r$ is mean‑zero sub‑Gaussian with parameters $\lesssim \|\boldsymbol{\delta}_{i,V_k}\|_2\le\sqrt{C_2}$ and $\lesssim \|\boldsymbol{\delta}_{j,T_k}\|_2\le\sqrt{C_2}$, respectively.  Thus $X_rY_r$ is mean $\langle \boldsymbol{\delta}_{i,\pi^{-1}(T_k)},\boldsymbol{\delta}_{j,T_k}\rangle$ and sub‑exponential with $\psi_1$‑norm $\lesssim C_2$.  Consequently,
\[
\mathbb{E}[N_3\mid X]\;=\;d_k\,\big\langle \boldsymbol{\delta}_{i,\pi^{-1}(T_k)},\boldsymbol{\delta}_{j,T_k}\big\rangle,
\]
and, by Bernstein plus a union bound,
\[
\big|N_3-\mathbb{E}[N_3\mid X]\big|\ \le\ C_5\,C_2\,\sqrt{d_k\log m}
\]
w.h.p.\ for an absolute constant $C_5$.  Using Lemma~\ref{lem:approx-inv-gauss}(3),
\[
\big|\mathbb{E}[N_3\mid X]\big|\ \le\ d_k\,C_3\,\sqrt{\frac{\log m}{d_{\mathrm{model}}}}.
\]

\paragraph{Separation.}
Choose constants $c_0,C$ large enough so that
\[
C_3\,\sqrt{\tfrac{\log m}{d_{\mathrm{model}}}}\ \le\ \tfrac{1}{16}
\]
and
\[
(C_\star+2C_4\sqrt{C_2}+C_5 C_2)\sqrt{\tfrac{\log m}{d_k}}\ \le\ \tfrac{1}{16}.
\]
This is feasible since $d_{\mathrm{model}}\ge c_0\log m$ and $d_k=C\log m$.

\emph{Target edge $j=\pi(i)$.} Using the bounds above (recall $\text{Signal}=d_k$ exactly),

\begin{equation*}
\begin{aligned}
S^{(k)}_{i,\pi(i)}
\;\ge\;& d_k\ -\ \underbrace{(C_4\sqrt{C_2\,d_k\log m})}_{|N_1|}\ -\ \underbrace{(C_4\sqrt{C_2\,d_k\log m})}_{|N_2|} \\
& -\ \underbrace{\big(C_5 C_2\sqrt{d_k\log m}+ \tfrac{1}{16}d_k\big)}_{|N_3|}
\;>\; \tfrac{3}{4}d_k\;>\;\tau.
\end{aligned}
\end{equation*}

\emph{Non‑edge $j\neq \pi(i)$.} If $j\notin T_k$ then $\text{Signal}=0$ and $N_2=0$, so
\[
|S^{(k)}_{ij}|\ \le\ C_4\sqrt{C_2\,d_k\log m}+ \big(C_5 C_2\sqrt{d_k\log m}+ \tfrac{1}{16}d_k\big)\ 
\]
\[
<\ \tfrac{1}{4}d_k\ <\ \tau.
\]
If $j\in T_k\setminus\{\pi(i)\}$, then $|\text{Signal}|\le C_\star\sqrt{d_k\log m}$ and the same bounds for $N_1,N_2,N_3$ apply, giving $|S^{(k)}_{ij}|<\tfrac{1}{4}d_k<\tau$.

A union bound over all $(i,j,k)$ completes the proof.
\end{proof}

Thus, with Gaussian unit‑norm embeddings and Rademacher signatures, our construction recognizes the entire graph using a total key dimension
\[
D_K = h\cdot d_k = O\!\left(\frac{m \log m}{d_{\text{model}}}\right),
\]

This bound is asymptotically optimal, matching our lower bound within a constant factor, since $m' = m$ in the case of permutation graphs.  In the proof of Theorem~\ref{thm:gue}, the non-edge bounds hold uniformly over all $(i,j,k)$ (we union bound over $(i,j,k)$), so for any $j\neq\pi(i)$ we have $S_{ij}^{(k)}<\tau$ for \emph{all} heads $k$ and hence $S_{ij}^{\max}<\tau$, while the target edge satisfies $S_{i,\pi(i)}^{\max}>\tau$.  Monotonicity under context restriction then yields correctness on $E|_{\mathcal C}$ for every context $\mathcal C$.

\subsection{Construction III: More General Embeddings}
\label{sec:general-embed}

The analysis of Construction~II (Gaussian unit–norm) ultimately used only two facts about the Gram matrix $XX^\top$: (i) diagonals concentrate around a common scale, and (ii) for any \emph{small} subset of indices the off–diagonal leakage has bounded $\ell_2$ mass, with a mild control on a corresponding cross–leakage term. We package these into a reusable, block–level notion that subsumes the usual pairwise incoherence and is tight enough to cover sparse/binary compressive embeddings.

\begin{definition}[Restricted self–incoherence at block size $B$]
\label{def:restricted-incoh}
Fix parameters $\mu>0$, $\varepsilon_d\in[0,1)$, block size $B\in\mathbb{N}$, and leakage levels $\rho,\gamma\ge 0$.
An embedding matrix $X\in\mathbb{R}^{m\times d_{\text{model}}}$ with rows $\{\mathbf{x}_i\}_{i=1}^m$ is
\emph{$(\mu,\varepsilon_d,B;\rho,\gamma)$–restricted self–incoherent} if, writing
\[
X_{\mathrm{inv}}:=\frac{1}{\mu}X^\top,
\]
\[
\mathbf{u}_i:=\mathbf{x}_iX_{\mathrm{inv}}=\frac{1}{\mu}\mathbf{e}_i(XX^\top),
\]
\[
\boldsymbol{\delta}_i:=\mathbf{u}_i-\mathbf{e}_i,
\]
the following hold simultaneously:
\begin{enumerate}
\item \textbf{Diagonal stability:} $\mathbf{u}_i(i)\in[1-\varepsilon_d,\,1+\varepsilon_d]$ for all $i$.
\item \textbf{Restricted leakage mass:} for every $i$ and every $S\subseteq[m]\setminus\{i\}$ with $|S|\le B$,
\[
\|\boldsymbol{\delta}_{i,S}\|_2^2=\sum_{s\in S}\delta_{i}(s)^2\ \le\ \rho.
\]
\item \textbf{Restricted cross–leakage:} for every $i,j$ and $S\subseteq[m]$ with $|S|\le B$,
\[
\Big|\sum_{a\in S}\delta_i(a)\,\delta_j(a)\Big|\ \le\ \gamma.
\]
\end{enumerate}
\end{definition}

\begin{algorithm}[ht]
\caption{Construction for Generalized Embeddings}
\label{alg:general_embed_construction}
\begin{algorithmic}[1]
\STATE \textbf{Input:} Embedding matrix $X\in\mathbb{R}^{m\times d_{\text{model}}}$; permutation graph $G=(V,E)$ with $\pi:V\to V$.
\STATE \textbf{Parameters:} Signature sparsity $p\in(0,1/20]$; per–head width $d_k=C\log m$ for a sufficiently large absolute constant $C$.
\STATE \textbf{Random signatures:} Draw $W_{\mathrm{sig}}\in\{0,1\}^{m\times d_k}$ with i.i.d.\ Bernoulli$(p)$ entries; let $\mathbf{w}_j$ denote its $j$‑th row.
\STATE \textbf{Set Threshold:} $\tau:=\tfrac{p+p^2}{2}\,d_k$.
\STATE \textbf{Choose block size and partition:} Pick a block size $B$ (specified per embedding family below). Let $h:=\lceil m/B\rceil$ and partition $V$ into blocks $V_1,\dots,V_h$ with $|V_k|\le B$. For each head $k$, define its target set
\[
T_k\;:=\;\{\pi(s):\,s\in V_k\}.
\]
\STATE \textbf{Define one‑hot–space templates (for each head $k$):}
\[
W'_{Q,(k)}(i,:)\;=\;\begin{cases}\mathbf{w}_{\pi(i)}& i\in V_k\\[2pt] 0& \text{else}\end{cases}
\]
\[
W'_{K,(k)}(j,:)\;=\;\begin{cases}\mathbf{w}_{j}& j\in T_k\\[2pt] 0& \text{else}\end{cases}.
\]
\STATE \textbf{Realize parameters via approximate inverse:}
\[
W^{(k)}_Q\;=\;X_{\mathrm{inv}}\,W'_{Q,(k)},\qquad
W^{(k)}_K\;=\;X_{\mathrm{inv}}\,W'_{K,(k)}.
\]
\end{algorithmic}
\end{algorithm}

\begin{theorem}[Recognition under restricted self–incoherence]
\label{thm:general-restricted}
Let $X$ be $(\mu,\varepsilon_d,B;\rho,\gamma)$–restricted self–incoherent for some $B$.
Fix any $p\le 1/20$, take $d_k=C\log m$ with $C$ a sufficiently large absolute constant, and set $\tau=\tfrac{p+p^2}{2}d_k$. There exist absolute numerical constants $(c_1,c_2,c_3)$ such that if
\[
\varepsilon_d\le c_1,\qquad
\rho\ \le\ \frac{c_2}{\log m},\qquad
\gamma\ \le\ \frac{c_3}{\log m},
\]
then with probability at least $1-m^{-3}$ over $W_{\mathrm{sig}}$ (and the draw of $X$ if random),
\begin{equation*}
\begin{split}
& \forall i\in V\;\exists\,k\in[h]\text{ with } i\in V_k: \\
& \quad S^{(k)}_{i,\pi(i)}>\tau
\quad\text{and}\quad
S^{(k)}_{ij}<\tau\ \ \forall j\neq \pi(i).
\end{split}
\end{equation*}
Consequently, max–pooling over heads recovers all edges and the total key budget satisfies
\[
D_K=h\,d_k=\Theta\!\Big(\frac{m\log m}{B}\Big).
\]
\end{theorem}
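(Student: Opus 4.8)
The plan is to replay the proof of Theorem~\ref{thm:gue} (Construction~II) at the level of abstraction provided by Definition~\ref{def:restricted-incoh}, making two substitutions: the Gaussian-specific facts of Lemma~\ref{lem:approx-inv-gauss} are replaced throughout by the three restricted-incoherence parameters $(\varepsilon_d,\rho,\gamma)$ at block size $B$, and the Rademacher-signature algebra is replaced by the Binomial bookkeeping of Construction~I (since Algorithm~\ref{alg:general_embed_construction} uses sparse Bernoulli$(p)$ signatures and the one-hot threshold $\tau=\tfrac{p+p^2}{2}d_k$). First I would fix a head $k$ and a source $i\in V_k$, note $\pi(i)\in T_k$ by construction, and expand $\mathbf{q}^{(k)}_i=\mathbf{x}_iW^{(k)}_Q=\mathbf{u}_iW'_{Q,(k)}=\mathbf{w}_{\pi(i)}+\sum_{s\in V_k}\delta_i(s)\mathbf{w}_{\pi(s)}$ and $\mathbf{k}^{(k)}_j=\mathbf{w}_j\,\mathbb{I}(j\in T_k)+\sum_{t\in T_k}\delta_j(t)\mathbf{w}_t$, where the diagonal summand $\delta_i(i)$ is now allowed to be nonzero with $|\delta_i(i)|\le\varepsilon_d$. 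This produces the same four-term split $S^{(k)}_{ij}=\text{Signal}+N_1+N_2+N_3$ as in Section~\ref{sec:upper}; exactly one head contains each true edge; the $D_K$ bound $h\,d_k=\Theta(m\log m/B)$ is immediate from $h=\lceil m/B\rceil$ and $d_k=\Theta(\log m)$; and Lemma~\ref{lem:context} converts per-head separation into correctness on every context.

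Second, I would dispatch the Signal term as in Theorem~\ref{thm:onehot}: $\text{Signal}=\|\mathbf{w}_{\pi(i)}\|_2^2\sim\mathrm{Binomial}(d_k,p)$ at the true edge (mean $pd_k$), $\text{Signal}=\mathbf{w}_{\pi(i)}\!\cdot\!\mathbf{w}_j\sim\mathrm{Binomial}(d_k,p^2)$ for $j\in T_k\setminus\{\pi(i)\}$ (mean $p^2d_k$), and $\text{Signal}=0$ for $j\notin T_k$; Chernoff plus a union bound over all $(i,j,k)$ then place every Signal on the correct side of $\tau=\tfrac{p+p^2}{2}d_k$ with a $\Theta(pd_k)=\Theta(\log m)$ margin on both sides. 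The heart of the proof is bounding $N_1,N_2,N_3$ by $o(pd_k)$ uniformly. For each term I would split over the signature randomness into a conditional mean and a fluctuation. The fluctuations are sums of $d_k$ increments (sub-Gaussian for $N_1,N_2$, sub-exponential for $N_3$) whose variance proxies are controlled by $\|\boldsymbol\delta_{j,T_k}\|_2^2\le\rho$ and $\|\boldsymbol\delta_{i,V_k}\|_2^2\le\rho$ (Definition~\ref{def:restricted-incoh}(2) at block size $B$), giving fluctuations $O(\sqrt{\rho\,d_k\log m})$ after a union bound. The conditional means, because Bernoulli signatures are not centered, pick up terms proportional to $d_k\,p^2\sum_{t\in T_k}\delta_j(t)$, $d_k\,p\,\delta_j(\pi(i))$ (and the symmetric $i$-side terms), together with the $\pi$-twisted bilinear piece $d_k\,p\sum_{t\in T_k}\delta_i(\pi^{-1}(t))\,\delta_j(t)$; these I would bound by Cauchy--Schwarz ($|\sum_{t\in T_k}\delta_j(t)|\le\sqrt{B\rho}$), the single-index leakage bound ($|\delta_j(\pi(i))|\le\sqrt{\rho}$), and the restricted cross-leakage condition (Definition~\ref{def:restricted-incoh}(3), applied to the index set matched through the bijection $\pi$, so $|\sum_{t}\delta_i(\pi^{-1}(t))\delta_j(t)|\le\gamma$), respectively. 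The diagonal-slack summands ($s=i$ in $N_2$, $t=\pi(i)$ in $N_1$) are at most $\varepsilon_d\cdot\mathrm{Binomial}(d_k,p)$, hence $O(\varepsilon_d\,p\,d_k)$. Choosing $\varepsilon_d\le c_1$, $\rho\le c_2/\log m$, $\gamma\le c_3/\log m$ with $c_1,c_2,c_3$ small and $C$ (hence $d_k$) large makes each piece at most a small constant times $pd_k$, so total noise stays strictly below the $\Theta(pd_k)$ Signal margin; a final union bound over the $O(m^2)$ triples $(i,j,k)$, each failing with probability $m^{-\Omega(C)}$, gives the $1-m^{-3}$ guarantee, and max-pooling plus Lemma~\ref{lem:context} concludes.

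I expect the main obstacle to be precisely the non-centeredness of the Bernoulli signatures: unlike the Rademacher construction, the conditional-mean parts of $N_1,N_2,N_3$ do not vanish, and the $\ell_2$-to-$\ell_1$ slack in Cauchy--Schwarz (the factor $\sqrt{B\rho}$) is what forces the $1/\log m$ scaling of $\rho$ and $\gamma$ in the hypotheses — one must verify $p\sqrt{B\rho}\lesssim 1$, i.e.\ that the block size $B$ is compatible with the leakage budget, which is exactly the trade-off that the two corollaries (Gaussian unit-norm with $B=\Theta(d_{\text{model}})$; sparse binary with $B=\Theta(d_{\text{model}}/\log m)$) instantiate by supplying sharper, embedding-specific control of the signed leakage sums $\sum_{t}\delta_j(t)$ in place of the generic Cauchy--Schwarz bound. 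A secondary technical point is justifying the $\pi$-twisted cross-leakage bound $|\sum_{t\in T_k}\delta_i(\pi^{-1}(t))\delta_j(t)|\le\gamma$ from Definition~\ref{def:restricted-incoh}(3), which is stated for a common index set: since $\pi$ is a bijection one re-indexes to $\sum_{a\in V_k}\delta_i(a)\,\delta_{j}(\pi(a))$ and must argue the relabelling is absorbed by the freedom in the (arbitrary, size-$\le B$) index set, so this should be spelled out rather than asserted. Everything else is a routine reprise of the Chernoff/Bernstein estimates already carried out for Constructions~I and~II.
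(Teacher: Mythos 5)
Your plan reproduces the paper's proof of Theorem~\ref{thm:general-restricted} essentially step for step: the same $\text{Signal}+N_1+N_2+N_3$ decomposition, the same substitution of Definition~\ref{def:restricted-incoh}(1)--(3) in place of Lemma~\ref{lem:approx-inv-gauss}, and the same Binomial/Chernoff handling of the signal at threshold $\tau=\tfrac{p+p^2}{2}d_k$. Structurally you are on the same track as the paper. The substantive content of your submission, however, is the two ``obstacles'' you flag at the end, and both are in my view genuine gaps --- not routine details --- that the paper's own proof sketch does not resolve either.

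First, the Bernoulli mean shift. You correctly compute that $\mathbb{E}[N_1\mid X]$ contains a term $\approx d_k\,p^2\sum_{t\in T_k}\delta_j(t)$, and that Definition~\ref{def:restricted-incoh} offers nothing beyond Cauchy--Schwarz, yielding $d_k\,p^2\sqrt{B\rho}$. The theorem's hypotheses ($\rho\le c_2/\log m$, $p\le 1/20$ fixed) place no constraint on $B$, so $p\sqrt{B\rho}$ is unbounded; the paper's sketch quotes only $\mathrm{Var}(N_1)\lesssim d_k\rho$, silently treating $N_1$ as centered, which it is not. Even in the Gaussian instantiation, where the sharper sub-Gaussian estimate $|\sum_t\delta_j(t)|\lesssim\sqrt{\log m}$ replaces Cauchy--Schwarz, the mean term is $\Theta(d_k\,p^2\sqrt{\log m})$, which exceeds the $\Theta\big((p-p^2)d_k\big)$ margin once $\log m\gtrsim p^{-2}$; so for fixed $p$ the argument fails at large $m$. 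The clean fix is to use centered signatures --- Rademacher with $\tau=\tfrac12 d_k$, which the paper's remark already sanctions, or Bernoulli centered by subtracting $p$ --- so that the conditional means of $N_1,N_2$ vanish and only the variance bounds matter. If non-centered Bernoulli signatures are kept, the theorem needs an additional hypothesis explicitly controlling the signed block sums $|\sum_{t\in S}\delta_j(t)|$, not merely the $\ell_2$ mass.

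Second, the $\pi$-twisted cross-leakage. Definition~\ref{def:restricted-incoh}(3) bounds only the aligned sum $|\sum_{a\in S}\delta_i(a)\delta_j(a)|$, while the conditional mean of $N_3$ involves $\sum_{a\in T_k}\delta_i(\pi^{-1}(a))\delta_j(a)$. You suggest the relabelling might be ``absorbed by the freedom in the index set,'' but it cannot: re-indexing gives $\sum_{a\in V_k}\delta_i(a)\delta_j(\pi(a))$, which is still a bilinear form twisted by a nontrivial bijection between the two slots, not a different choice of a common set $S$. The Gaussian lemma (Lemma~\ref{lem:approx-inv-gauss}(3)) bounds the twisted quantity directly and so evades this, but the abstract definition does not; to make Theorem~\ref{thm:general-restricted} correct one must strengthen condition (3) to hold uniformly over partial bijections between size-$\le B$ blocks (or at least for the particular $\pi_k$'s used in the construction). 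As written, the hypothesis does not license the inequality the proof requires.
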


\begin{proof}[Proof sketch]
As in Construction~II, the score decomposes into a \emph{signal} term plus three \emph{noise} terms:
\(
S^{(k)}_{ij}=\mathbf{w}_{\pi(i)}\!\cdot\!\mathbf{w}_j\cdot\mathbb{I}(j\in T_k)+N_1+N_2+N_3,
\)
with $N_1,N_2,N_3$ arising from $\boldsymbol{\delta}_i,\boldsymbol{\delta}_j$.
Write $\mathbf{u}_t=\mathbf{e}_t+\boldsymbol{\delta}_t$ and expand $S^{(k)}_{ij}$ as in Construction~II. Conditioned on $X$, each column of $W_{\mathrm{sig}}$ contributes an independent copy of the signal/noise decomposition. Using Chernoff for the Bernoulli signal coordinates gives, uniformly over all $(i,j,k)$, the standard separation $\mu_1-\mu_2=(p-p^2)d_k$ between $j=\pi(i)$ and $j\in T_k\setminus\{\pi(i)\}$ up to $O(\sqrt{d_k\log m})$ fluctuations.

For $N_1$ and $N_2$, restricted leakage mass yields
$\mathrm{Var}(N_1),\mathrm{Var}(N_2)\ \lesssim\ d_k\,\rho$ and hence
$|N_1|,|N_2|\ \lesssim\ \sqrt{d_k\,\rho\,\log m}$ uniformly with probability $1-m^{-5}$.
For $N_3$, the centered part concentrates at scale $\lesssim \sqrt{d_k\,\log m}\cdot(\rho)^{1/2}$, while the mean shift equals $d_k\langle \boldsymbol{\delta}_{i,\pi^{-1}(T_k)},\boldsymbol{\delta}_{j,T_k}\rangle$ and is controlled by $\gamma$.
Choosing $C$ large and $(c_1,c_2,c_3)$ small makes the total noise $<\tfrac14(p-p^2)d_k$ uniformly, while the target signal sits $>\tfrac34(p-p^2)d_k$ above $\mu_2$, giving the stated threshold separation.
\end{proof}

\paragraph{How to pick $B$.}
The theorem asks only that $\rho,\gamma\lesssim 1/\log m$ at the chosen block size $B$. Different embedding families admit different $(\rho,\gamma)$–vs–$B$ trade–offs; plugging the corresponding $B$ into $D_K=\Theta((m/B)\log m)$ yields the budget.

\subsubsection*{Corollaries for common embedding models}

\begin{corollary}[Gaussian unit–norm (GUN)]
\label{cor:gun}
Let each row $\mathbf{x}_i$ be drawn i.i.d.\ as $\tilde{\mathbf{x}}_i\sim\mathcal{N}(0,I/d_{\text{model}})$ and then $\ell_2$–normalized. Then w.h.p.\
\[
\varepsilon_d=0,\qquad
\rho\ \lesssim\ \frac{B}{d_{\text{model}}},\qquad
\gamma\ \lesssim\ \sqrt{\frac{B}{d_{\text{model}}}},
\]
and Theorem~\ref{thm:general-restricted} holds for any $B\le c\,d_{\text{model}}/\log m$. Choosing $B=\Theta(d_{\text{model}})$ yields
\[
h=\Theta\!\Big(\frac{m}{d_{\text{model}}}\Big),
\qquad
D_K=\Theta\!\Big(\frac{m\log m}{d_{\text{model}}}\Big),
\]
in agreement with Theorem~\ref{thm:gue} up to constants (the specialized proof in Construction~II attains this with the sharp choice $B=d_{\text{model}}$).
\end{corollary}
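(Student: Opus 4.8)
The plan is to establish the corollary in two stages: first verify that the Gaussian unit-norm (GUN) ensemble is $(\mu,\varepsilon_d,B;\rho,\gamma)$-restricted self-incoherent with $\mu=1$, $\varepsilon_d=0$, $\rho\lesssim B/d_{\text{model}}$, and $\gamma\lesssim\sqrt{B/d_{\text{model}}}$; then feed these parameters into Theorem~\ref{thm:general-restricted} and optimize the block size $B$. Stage one is essentially a repackaging of Lemma~\ref{lem:approx-inv-gauss} with the ad hoc block size $d_{\text{model}}$ replaced by a free parameter $B$.

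For stage one: the diagonal claim $\varepsilon_d=0$ is immediate, since $\ell_2$-normalization forces $\langle\mathbf{x}_i,\mathbf{x}_i\rangle=1$, so with $\mu=1$ we have $\mathbf{u}_i(i)=1$ exactly. For the restricted leakage mass, I would use that for $s\ne i$ the inner product $\langle\mathbf{x}_i,\mathbf{x}_s\rangle$ is, conditionally on $\mathbf{x}_i$, distributed as a coordinate of a uniform point on the sphere $S^{d_{\text{model}}-1}$, hence mean-zero sub-Gaussian with variance proxy $\Theta(1/d_{\text{model}})$; consequently $\langle\mathbf{x}_i,\mathbf{x}_s\rangle^2$ is sub-exponential with mean $\approx 1/d_{\text{model}}$ and $\psi_1$-norm $\Theta(1/d_{\text{model}})$, and the family $\{\langle\mathbf{x}_i,\mathbf{x}_s\rangle^2\}_{s\ne i}$ is independent given $\mathbf{x}_i$. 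Bernstein's inequality then controls $\sum_{s\in S}\langle\mathbf{x}_i,\mathbf{x}_s\rangle^2$ around its mean $|S|/d_{\text{model}}\le B/d_{\text{model}}$; a union bound over $i$ and over the $O(m)$ block-structured index sets that appear in Algorithm~\ref{alg:general_embed_construction} (the blocks $V_k$, $T_k$, and $\pi^{-1}(T_k)$) yields $\rho\lesssim B/d_{\text{model}}$ w.h.p.\ with no extra logarithmic factor. For the restricted cross-leakage $\sum_{a\in S}\delta_i(a)\delta_j(a)=\sum_{a\in S\setminus\{i,j\}}\langle\mathbf{x}_i,\mathbf{x}_a\rangle\langle\mathbf{x}_j,\mathbf{x}_a\rangle$, I would condition on $(\mathbf{x}_i,\mathbf{x}_j)$: the summands are then independent, sub-exponential with $\psi_1$-norm $\Theta(1/d_{\text{model}})$, and have common mean $\langle\mathbf{x}_i,\mathbf{x}_j\rangle/d_{\text{model}}$, which has magnitude $\lesssim\sqrt{\log m}/d_{\text{model}}^{3/2}$ w.h.p.\ by near-orthogonality; Bernstein plus a union bound over $(i,j)$ and the same $O(m)$ sets gives a fluctuation of order $\sqrt{B/d_{\text{model}}}$ after absorbing one logarithm, hence $\gamma\lesssim\sqrt{B/d_{\text{model}}}$. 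These are precisely items~(2) and~(3) of Lemma~\ref{lem:approx-inv-gauss} stated with a general block size $B$.

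For stage two: since $\varepsilon_d=0\le c_1$, it suffices to choose $B$ small enough that $\rho\le c_2/\log m$ and $\gamma\le c_3/\log m$, and both hold once $B\le c\,d_{\text{model}}/\log m$ for a suitable constant $c$ (the binding constraint comes from the worst of the three thresholds in Theorem~\ref{thm:general-restricted}). Theorem~\ref{thm:general-restricted} then applies verbatim: w.h.p.\ over the signature draw, for each source $i$ some head $k$ with $i\in V_k$ achieves $S^{(k)}_{i,\pi(i)}>\tau>S^{(k)}_{ij}$ for all $j\ne\pi(i)$, so max-pooling recovers $E$, Lemma~\ref{lem:context} extends this to every context and every length, and the budget is $D_K=h\,d_k=\Theta\!\big((m/B)\log m\big)$. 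Driving $B$ to its largest admissible value $\Theta(d_{\text{model}})$ gives $h=\lceil m/B\rceil=\Theta(m/d_{\text{model}})$ and $D_K=\Theta(m\log m/d_{\text{model}})$, matching the Section~\ref{sec:lower} lower bound for $m'=m$ up to constants. The license for the endpoint $B=\Theta(d_{\text{model}})$, rather than $B=\Theta(d_{\text{model}}/\log m)$, comes from the bespoke analysis of Construction~II (Theorem~\ref{thm:gue}), in which the only index sets needing the incoherence bounds are the $O(m)$ predetermined blocks, so no subset union-bound penalty is incurred.

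The main obstacle is reconciling the literal statement of Definition~\ref{def:restricted-incoh}, which demands the leakage bounds for \emph{every} subset of size at most $B$, with what the recognition proof actually consumes, namely the $O(m)$ fixed blocks of the partition. Over all $\binom{m}{B}$ subsets the relevant quantity is the sum of the top $B$ order statistics of $\{\langle\mathbf{x}_i,\mathbf{x}_s\rangle^2\}_s$, which is of order $(B/d_{\text{model}})\log(m/B)$ rather than $B/d_{\text{model}}$; this costs an extra logarithm and would force $B=\Theta(d_{\text{model}}/\log m)$, i.e.\ $D_K=\Theta(m\log^2 m/d_{\text{model}})$. That gap is exactly why the corollary is phrased in two parts: Theorem~\ref{thm:general-restricted} over the safe range $B\le c\,d_{\text{model}}/\log m$, and the tight $B=\Theta(d_{\text{model}})$ endpoint delivered by Construction~II's specialized fixed-block argument. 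A minor secondary check is that $\ell_2$-normalizing the Gaussian rows does not spoil the sub-Gaussian/sub-exponential parameters used above, which is standard (e.g.\ via \cite{vershynin2018high}).
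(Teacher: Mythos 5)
Your two-stage structure (verify restricted self-incoherence parameters for the GUN ensemble, then feed them into Theorem~\ref{thm:general-restricted}) matches the intent of the corollary, which the paper states without a detailed proof. Your stage-one calculations for $\varepsilon_d$, $\rho$, and $\gamma$ are sound, and the final paragraph shows a genuine and important insight: Definition~\ref{def:restricted-incoh} quantifies over \emph{all} subsets of size at most $B$ while the construction consumes only $O(m)$ fixed blocks, and this union-bound penalty is exactly what separates the conservative $B\lesssim d_{\text{model}}/\log m$ range from the sharp $B=\Theta(d_{\text{model}})$ endpoint that Construction~II attains directly.

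However, stage two contains a real arithmetic slip. You claim that $\rho\le c_2/\log m$ and $\gamma\le c_3/\log m$ ``both hold once $B\le c\,d_{\text{model}}/\log m$,'' but substituting the corollary's stated bound $\gamma\lesssim\sqrt{B/d_{\text{model}}}$ into $\gamma\le c_3/\log m$ forces $B\lesssim d_{\text{model}}/\log^2 m$, one full logarithm smaller than your (and the paper's) claimed range; and even the tighter bound $\gamma\lesssim\sqrt{B\log m}/d_{\text{model}}$ only gives $B\lesssim d_{\text{model}}^2/\log^3 m$, which can be either larger or smaller than $d_{\text{model}}/\log m$ depending on whether $d_{\text{model}}\gtrless\log^2 m$. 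The discrepancy traces back to Theorem~\ref{thm:general-restricted} itself: its proof sketch controls the noise terms $N_1,N_2,N_3$ by requiring $\sqrt{d_k\rho\log m}\lesssim d_k$ and $d_k\gamma\lesssim d_k$, which with $d_k=\Theta(\log m)$ only needs $\rho,\gamma=O(1)$, not $O(1/\log m)$. With the weaker (and, from the proof, sufficient) conditions $\rho,\gamma\lesssim 1$, the $\rho$ constraint becomes $B\lesssim d_{\text{model}}$ for fixed blocks and $B\lesssim d_{\text{model}}/\log(m/B)$ under the all-subsets definition---which is where the $\log m$ factor in the corollary's range actually originates, as your final paragraph correctly anticipates. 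Your proposal would be airtight if you either (i) explicitly flagged that Theorem~\ref{thm:general-restricted}'s stated $/\log m$ factors are stronger than its proof requires and carried the corrected $\rho,\gamma=O(1)$ conditions through stage two, or (ii) worked entirely under the all-subsets version of $\rho$ with the order-statistics log penalty, in which case the $B\lesssim d_{\text{model}}/\log m$ range falls out cleanly; as written, stage two's arithmetic is internally inconsistent with the constants you quote.
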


\begin{corollary}[Random binary compressive embeddings (RBCE)]
\label{cor:rbce}
Let $X\in\{0,1\}^{m\times d_{\text{model}}}$ have i.i.d.\ Bernoulli$(p_B)$ entries with
\(
p_B=\Theta(\log m/d_{\text{model}})
\)
(\emph{sparse} binary features). Set $\mu:=d_{\text{model}}p_B$.
Then with probability at least $1-m^{-4}$ the following hold simultaneously:
\[
\mathbf{u}_i(i)\in[1-\varepsilon_d,1+\varepsilon_d]\ \text{ with }\ \varepsilon_d\lesssim\frac{1}{\sqrt{\mu}},
\] \[
\rho\ \lesssim\ \frac{B}{d_{\text{model}}},
\] \[
\gamma\ \lesssim\ B\,p_B^2.
\]
Consequently, taking
\[
B\ =\ \Theta\!\Big(\frac{d_{\text{model}}}{\log m}\Big)
\quad\Longrightarrow\quad
\rho\ \lesssim\ \frac{1}{\log m},\ \ \gamma\ \lesssim\ \frac{\log m}{d_{\text{model}}},
\]
and Theorem~\ref{thm:general-restricted} applies. The number of heads and total key budget become
\[
h=\Theta\!\Big(\frac{m\log m}{d_{\text{model}}}\Big),
\qquad
D_K=h\,d_k=\Theta\!\Big(\frac{m\log^2 m}{d_{\text{model}}}\Big).
\]
\end{corollary}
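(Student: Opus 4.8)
The plan is to derive Corollary~\ref{cor:rbce} in two stages: first verify that sparse Bernoulli embeddings satisfy the restricted self--incoherence hypothesis of Definition~\ref{def:restricted-incoh} with the claimed parameters $(\varepsilon_d,\rho,\gamma)$, and then feed those parameters into Theorem~\ref{thm:general-restricted} with the stated choice $B=\Theta(d_{\text{model}}/\log m)$. Throughout, write $A_i:=\{a:X_{ia}=1\}$ for the support of row $i$, so $(XX^\top)_{ij}=|A_i\cap A_j|$, $\mu=d_{\text{model}}p_B=\Theta(\log m)$, $\mathbf u_i(i)=\tfrac1\mu|A_i|$, and for $j\ne i$, $\delta_i(j)=\mathbf u_i(j)=\tfrac1\mu|A_i\cap A_j|\ge 0$.

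For the diagonal, $|A_i|\sim\mathrm{Binomial}(d_{\text{model}},p_B)$ has mean $\mu$, so a multiplicative Chernoff bound and a union bound over $i\in[m]$ give $\mathbf u_i(i)\in[1-\varepsilon_d,1+\varepsilon_d]$ with $\varepsilon_d$ of order $\sqrt{(\log m)/\mu}$; since $\mu=\Theta(\log m)$, taking the absolute constant in $p_B=\Theta(\log m/d_{\text{model}})$ large enough makes $\varepsilon_d$ a small absolute constant (at most $c_1$). For the restricted leakage mass, condition on $A_i$ with $|A_i|=\Theta(\mu)$; then $\{|A_i\cap A_j|\}_{j\ne i}$ are independent $\mathrm{Binomial}(|A_i|,p_B)$ variables with mean and variance $\Theta(\mu p_B)$, whence $\mathbb E[\delta_i(j)^2]=\Theta(p_B^2+p_B/\mu)=\Theta(p_B^2+1/d_{\text{model}})$. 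Because Definition~\ref{def:restricted-incoh} demands control for \emph{every} $S$ with $|S|\le B$ (not just the fixed blocks used in the Gaussian proof), I would bound $\sup_{|S|\le B}\|\boldsymbol\delta_{i,S}\|_2^2$ by the sum of the top $B$ coordinates $\delta_i(j)^2$. Those order statistics are controlled by the binomial tail $\Pr[|A_i\cap A_j|\ge\ell]\le(e\mu p_B/\ell)^\ell$: a union bound over $j\in[m]$ caps the largest intersection (in the regime $d_{\text{model}}\gtrsim\log^2 m$, where $\mu p_B=d_{\text{model}}p_B^2=o(1)$, it is $O(1)$), and a short peeling argument then yields $\sum_{\text{top }B}\delta_i(j)^2\lesssim B(p_B^2+1/d_{\text{model}})\lesssim B/d_{\text{model}}$ uniformly over $i$ with probability $1-m^{-5}$. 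For the restricted cross--leakage, expand $\delta_i(a)\delta_j(a)=\mu^{-2}|A_i\cap A_a|\,|A_j\cap A_a|$; a direct moment computation gives $\mathbb E[\delta_i(a)\delta_j(a)]=\Theta(p_B^2)$, and the same top-$B$/peeling estimate applied to these products gives $\sup_{|S|\le B}\big|\sum_{a\in S}\delta_i(a)\delta_j(a)\big|\lesssim Bp_B^2=\gamma$ uniformly over $(i,j)$ w.h.p. Taking a union bound over these three events yields the RSI guarantee at level $1-m^{-4}$.

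For the second stage, set $B=\Theta(d_{\text{model}}/\log m)$. Then $\rho\lesssim B/d_{\text{model}}=\Theta(1/\log m)$ and $\gamma\lesssim Bp_B^2=\Theta(\log m/d_{\text{model}})$, and---using the standing assumption that $d_{\text{model}}$ exceeds a constant times $\log^2 m$, so that $\log m/d_{\text{model}}\le c_3/\log m$---together with $\varepsilon_d\le c_1$ these meet the hypotheses $\varepsilon_d\le c_1$, $\rho\le c_2/\log m$, $\gamma\le c_3/\log m$ of Theorem~\ref{thm:general-restricted}. Applying that theorem with $d_k=C\log m$ gives correct edge recognition (and context--robustness via Lemma~\ref{lem:context}) with probability $1-m^{-3}$, and $h=\lceil m/B\rceil=\Theta(m\log m/d_{\text{model}})$, hence $D_K=h\,d_k=\Theta(m\log^2 m/d_{\text{model}})$, as claimed; a union bound with the RSI event keeps the total failure probability $O(m^{-3})$.

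The main obstacle is the uniform-over-all-small-subsets control of the leakage vector. In the Gaussian case the leakage coordinates are centered and sub-Gaussian, so a handful of fixed blocks of size $d_{\text{model}}$ suffice and a single Bernstein bound per block does the job; here $\delta_i(\cdot)$ is a vector of \emph{nonnegative, correlated} binomials and Definition~\ref{def:restricted-incoh} forces us to bound the worst $B$ of its $m-1$ coordinates. The crux of the whole argument is therefore the order-statistics/peeling estimate showing that the top $B$ squared leakages sum to only $O(B/d_{\text{model}})$ and the top $B$ leakage products to only $O(Bp_B^2)$---and this is precisely where the sparsity level $p_B=\Theta(\log m/d_{\text{model}})$ and the mild regime condition $d_{\text{model}}\gtrsim\log^2 m$ enter.
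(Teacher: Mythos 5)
Your proposal follows the same two-stage blueprint as the paper's own (short) proof idea: first certify the restricted-self-incoherence parameters $(\varepsilon_d,\rho,\gamma)$ for the Bernoulli$(p_B)$ embedding, then plug $B=\Theta(d_{\text{model}}/\log m)$ into Theorem~\ref{thm:general-restricted}. The first and third bounds (diagonal via multiplicative Chernoff; head count and $D_K$ via $h=\lceil m/B\rceil$) match the paper exactly.

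Where you diverge—and genuinely improve—on the paper's sketch is in the treatment of the ``for every $S$ with $|S|\le B$'' quantifier in Definition~\ref{def:restricted-incoh}. The paper's proof idea computes $\mathbb{E}\|\boldsymbol\delta_{i,S}\|_2^2$ for a single $S$, invokes the (loose) step $\langle\mathbf{x}_i,\mathbf{x}_s\rangle^2\le\langle\mathbf{x}_i,\mathbf{x}_s\rangle$ (which is only valid when $|A_i\cap A_s|\le 1$; the expectation algebra works out without it), and then asserts ``Bernstein~+~union bound.'' A literal reading of the definition asks for control over all $\binom{m}{B}$ subsets, for which a flat union bound fails; the paper's Gaussian lemma (Lemma~\ref{lem:approx-inv-gauss}) avoided this by union-bounding over only the $O(h)$ algorithmic blocks. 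You correctly observe that the honest way to certify the definition verbatim is to bound $\sup_{|S|\le B}\|\boldsymbol\delta_{i,S}\|_2^2$ by the sum of the top $B$ order statistics of the nonnegative, correlated binomial leakages $|A_i\cap A_a|$, using the binomial tail $(e\mu p_B/\ell)^\ell$ plus a peeling argument, and similarly for the cross-leakage products. This is the right technical device and makes the argument self-contained; it costs a little more work but buys a worst-case-over-$S$ guarantee that the paper's sketch only gestures at.

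Two further points in your favor. First, you surface the standing regime condition $d_{\text{model}}\gtrsim\log^2 m$, which is implicit but unstated in the corollary as written: it is exactly what makes $\mu p_B=d_{\text{model}}p_B^2=o(1)$ (so that the top order statistics stay $O(1)$), what makes the second moment $\mathbb{E}\langle\mathbf{x}_i,\mathbf{x}_s\rangle^2\approx d_{\text{model}}p_B^2+d_{\text{model}}^2p_B^4$ dominated by the first term, and what is needed so that the resulting $\gamma\lesssim Bp_B^2=\Theta(\log m/d_{\text{model}})$ actually satisfies the precondition $\gamma\le c_3/\log m$ of Theorem~\ref{thm:general-restricted}. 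Second, you are right to keep the final failure probability bookkeeping explicit (RSI event at $1-m^{-4}$, theorem event at $1-m^{-3}$, union $O(m^{-3})$). The one thing left unsettled in your writeup—flagged by you as the crux—is the full peeling estimate; given the tail exponent this is routine but should be spelled out with constants if you intend this to replace the paper's sketch.
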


\begin{proof}[Proof idea for Corollary~\ref{cor:rbce}]
Row norms are Binomial$(d_{\text{model}},p_B)$ and concentrate at $\mu$ with relative error $O(1/\sqrt{\mu})$ by Chernoff, giving the $\varepsilon_d$ bound. For a fixed $i$ and any $S$ with $|S|\le B$,
\[
\sum_{s\in S}\!\langle \mathbf{x}_i,\mathbf{x}_s\rangle^2
\ \le\ \sum_{s\in S}\!\langle \mathbf{x}_i,\mathbf{x}_s\rangle
\]
and
\[
\mathbb{E}\big[\langle \mathbf{x}_i,\mathbf{x}_s\rangle\big]=d_{\text{model}}p_B^2,
\]
so $\mathbb{E}\|\boldsymbol{\delta}_{i,S}\|_2^2=\frac{1}{\mu^2}\sum_{s\in S}\mathbb{E}\langle \mathbf{x}_i,\mathbf{x}_s\rangle^2\lesssim B/d_{\text{model}}$, and a Bernstein + union bound yields $\rho\lesssim B/d_{\text{model}}$. Similarly,
\(
\mathbb{E}\sum_{a\in S}\delta_i(a)\delta_j(a)
=\frac{|S|}{\mu^2}\,\mathbb{E}\langle \mathbf{x}_i,\mathbf{x}_a\rangle\,\mathbb{E}\langle \mathbf{x}_j,\mathbf{x}_a\rangle
\lesssim B p_B^2,
\)
and concentration gives $\gamma\lesssim Bp_B^2$ uniformly.
\end{proof}

\paragraph{Signature family.}
We stated the construction with Bernoulli$(p)$ signatures because the thresholding analysis naturally separates $j=\pi(i)$ from $j\in T_k\setminus\{\pi(i)\}$ at means $p d_k$ vs.\ $p^2 d_k$. One can equivalently use Rademacher $\{\pm1\}$ signatures with threshold $\tau=\tfrac12 d_k$; all bounds above translate verbatim with the same $B$ and $d_k=\Theta(\log m)$.

\paragraph{Takeaways.}
Definition~\ref{def:restricted-incoh} abstracts the only geometric inputs needed by the attention construction. Plugging in model–specific $(\rho,\gamma)$–vs–$B$ trade–offs yields the head count $h=\Theta(m/B)$ and total key budget $D_K=\Theta((m/B)\log m)$. For Gaussian unit–norm embeddings one recovers the $D_K=\Theta(m\log m/d_{\text{model}})$ guarantee; for sparse random binary compressive embeddings one obtains $D_K=\Theta(m\log^2 m/d_{\text{model}})$.

\subsection{\bf Construction IV: General Graphs}
\label{sec:general-graphs}

We now extend the permutation constructions to general directed graphs $G=(V,E)$ with $|V|=m$ vertices and $|E|=m'$ edges.  In this case, our information theoretic lower bound on total key dimension is $D_K = \Omega\!\left(\frac{m'}{d_{\text{model}}}\,\log (m^2/m')\right).$  We here provide a general upper bound for any graph, and show that for graphs that have a mild skew condition (the maximum degree is not too much larger than the average degree), it asymptotically matches this lower bound for all but the densest graphs (which match within a log factor).  As before we use max aggregation over heads with a global scalar threshold $\tau$,
and we work under the \emph{Gaussian unit‑norm embedding} model from
Construction~II: the row vectors of $X\in\mathbb{R}^{m\times d_{\text{model}}}$
are i.i.d.\ isotropic Gaussian followed by $L_2$-normalization.  All probabilities are over the draw of $X$ and
of the (head-shared) random signature matrix.

\paragraph{Packing edges into matchings.}

The analysis in Theorem~\ref{thm:gue} operates on blocks in which each source
has exactly one outgoing edge \emph{and} targets are distinct within the block.
Equivalently, each head should see a \emph{matching} (a partial permutation)
between a set of sources and a set of targets.

We will use a simple decompositions of the edge set into matchings of size $d_{\text{model}}$ which will be our block size.  Write $d_{\mathrm{out}}(i)$ and $d_{\mathrm{in}}(i)$ for the out-/in-degree of $v_i$.  Let
$\Delta_{\mathrm{out}}:=\max_i d_{\mathrm{out}}(i)$ and
$\Delta_{\mathrm{in}}:=\max_i d_{\mathrm{in}}(i)$ denote the maximum out- and
in-degrees, and write $\Delta:=\max\{\Delta_{\mathrm{out}},\Delta_{\mathrm{in}}\}$.

\begin{lemma}[Coloring-and-batching decomposition]
\label{lem:batching}
Let $G=(V,E)$ be any directed graph on $m$ vertices and $m'$ edges, and let
$H := \left\lceil \frac{m'}{d_{\text{model}}}\right\rceil + \Delta$. Then there exists a
partition of $E$ into $H$ disjoint sets $M_1,\dots,M_H$ such that for every
$k$: (i) $M_k$ is a matching (no two edges in $M_k$ share a source or a target);
(ii) $|M_k|\le d_{\text{model}}$.
\end{lemma}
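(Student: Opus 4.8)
The plan is to reduce Lemma~\ref{lem:batching} to the bipartite edge‑coloring theorem of K\H{o}nig followed by a simple chunking step, so no new combinatorics is needed. First I would form the \emph{bipartite incidence graph} $B$ of $G$: introduce a left vertex $\ell_i$ and a right vertex $r_i$ for each $v_i\in V$, and for every directed edge $(v_i,v_j)\in E$ place the undirected edge $\{\ell_i,r_j\}$ in $B$. Keeping the left and right copies formally distinct is the crux: two edges of $G$ are non‑adjacent in $B$ exactly when they share neither a source nor a target in $G$, so a matching of $B$ corresponds precisely to an edge set satisfying condition~(i). Moreover $\deg_B(\ell_i)=d_{\mathrm{out}}(i)$ and $\deg_B(r_j)=d_{\mathrm{in}}(j)$, hence $\Delta(B)=\max\{\Delta_{\mathrm{out}},\Delta_{\mathrm{in}}\}=\Delta$, and $B$ has exactly $m'$ edges (self‑loops of $G$, if present, cause no difficulty since $\ell_i\neq r_i$).

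Next I would invoke K\H{o}nig's theorem: every bipartite graph is properly edge‑colorable with $\Delta(B)$ colors. This partitions $E$ into $\Delta$ color classes $C_1,\dots,C_\Delta$, each a matching — but a class may contain up to $m$ edges, so it need not satisfy~(ii). I would then split each $C_t$ arbitrarily into $\lceil |C_t|/d_{\text{model}}\rceil$ consecutive sub‑blocks of size at most $d_{\text{model}}$; since any subset of a matching is again a matching, every sub‑block satisfies both~(i) and~(ii).

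It remains to count the sub‑blocks. Their total number is
\[
\sum_{t=1}^{\Delta}\left\lceil \frac{|C_t|}{d_{\text{model}}}\right\rceil
\;\le\;\sum_{t=1}^{\Delta}\!\left(\frac{|C_t|}{d_{\text{model}}}+1\right)
\;=\;\frac{m'}{d_{\text{model}}}+\Delta,
\]
and, being an integer, it is at most $\lfloor m'/d_{\text{model}}\rfloor+\Delta\le\lceil m'/d_{\text{model}}\rceil+\Delta=H$. Relabelling the sub‑blocks as $M_1,M_2,\dots$ and padding with empty sets (trivially matchings of size $\le d_{\text{model}}$) until there are exactly $H$ of them gives the claim: the $M_k$ are pairwise disjoint and cover $E$ because the color classes partition $E$ and each within‑class split is a partition.

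I do not expect a genuine obstacle here; the only points needing care are (a) the exact correspondence between matchings of $B$ and source/target‑disjoint edge sets of $G$ — which is precisely why $B$ must use separate left and right vertex sets — and (b) the ceiling/floor bookkeeping in the final count. (If one prefers to avoid citing K\H{o}nig, the same decomposition follows by repeatedly peeling off maximal matchings, but the edge‑coloring route is cleanest.)
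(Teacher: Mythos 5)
Your proof is correct and follows essentially the same route as the paper: form the bipartite incidence graph with separate left/right vertex copies, apply K\H{o}nig's edge-coloring theorem to extract $\Delta$ matchings, split each into chunks of size at most $d_{\text{model}}$, and bound the total chunk count. The only differences are cosmetic (your ceiling/floor bookkeeping and the remark on self-loops are slightly more explicit than the paper's one-line count).
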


\begin{proof}
Identify $G$ with its bipartite incidence graph
$\mathcal{B}=(V_L\!\cup\!V_R,E)$ where each directed edge $(i,j)$ becomes an
undirected edge between $i\in V_L$ and $j\in V_R$.  Then
$\Delta(\mathcal{B})=\Delta$.  By Kőnig's line‑coloring theorem,
$E=F_1\cup\cdots\cup F_\Delta$ with each $F_c$ a matching.  Split each $F_c$
into blocks of size at most $d_{\text{model}}$; since
$\sum_{c=1}^{\Delta}\lceil |F_c|/d_{\text{model}}\rceil
\le \left\lceil\frac{\sum_c |F_c|}{d_{\text{model}}}\right\rceil + \Delta
= \left\lceil\frac{m'}{d_{\text{model}}}\right\rceil+\Delta = H$,
we obtain $H$ matchings $M_k$ each of size at most $d_{\text{model}}$.
\end{proof}

Thus, after packing via Lemma~\ref{lem:batching} head $k$ will operate on the matching $M_k$.  Let $V_k\subseteq V$ and $T_k\subseteq V$ denote the sources and targets incident to $M_k$ and write $\pi_k:V_k\to T_k$ for the bijection defined by $M_k$.

\paragraph{Construction.} We reuse the compressive permutation machinery head‑by‑head.
\begin{algorithm}[ht]
\caption{Construction for General Graphs}
\label{alg:general_graph_construction}
\begin{algorithmic}[1]
\STATE \textbf{Input:} Directed graph $G=(V,E)$ with $|V|=m$, $|E|=m'$; embedding matrix $X\in\mathbb{R}^{m\times d_{\text{model}}}$ with Gaussian unit‑norm rows.
\STATE \textbf{Parameters:} Number of heads $h = H = \left\lceil \frac{m'}{d_{\text{model}}}\right\rceil + \Delta$; per‑head key/query dimension $d_k=C\log m$ for a sufficiently large absolute constant $C$. Each head uses block size $d_{\text{model}}$. 
\STATE \textbf{Pack edges into matchings:} Decompose $E$ into disjoint matchings $M_1,\dots,M_H$ with $|M_k|\le d_{\text{model}}$ using Lemma~\ref{lem:batching}. For each $k$, let $V_k$ and $T_k$ be the sources and targets incident to $M_k$ and write $\pi_k:V_k\to T_k$ for the associated bijection.
\STATE \textbf{Random signatures:} Draw a shared Rademacher matrix $W_{\mathrm{sig}}\in\{\pm 1\}^{m\times d_k}$ with i.i.d.\ entries; let $\mathbf{w}_j$ denote its $j$‑th row.
\STATE \textbf{Per‑head ``ideal'' matrices:}
\[
\big(W'_{Q,(k)}\big)_{i,\cdot} :=
\begin{cases}
\mathbf{w}_{\pi_k(i)} & i\in V_k\\
\mathbf{0} & \text{otherwise}
\end{cases},
\] \[
\big(W'_{K,(k)}\big)_{j,\cdot} :=
\begin{cases}
\mathbf{w}_{j} & j\in T_k\\
\mathbf{0} & \text{otherwise}
\end{cases},
\]
where $\mathbf{w}_j$ is the $j$‑th row of $W_{\mathrm{sig}}$.
\STATE \textbf{Final projections (approximate de‑embedding):} As in Construction~II, use the approximate inverse $X^\top$:
\[
W_Q^{(k)} \;=\; X^\top W'_{Q,(k)},\qquad
W_K^{(k)} \;=\; X^\top W'_{K,(k)} .
\]
\STATE \textbf{Set Threshold:} $\tau = \frac{1}{2}d_k$.
\end{algorithmic}
\end{algorithm}

\begin{theorem}[General graphs]
\label{thm:general-graphs}
Assume $d_{\text{model}}\ge c_0\log m$ for a sufficiently large constant
$c_0$.  With the construction above (using $h = \left\lceil \frac{m'}{d_{\text{model}}}\right\rceil + \Delta$ heads and
$d_k=C\log m$), there is a universal $C$ such that, with probability at least
$1-m^{-3}$ over the draw of $(X,W_{\mathrm{sig}})$, simultaneously for all
ordered pairs $(i,j)$,
\[
S_{ij}^{\max} \;=\; \max_{1\le k\le H} S_{ij}^{(k)}
\begin{cases}
>\ \tau & \text{if } (i,j)\in E,\\
<\ \tau & \text{if } (i,j)\notin E.
\end{cases}
\]
Consequently,
\(
D_K = O\!\bigl(\frac{m'\log m}{d_{\text{model}}}+\Delta\log m\bigr).
\)
\end{theorem}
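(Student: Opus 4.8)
The plan is to reduce the statement to the single‑head analysis already carried out for Theorem~\ref{thm:gue}, applied head‑by‑head, and then glue the heads together by exploiting the fact that the matchings $M_1,\dots,M_H$ \emph{partition} $E$. First I would invoke Lemma~\ref{lem:batching} to fix the decomposition $E=M_1\cup\cdots\cup M_H$ with $H=\lceil m'/d_{\text{model}}\rceil+\Delta$ and $|M_k|\le d_{\text{model}}$; this defines, for each head $k$, a source set $V_k$, a target set $T_k$, and a bijection $\pi_k:V_k\to T_k$, with $|V_k|=|T_k|\le d_{\text{model}}$. The only structural differences from Construction~II are that (i) each head sees a \emph{partial} permutation rather than a full permutation on $d_{\text{model}}$ items, and (ii) a fixed vertex may be a source (or target) in many heads. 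Neither matters: the single‑head score decomposition and noise estimates in the proof of Theorem~\ref{thm:gue} only use the block‑size bounds $|V_k|,|T_k|\le d_{\text{model}}$, and they are local to head $k$.

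Next I would write, for any head $k$ and ordered pair $(i,j)$, the now‑familiar decomposition
\[
S^{(k)}_{ij}
= \underbrace{\mathbf{w}_{\pi_k(i)}\!\cdot\!\mathbf{w}_j\,\mathbb{I}(i\in V_k)\,\mathbb{I}(j\in T_k)}_{\text{Signal}}
+ N_1 + N_2 + N_3,
\]
obtained exactly as in Theorem~\ref{thm:gue} by substituting $\mathbf{u}_t=\mathbf{e}_t+\boldsymbol\delta_t$ into $\mathbf{q}^{(k)}_i=\mathbf{u}_iW'_{Q,(k)}$ and $\mathbf{k}^{(k)}_j=\mathbf{u}_jW'_{K,(k)}$, where the leakage sums defining $N_1,N_2,N_3$ now range over $V_k$ and $T_k$. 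I would then apply Lemma~\ref{lem:approx-inv-gauss} (concentration of the approximate inverse for Gaussian unit‑norm rows, which needs precisely the hypothesis $d_{\text{model}}\ge c_0\log m$) to control the leakage $\ell_2$‑mass and cross‑leakage on each block $S\in\{V_k,T_k\}$; since these blocks have size $\le d_{\text{model}}$, the lemma applies verbatim, and a union bound over all $i$ and all $H=\mathrm{poly}(m)$ heads costs only a polynomial factor, absorbed by enlarging $c_0$ and $C$. With these bounds the signal/noise estimates go through as before: choosing $C,c_0$ large enough forces $|N_1|+|N_2|+|N_3|<\tfrac14 d_k$ uniformly over all $(i,j,k)$, while the intrinsic signature cross‑correlation $\mathbf{w}_{\pi_k(i)}\!\cdot\!\mathbf{w}_j$ (arising when $i\in V_k$, $j\in T_k$, $\pi_k(i)\ne j$) concentrates at scale $O(\sqrt{d_k\log m})=o(d_k)$.

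The gluing step is where the matching decomposition pays off. For a true edge $(i,j)\in E$ there is a \emph{unique} head $k^\star$ with $(i,j)\in M_{k^\star}$, i.e.\ $i\in V_{k^\star}$, $j\in T_{k^\star}$, $\pi_{k^\star}(i)=j$; for that head $\text{Signal}=\|\mathbf{w}_j\|_2^2=d_k$ exactly, so $S^{(k^\star)}_{ij}>\tfrac34 d_k>\tau$ and hence $S^{\max}_{ij}>\tau$. For a non‑edge $(i,j)\notin E$ and \emph{every} head $k$ we have $\pi_k(i)\ne j$ (this covers $i\notin V_k$, $j\notin T_k$, and $i\in V_k,j\in T_k$ with $\pi_k(i)\ne j$), so Signal is either $0$ or an $O(\sqrt{d_k\log m})$ fluctuation while the noise is $<\tfrac14 d_k$, giving $S^{(k)}_{ij}<\tfrac12 d_k=\tau$ for all $k$ and hence $S^{\max}_{ij}<\tau$. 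A union bound over the $O(m^2H)=\mathrm{poly}(m)$ events yields the claimed $1-m^{-3}$ success probability. Correctness on an arbitrary context $\mathcal C$ follows from the verbatim argument of Lemma~\ref{lem:context}: restricting the candidate targets from $V$ to $\mathcal C$ only deletes inequalities, never reverses them. Finally $D_K=H\,d_k=(\lceil m'/d_{\text{model}}\rceil+\Delta)\,C\log m=O\!\big(\tfrac{m'\log m}{d_{\text{model}}}+\Delta\log m\big)$, and under the skew condition $\Delta\le m'/d_{\text{model}}$ the $\Delta$ term is absorbed, giving $D_K=\Theta\!\big(\tfrac{m'\log m'}{d_{\text{model}}}\big)$, which matches the lower bound of Section~\ref{sec:lower} up to constants for all but the densest graphs.

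The main obstacle I anticipate is bookkeeping rather than a new idea: confirming that the union bound still closes when $H$ can be as large as $\Theta(m^2/d_{\text{model}})$ (so the number of per‑head leakage events is polynomial but no longer linear in $m$), and carefully enumerating the non‑edge sub‑cases for a vertex that appears as both a source and a target inside a single head. Both are resolved by enlarging the absolute constants $C$ (in $d_k=C\log m$) and $c_0$, exactly as in the proof of Theorem~\ref{thm:gue}.
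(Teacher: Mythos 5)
Your proposal is correct and follows essentially the same route as the paper's proof: invoke Lemma~\ref{lem:batching} to decompose $E$ into matchings of size $\le d_{\text{model}}$, reuse the per-head Signal$+N_1+N_2+N_3$ decomposition and the leakage bounds of Lemma~\ref{lem:approx-inv-gauss} (whose constants are local to each block), glue via the fact that each true edge lives in exactly one head while non-edges are suppressed in every head, and finish with Lemma~\ref{lem:context}. Your explicit accounting of the union bound over $O(m^2 H)$ triples (noting $H$ may scale like $m^2/d_{\text{model}}$) is actually slightly more careful than the paper's proof sketch, which only tallies $\sum_k |M_k|^2$ within-block pairs; both are absorbed by enlarging the absolute constants $C$ and $c_0$, so the conclusion is the same.
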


\begin{proof}[Proof sketch]
By Lemma~\ref{lem:batching}, each head $k$ sees a matching $M_k$ of size at
most $d_{\text{model}}$, with a bijection $\pi_k:V_k\to T_k$.
Within head $k$, the score decomposition and concentration bounds are exactly
those of Theorem~\ref{thm:gue}:
for $(i,j)=(i,\pi_k(i))$ the Signal term equals $d_k$ and the three Noise terms
($N_1,N_2,N_3$) are controlled using Lemma~\ref{lem:approx-inv-gauss}, since
all leakage sets ($V_k\setminus\{i\}$ and $T_k$) have size $\le d_{\text{model}}$.
For $(i,j)\neq (i,\pi_k(i))$, Signal is a sum of i.i.d.\ Rademachers with
variance $d_k$, while the same leakage bounds control $N_1,N_2,N_3$.
Choosing $C$ and $c_0$ as in Theorem~\ref{thm:gue} yields, within each head,
$S_{i,\pi_k(i)}^{(k)}>\tau$ and $|S_{ij}^{(k)}|<\tau$ for all $j\neq\pi_k(i)$
simultaneously with probability $1-m^{-4}$.

A union bound over all heads and all pairs in those heads costs only a
$\log$ factor absorbed by $d_k=C\log m$: using $|M_k|\le d_{\text{model}}$ and
$\sum_k |M_k|=m'$, we have
$\sum_k |M_k|^2 \le d_{\text{model}}\sum_k |M_k| = d_{\text{model}}m' = O(m'd_{\text{model}})$ events in
total.  Finally, max pooling across heads preserves separation (non‑edges are
below $\tau$ \emph{in every head}, and each true edge belongs to exactly one
$M_k$), and monotonicity under context restriction yields context‑robustness for arbitrary subsets $\mathcal{C}\subseteq V$.
\end{proof}

\paragraph{Degree skew and tightness.}  Let $d_{\mathrm{avg}} = \frac{m'}{m}$.  Define the \emph{skew factor} to be $\Delta/d_{\mathrm{avg}}$ and consider the condition
\begin{equation}
\label{eq:skew-cond}
\frac{\Delta}{d_{\mathrm{avg}}} \;\le\; \frac{m}{d_{\text{model}}}.
\end{equation}
In other words, the ratio of the maximum degree to the average degree is no larger than the compression of the embedding, or equivalently, $\Delta \le \frac{m'}{d_{\text{model}}}$.  This condition automatically holds for all $d$‑regular graphs (since $\Delta=d_{\mathrm{avg}}$).

\begin{corollary}[Bounded Skew]
Assume $d_{\text{model}}\!\ge\! c_0\log m$ and \eqref{eq:skew-cond}. Then the construction with $h_0=\lceil m'/d_{\text{model}}\rceil$ heads and $d_k=C\log m$ achieves the same separation guarantee as Theorem~\ref{thm:general-graphs}, and
\(
D_K = \Theta\!\bigl(\frac{m'\log m'}{d_{\text{model}}}\bigr).
\)
\end{corollary}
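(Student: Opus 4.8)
\medskip

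The plan is to change exactly one ingredient of Construction~IV --- the combinatorial step that packs $E$ into matchings --- and reuse everything else. Concretely, the proof of Theorem~\ref{thm:general-graphs} only ever uses that $E$ has been partitioned into matchings each of size at most $d_{\text{model}}$: the per-head score decomposition, the leakage control of Lemma~\ref{lem:approx-inv-gauss}, the union bound, the max-pooling step, and the context-robustness of Lemma~\ref{lem:context} are all indifferent to the number of matchings, which enters only in the final budget $D_K=h\,d_k$. So I would (i) produce a decomposition of $E$ into \emph{exactly} $h_0=\lceil m'/d_{\text{model}}\rceil$ matchings of size at most $d_{\text{model}}$, and then (ii) feed it into Algorithm~\ref{alg:general_graph_construction} with $h=h_0$ and re-run the analysis of Theorem~\ref{thm:general-graphs} unchanged.

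For step~(i), form the bipartite incidence multigraph $\mathcal B$ of $G$; it has $m'$ edges, maximum degree $\Delta$, and chromatic index $\chi'(\mathcal B)=\Delta$. The skew condition~\eqref{eq:skew-cond} gives $\Delta\le m'/d_{\text{model}}\le\lceil m'/d_{\text{model}}\rceil=h_0$, so there are at least $\chi'(\mathcal B)$ colors available. Invoke the classical balanced (equitable) bipartite edge-coloring theorem: for any $k\ge\chi'(\mathcal B)$, a bipartite multigraph admits a proper $k$-edge-coloring in which every color class has either $\lfloor m'/k\rfloor$ or $\lceil m'/k\rceil$ edges. Applying it with $k=h_0$ yields matchings $M_1,\dots,M_{h_0}$ partitioning $E$, each a partial permutation $\pi_k:V_k\to T_k$ with no shared source or target, and with $|M_k|\le\lceil m'/h_0\rceil\le d_{\text{model}}$, the last inequality because $h_0\ge m'/d_{\text{model}}$ and $d_{\text{model}}$ is a positive integer. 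This replaces the $H=\lceil m'/d_{\text{model}}\rceil+\Delta$ matchings of Lemma~\ref{lem:batching} by exactly $h_0$.

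For step~(ii), run Algorithm~\ref{alg:general_graph_construction} with these $h_0$ heads, $d_k=C\log m$, and $\tau=\tfrac12 d_k$. Within head $k$, the score $S^{(k)}_{ij}$ decomposes exactly as in Theorems~\ref{thm:gue} and~\ref{thm:general-graphs} into the Rademacher signal term plus $N_1,N_2,N_3$; every leakage set ($V_k\setminus\{i\}$ and $T_k$) has size at most $|M_k|\le d_{\text{model}}$ and $d_{\text{model}}\ge c_0\log m$, so Lemma~\ref{lem:approx-inv-gauss} applies with the same constants and yields $S^{(k)}_{i,\pi_k(i)}>\tau>|S^{(k)}_{ij}|$ for all $j\ne\pi_k(i)$, w.h.p.\ per head. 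The union bound ranges over $\mathrm{poly}(m)$ events (at most $\sum_k|M_k|^2\le d_{\text{model}}\sum_k|M_k|=d_{\text{model}}m'$, a quantity governed by $\sum_k|M_k|=m'$ and $\max_k|M_k|\le d_{\text{model}}$, not by the number of heads), costing only an additive $O(\log m)$ in the exponent, absorbed into $C$. Max-pooling preserves separation --- each true edge lies in exactly one $M_k$ and every non-edge is below $\tau$ in every head --- and Lemma~\ref{lem:context} lifts this to correctness on $E|_{\mathcal C}$ for every context $\mathcal C$.

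It remains to count: $D_K=h_0\,d_k=\lceil m'/d_{\text{model}}\rceil\cdot C\log m=\Theta\!\big(\tfrac{m'}{d_{\text{model}}}\log m\big)$, and since $\log m'=\Theta(\log m)$ in the relevant density range (whenever $m\le m'\le m^2$, as then $\log m\le\log m'\le 2\log m$) this is $\Theta\!\big(\tfrac{m'\log m'}{d_{\text{model}}}\big)$; comparing with Theorem~\ref{thm:dl-agg}, which for $m'=O(m^{2-\epsilon})$ reads $\Omega\!\big(\tfrac{m'}{d_{\text{model}}}\log(m^2/m')\big)=\Omega\!\big(\tfrac{m'\log m'}{d_{\text{model}}}\big)$, gives matching bounds for all but the densest graphs. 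The only non-routine point is step~(i): naively splitting a Kőnig $\Delta$-coloring wastes up to $\Delta$ extra matchings (this is exactly Lemma~\ref{lem:batching}), and removing that waste needs a balanced edge-coloring together with the calibrated arithmetic $\lceil m'/h_0\rceil\le d_{\text{model}}\iff h_0\ge m'/d_{\text{model}}$ --- which is precisely what the skew condition~\eqref{eq:skew-cond} delivers; once the decomposition is in hand, the rest is a verbatim reuse of the machinery behind Theorem~\ref{thm:general-graphs}.
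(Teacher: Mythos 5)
Your proof is correct and does more work than the paper's, which treats the corollary as a one-line consequence of Theorem~\ref{thm:general-graphs}: under the skew condition $\Delta\le m'/d_{\text{model}}\le h_0$, the theorem's head count $H=\lceil m'/d_{\text{model}}\rceil+\Delta$ is at most $2h_0$, so $D_K=H\,d_k=O\!\big(\tfrac{m'\log m}{d_{\text{model}}}\big)$, and the matching lower bound gives the $\Theta$. The paper thus tolerates a factor-of-two slack in head count and simply reads the bound off Theorem~\ref{thm:general-graphs}; you instead replace Lemma~\ref{lem:batching} with a balanced (equitable) bipartite edge-coloring (De Werra--type), which for any $k\ge\chi'(\mathcal B)=\Delta$ partitions $E$ into exactly $k$ matchings of size $\lfloor m'/k\rfloor$ or $\lceil m'/k\rceil$, and you use the arithmetic $\lceil m'/h_0\rceil\le d_{\text{model}}$ (valid since $h_0\ge m'/d_{\text{model}}$ and $d_{\text{model}}\in\mathbb{N}$) to realize the block-size cap with exactly $h_0$ heads. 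This buys a literal match to the corollary's ``$h_0$ heads'' phrasing and a slightly cleaner constant, at the cost of invoking a (standard but) additional combinatorial theorem; the paper's route is shorter but only gives $H\le 2h_0$ heads, which is what makes it ``immediate.'' Your step~(ii) --- rerunning the Theorem~\ref{thm:general-graphs} analysis verbatim once the matchings have size $\le d_{\text{model}}$ --- is exactly right, as is the observation that $\log m'=\Theta(\log m)$ in the regime $m\le m'\le m^2$ that makes the $\Theta(\tfrac{m'\log m'}{d_{\text{model}}})$ phrasing coincide with the construction's $O(\tfrac{m'\log m}{d_{\text{model}}})$.
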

This is immediate from from Theorem~\ref{thm:general-graphs} and asymptotically matches the lower bound from Section~\ref{sec:lower} for this class of graphs, provided $m' = O(m^{2-\epsilon})$ for some positive constant $\epsilon$.

\section{Additional Justification: Computational Footprint of the Model (Section~\ref{models})}
\label{sec:model-details}

While it is natural to consider the number of heads ($h$) and the per-head key/query dimension ($d_k$) as two separate resources, we argue that the most relevant complexity measure is their product
\[
D_K \;:=\; h d_k,
\]
since standard implementations perform multi-head attention using batched dense linear algebra whose leading costs depend on $D_K$.

Concretely, let $\mathbf{X}\in\mathbb{R}^{\ell\times d_{\text{model}}}$ stack the context embeddings. With
\[W_Q^{\text{cat}}=[W_Q^{(1)}|\cdots|W_Q^{(h)}] \in  \mathbb{R}^{d_{\text{model}}\times (h d_k)}\]
and
\[
W_K^{\text{cat}}=[W_K^{(1)}|\cdots|W_K^{(h)}] \in \mathbb{R}^{d_{\text{model}}\times (h d_k)}\]formed by concatenating the head weights, the total queries/keys are computed as
\[
\mathbf{Q}_{\text{total}}=\mathbf{X}W_Q^{\text{cat}}
\qquad\text{and}\qquad
\mathbf{K}_{\text{total}}=\mathbf{X}W_K^{\text{cat}}.
\]
Thus, both flops and parameter/memory cost for forming $\mathbf{Q}_{\text{total}}$ and $\mathbf{K}_{\text{total}}$ scale as
\[
O(\ell\,d_{\text{model}}\,h d_k)=O(\ell\,d_{\text{model}}\,D_K)
\]
and
\[
O(d_{\text{model}}\,h d_k)=O(d_{\text{model}}\,D_K),
\]
respectively, motivating $D_K$ as the appropriate budget.

This same dependence on $D_K$ persists in the subsequent computation of per-head logits.  Reshape
$\mathbf{Q}_{\text{total}},\mathbf{K}_{\text{total}}$ into
$\mathbf{Q}\in\mathbb{R}^{h\times \ell\times d_k}$ and
$\mathbf{K}\in\mathbb{R}^{h\times \ell\times d_k}$, and form per-head score matrices
\[
\mathbf{S}^{(t)} \;=\; \mathbf{Q}^{(t)}\big(\mathbf{K}^{(t)}\big)^\top
\in \mathbb{R}^{\ell\times \ell},
\qquad t\in[h],
\]
(with the usual $d_k^{-1/2}$ scaling in the softmax variant, which does not change asymptotic cost).  The total work to compute all $\{\mathbf{S}^{(t)}\}_{t=1}^h$ scales as
\[
O(h\,\ell^2\,d_k)\;=\;O(\ell^2\,D_K).
\]

While sub-cubic matrix multiplication algorithms could theoretically make one large head asymptotically faster than several smaller ones, this effect is absent in practice. The dominant kernels here are highly optimized batched matrix multiplications (and, when applicable, softmax/normalization), and their observed throughput typically tracks the total matrix sizes involved.  Consequently, at fixed $D_K=h d_k$, varying the split between $h$ and $d_k$ has little effect on the leading computational footprint of the $QK$ channel.

\section{Related Work}
\label{related-detail}

We survey work most relevant to our capacity-centric view of self-attention and position our \textbf{Relational Graph Recognition} (RGR) results in that landscape. The central distinction we draw is between \emph{what} attention can compute in principle (expressivity), \emph{how} architectural resources govern this power (capacity), and \emph{which} parts of the Transformer carry the binding/addressing load (keys/queries vs.\ other channels).

\subsection*{Memorization Capacity and Parameter--Dependent Bounds}

A growing body of work quantifies how many input--label associations Transformers---and, more narrowly, the attention mechanism---can memorize.
As described in Section~\ref{related}, the bounds from the memorization setting do not directly imply bounds on RGR, nor the other way around.
For the attention module itself, \cite{mahdavi2024mha} prove that a single MHA layer with $h$ heads can memorize $\Omega\!\big(h\,\min\{\ell,d_k\}\big)$ examples under a linear-independence assumption on the inputs, highlighting linear scaling in $h$ and the role of the per-head key/query width $d_k$.
Complementary analyses bound attention's memory depth and clarify depth--capacity trade-offs \cite{madden2024ntpcapacity}.
Moving to full Transformers, constructive results show that (under token-wise $(r,\delta)$-separated inputs) a stack of $2\ell$ self-attention layers suffices to memorize $N$ sequences with $\tilde O\!\big(\ell+\sqrt{\ell N}\big)$ parameters \cite{kim2023provable}; even a \emph{single-layer, single-head} Transformer has nontrivial capacity under the same separatedness assumption, whereas replacing softmax by hardmax breaks memorization \cite{kajitsuka2023lowrank}.

Beyond construction-style bounds, \cite{madden2024ntpcapacity} give general upper and lower bounds for next-token prediction that scale as $\Theta(\omega N)$ in the presence of positional encodings and a vocabulary of size $\omega$, and \cite{chen2024varyingdepth} show that a \emph{single-layer} Transformer can memorize when sequences are sufficiently zero-padded (though not in a parameter-optimal way).
Classical results for ReLU networks connect parameter counts to memorization thresholds and VC-style capacity \cite{vardi2020memorization,vardi2021memorization}.
More closely related to our focus on resource efficiency, \cite{kajitsuka2025tight} establish nearly matching upper/lower bounds on the \emph{minimal parameter count} needed for memorization in Transformers: $\tilde O(\sqrt{N})$ parameters are sufficient (and necessary up to logs) for next-token prediction, and $\tilde O(\sqrt{\ell N})$ for sequence-to-sequence, under token-wise separatedness; they further suggest that self-attention effectively \emph{identifies} sequences while the feed-forward network can become the bottleneck when \emph{associating} labels.

\subsection*{Superposition, Constructive Designs, and Depth Separation}

A concurrent line of work analyzes how networks compute many features in \emph{superposition}, with lower and upper bounds for narrow MLPs and constructive designs for multi-feature computation \cite{adler2024complexity,Adler2025combinatorial,vaintrob2024mathematical}.
Recent work further connects superposition to robust scaling behavior, providing additional evidence that feature packing can underwrite smooth scaling trends across model sizes \cite{liu2025superpositionscaling}.
The capacity limits shown in this line are complementary to ours in terms of architectures: their focus is on MLPs while ours is on attention and, in particular, on the key--query channel.

Foundational depth-separation and minimal-width universality results motivate the proof template we adopt---information-theoretic lower bounds matched by explicit constructions \cite{telgarsky2016depth,hanin2019deep,kidger2020narrow,cybenko1989approximation}.
In attention, constructive correspondences also explain how multi-head architectures partition pattern spaces; e.g., with relative positions, $s^2$ heads can realize any $s\times s$ convolution \cite{cordonnier2020relationship}.
Our constructions similarly partition relational signal across heads to mitigate interference when $d_{\text{model}}\!\ll\! m$, explaining the empirical advantage of many small heads and clarifying when too-small $d_k$ triggers low-rank failure \cite{bhojanapalli2020lowrank}.

\subsection*{Dimension-, Rank-, and Resource--Driven Expressivity}

A growing body of theory isolates how \emph{dimensional} resources govern attention's representational power.
Universality guarantees establish that sufficiently resourced Transformers can approximate sequence-to-sequence functions \cite{yun2019universality}, while more refined results show task-dependent strengths and weaknesses \cite{sanford2023representational}.
Focusing on the attention map, \cite{likhosherstov2021expressive} prove that with fixed error and sparsity, self-attention can approximate dynamic sparse right-stochastic matrices using only $O(\log \ell)$ hidden dimensions (for context length $\ell$), echoing the role of near-orthogonality we exploit in our constructions.
Conversely, \cite{bhojanapalli2020lowrank} identify a per-head \emph{low-rank bottleneck}: when $d_k<\ell$, a head cannot realize arbitrary $\ell\times \ell$ stochastic attention matrices.
This clarifies a trade-off inside the total key/query budget $D_K=h\,d_k$: pushing $D_K$ into many tiny heads can induce head-wise rank limits.

Beyond these, several works develop structural and inductive-bias characterizations of self-attention.
\cite{dong2021rankcollapse} show that \emph{pure} attention without mixing loses rank doubly-exponentially with depth, explaining failure modes in deep attention stacks and underscoring the role of residual mixing.
\cite{edelman2022inductive} analyze \emph{variable creation} and sparsity patterns induced by softmax, while \cite{sahiner2022convex} use convex duality to give optimization- and geometry-based interpretations of ViT attention.
For sample complexity and approximation, \cite{li2023shallowvit} study learning and generalization of shallow ViTs; rates and approximation guarantees have been developed for Transformer encoders and sequence models \cite{gurevych2022rate,takakura2023infinite,jiang2024approximaterate}.
Recent generalization bounds that are (largely) sequence-length independent sharpen this picture \cite{trauger2024lengthindependent}.
Finally, theory has also pinpointed sparsity-oriented inductive biases: Transformers provably learn \emph{sparse token selection} that FCNs cannot \cite{wang2024sparse}, and exhibit a simplicity bias for sparse Boolean functions \cite{bhattamishra2023simplicity}.
Recent results show that when embeddings are learned, attention can provably focus on informative tokens under suitable conditions \cite{wu2025trainedembeddingsselect}.

Empirical observations likewise single out the key/query channel as an operative budget.
Our results formalize this perspective for a concrete relational family (RGR), deriving matching lower and constructive upper bounds in terms of $D_K$ and the number of relations.

\subsection*{Formal-Language Limits, Compositionality, and Universality}

Formal-language analyses delimit what fixed-size attention can recognize.
Beyond general universality \cite{yun2019universality}, there are sharp impossibility results for periodic and hierarchical languages \cite{hahn2020theoretical,bhattamishra2020counter,yang2024starfree}.
Recent work uses communication-complexity arguments to show single-layer self-attention struggles with \emph{function composition} at fixed embedding/heads, e.g., ``grandparent-of'' requires resources that scale with domain size \cite{peng2024limitations}.
Complementing these, \cite{luo2022notpowerful} identify additional structural constraints on what Transformers can compute under realistic resource regimes.
We view these results as orthogonal to RGR: they characterize \emph{classes of computations}, whereas we fix a relational family and ask \emph{how much key/query budget} is necessary and sufficient to represent its edges across arbitrary contexts.

\subsection*{In-Context Learning and Algorithmic Views of Self-Attention}

A complementary line of theory frames Transformers---and attention in particular---as executing \emph{algorithms} over the context.
\cite{li2023transformersalg} analyze generalization and implicit model selection in in-context learning; \cite{vonoswald2022iclgd} give evidence that Transformers can implement gradient-descent-like updates in context; and \cite{garg2022functionclasses} characterize which simple function classes are learnable in context.  Recent theory also isolates the role of positional information in algorithmic execution:
\citet{backdeluca2025positional} study \emph{positional attention}, where attention weights depend only on positional encodings,
and show such models can retain strong expressivity for parallel algorithmic computation (with depth/sample-complexity tradeoffs).
These works clarify how attention can implement algorithmic behaviors, while our RGR focus quantifies the \emph{key--query capacity} required to retrieve relational edges reliably.

\subsection*{Connectivity Patterns vs.\ Capacity in the Key/Query Channel}

An alternative way to constrain attention is by controlling the connectivity pattern of the attention graph.
Even $O(\ell)$-sparse patterns can be universal under appropriate designs \cite{yun2020sparseconn}, and systematic pruning of dense patterns maps out cost--performance frontiers \cite{wang2022dense}.
Our analysis treats connectivity as \emph{not} the bottleneck: given the ability to attend broadly, the limiting factor for RGR is how much relational information can be encoded and separated in keys/queries as $m$ and $m'$ grow.

\subsection*{Emergent Sparsity and Transition Phenomena in Attention}

Beyond hard-wiring sparsity, several recent works study how sparse (or structured) attention patterns \emph{emerge} during training and how these transitions depend on data statistics.
For example, \cite{zucchet2025sparsesattentionemergence} analyze when sparse attention arises and how repetition and distributional structure can accelerate its emergence.
This line is complementary to our setting: we characterize \emph{budget-driven} transitions in relational recovery as $D_K$ varies under controlled relational structure (RGR), whereas emergence work focuses on \emph{training dynamics} and when attention becomes sparse or selective.

\subsection*{Head Specialization, Pruning, and Information Bottlenecks}

Mechanistic interpretability consistently finds that specific heads specialize to linguistic relations \cite{clark2019what,vig2019analyzing}.
At the same time, many trained heads can be pruned with small accuracy loss \cite{michel2019sixteen,voita2019analyzing}, indicating redundancy.
Information-bottleneck analyses at the head/layer level quantify such redundancy and attribution in both language and vision models \cite{qian2025hib,hong2025cib}, and architectural proposals target representation bottlenecks \cite{gerasimov2025lime}.
More recently, causal attribution frameworks explicitly learn gates over heads to characterize facilitating vs.\ interfering roles and head interactions, reinforcing that head effects can be context-dependent and non-modular \cite{nam2025causalheadgating}.
Our results supply a capacity-theoretic backbone for these observations: for RGR, performance transitions are governed primarily by $D_K$; distributing $D_K$ across heads reduces interference between superposed relations, but overly small $d_k$ per head incurs rank limits---predicting both specialization and safe pruning regimes.

\subsection*{Attention as Associative Memory vs.\ Relational Addressing}

Modern Hopfield networks are equivalent, in a precise sense, to attention updates and can \emph{store} exponentially many patterns in the associative dimension with single-step retrieval \cite{ramsauer2021hopfield}.
FFN layers in Transformers have also been interpreted as \emph{key--value memories} \cite{geva2020feedforwardmemory}.
Our results complement this memory-centric view by isolating the \emph{addressing} budget: how much key/query capacity is required to select the correct neighbors (edges) for arbitrary contexts.
Together these views separate storage capacity from the cost of accurate retrieval/selection in the key--query channel.

A recent synthesis by \cite{zhong2025associative} frames both self-attention and FFNs as instances of kernelized associative memory, and proposes a retrieval signal-to-noise ratio (SNR) to quantify recall fidelity as the number of stored key--value pairs grows. Their analysis highlights how the exponential kernel underlying softmax attention can dramatically improve retrieval SNR relative to linear (and ReLU-like) kernels, and connects kernel choice to a precision--superposition tradeoff that influences polysemanticity. This viewpoint is complementary to ours: whereas their capacity proxy is recall accuracy for stored associations under distributional assumptions, our RGR formulation studies worst-case \emph{relational addressing} across all contexts and all graphs in a family, yielding necessary and sufficient bounds in terms of the total key dimension $D_K$.

\subsection*{Graph Transformers and Structural Encodings}

Expressivity of graph Transformers is shaped by structural encodings and higher-order tokenization.
SEG-WL analyses show that structural features (e.g., SPIS encodings) set the attainable expressivity ceiling and can be matched by simple Transformer variants \cite{zhu2023structural}.
Recent work has also aimed to bridge the theory--practice gap in graph Transformers by unifying architectural choices
around attention and positional/structural encodings and validating them at scale \cite{stoll2025generalizable}.
Higher-order graph Transformers reach (or fall short of) $t$-WL power depending on whether explicit tuple indices and structural signals are provided \cite{zhou2024higherorderGT}.
In the vision setting, \cite{jelassi2022spatial} prove that ViTs can learn spatial structure under appropriate conditions, resonating with our assumptions that near-orthogonal embeddings and structural signals determine how efficiently edges can be packed and recovered; given such signals, our $D_K$-based bounds become tight predictors of success.

\paragraph{Linear / softmax-free attention.}
A large line of work replaces or approximates softmax attention to reduce the quadratic cost in sequence length, often by expressing attention as a kernel feature-map product that can be evaluated associatively (so-called \emph{linear attention}) \cite{pmlr-v119-katharopoulos20a}.
Follow-up methods improve the fidelity of softmax approximations via random-feature estimators \cite{choromanski2021performer,DBLP:conf/iclr/Peng0Y0SK21} or exploit low-rank structure of the attention map \cite{DBLP:journals/corr/abs-2006-04768}, while other work proposes alternative normalizers / reweightings that aim to recover some of softmax's concentration behavior with linear-time computation \cite{qin2022cosformer}.
Recent work continues to probe the softmax/linear divide from several angles:
(i) gating-based operators reintroduce nonlinearity and sparsity while mitigating attention pathologies (e.g., attention sinks), highlighting the functional role of competitive nonlinearities in routing \cite{qiu2025gatedattention};
(ii) feature-efficiency analyses ask how many linear-attention features are needed to distill or approximate softmax well under compute constraints \cite{nishikawa2025doflinearattention};
(iii) simplified statistical models and mean-field analyses provide theoretical accounts of when softmax enjoys intrinsic advantages and how it behaves during learning \cite{duranthon2025statisticaladvantage,dohmatob2025meanfieldsoftmax};
and (iv) asymptotic results identify regimes where softmax attention approaches linear behavior, clarifying when the distinction can blur \cite{boursier2025softmaxaslinear}.
From a unifying associative-memory perspective, \cite{zhong2025associative} analyze softmax vs.\ linear attention via retrieval-SNR and propose update-rule variants (e.g., delta-rule hybrids) that reintroduce selective overwrite/forgetting while retaining high-fidelity kernelized retrieval.
Our focus is complementary: rather than proposing a new efficient operator, we use RGR to isolate how \emph{the presence and type of nonlinearity in the QK routing computation} affects relational capacity and multi-head interference reduction, clarifying when purely linear score aggregation can lose the multi-head benefit and why even simple nonlinearities (e.g., max aggregation) can change this behavior.

\paragraph{Summary.}
Across expressivity, connectivity, memorization, superposition, interpretability, memory equivalence, and graph structure, prior work identifies the ingredients that make attention powerful and the constraints that limit it.
We contribute a capacity-centric bridge: a concrete relational task (RGR) in which the \emph{total key dimension} $D_K$ is the critical budget, with lower and upper bounds tight up to logarithmic factors, a principled multi-head advantage, and empirical thresholds that align with constructive algorithms.

\end{document}